\documentclass[journal]{IEEEtran}

\def\REBARevisionHighlight{0}

\usepackage[T1]{fontenc}
\usepackage[utf8]{inputenc}
\usepackage{threeparttable}
\usepackage{amsmath, amssymb, amsthm}
\usepackage{graphicx}
\usepackage{subcaption}
\usepackage{booktabs}   
\usepackage{multirow}   
\usepackage{makecell}   
\usepackage{array}      
\usepackage{stfloats}   
\usepackage{placeins}   
\usepackage{algorithm}
\usepackage{algpseudocode}
\usepackage{textcomp}
\usepackage{url}
\usepackage{verbatim}
\usepackage{comment}
\usepackage{balance}    
\usepackage{cite}
\usepackage{enumitem}   
\usepackage{orcidlink}   

\usepackage{xcolor}
\providecommand{\REBARevisionHighlight}{0}
\definecolor{revRefTwoBlue}{RGB}{25,82,150}
\DeclareRobustCommand{\REBARevisionText}[1]{%
  \ifnum\REBARevisionHighlight=1\relax
    \textcolor{revRefTwoBlue}{#1}%
  \else
    #1%
  \fi
}
\DeclareRobustCommand{\revRefTwo}[1]{\REBARevisionText{#1}}
\DeclareRobustCommand{\revRefTwoMath}[1]{\REBARevisionText{#1}}
\DeclareRobustCommand{\revRefThree}[1]{\REBARevisionText{#1}}

\DeclareRobustCommand{\revRefFour}[1]{\REBARevisionText{#1}}

\newcommand{\revRc}[1]{\revRefFour{#1}}

\newcommand{\bbel}{\boldsymbol{b}}

\usepackage{xr-hyper}
\usepackage{hyperref}
\def\BibTeX{{\rm B\kern-.05em{\sc i\kern-.025em b}\kern-.08em
    T\kern-.1667em\lower.7ex\hbox{E}\kern-.125emX}}

\newtheorem{theorem}{Theorem}
\newtheorem{lemma}{Lemma}
\newtheorem{assumption}{Assumption}
\newtheorem{definition}{Definition}
\newtheorem{remark}{Remark}
\newtheorem{corollary}{Corollary}
\newtheorem{proposition}{Proposition}
\theoremstyle{definition}

\algrenewcommand\algorithmiccomment[1]{\hfill{\footnotesize$\triangleright$~#1}}
\algnewcommand\algorithmicglobal{\textbf{Global:}}
\algnewcommand\Global{\item[\algorithmicglobal]}

\newcommand{\Func}[1]{\textsc{#1}}

\hypersetup{hidelinks}
\usepackage{docmute}

\let\REBASavedBibliographystyle\bibliographystyle
\let\REBASavedBibliography\bibliography
\makeatletter
\let\bibliographystyle\@gobble
\let\bibliography\@gobble
\makeatother

\begin{document}

\title{REBA: A Revealed Belief Automaton Framework for Online Planning in Continuous POMDPs}
\author{Xiangwei~Chen\orcidlink{0009-0006-9159-9785},
        Lingling~Fang\orcidlink{0000-0002-4397-7212},
        Andreas~Holzinger,~\IEEEmembership{Senior Member,~IEEE}\orcidlink{0000-0002-6786-5194},
        Liming~Chen,~\IEEEmembership{Senior Member,~IEEE}\orcidlink{0000-0003-0200-7989}%
\thanks{This work is partially supported by the Dalian Science and Technology Talent Innovation Support Program [No. 2024RJ011] and the Liaoning Revitalization Talent Program [No. XLYC 2403024]. (\textit{Corresponding author: Liming Chen; Lingling Fang.})}
\IEEEcompsocitemizethanks{
    \IEEEcompsocthanksitem Xiangwei Chen is with the School of Computer Science and Technology, Dalian University of Technology, Dalian 116024, China (e-mail: xiangweichen@mail.dlut.edu.cn).
    \IEEEcompsocthanksitem Lingling Fang is with the School of Computer Science and Artificial Intelligence, Liaoning Normal University, Dalian 116029, China (e-mail: fanglingling@lnnu.edu.cn).
    \IEEEcompsocthanksitem Andreas Holzinger is with the Human-Centered AI Lab, Department of Ecosystem Management, Climate and Biodiversity, BOKU University, Vienna, 1190, Austria (e-mail: andreas.holzinger@boku.ac.at), and with the Institute of Human-Centered Computing, Graz University of Technology, Austria
    \IEEEcompsocthanksitem Liming Chen is with the School of Computer Science and Technology, Dalian University of Technology, Dalian 116024, China (e-mail: limingchen0922@dlut.edu.cn).
}
}

\markboth{}{}

\maketitle

\begin{abstract}
Online planning in continuous partially observable Markov decision processes (POMDPs) using $\omega$-regular specifications requires handling continuous belief dynamics within the finite symbolic memory in order to track temporal progress. Existing methods based on either direct search in belief space or predefined discrete abstractions suffer from drawbacks, e.g., lack of symbolic memory for long-horizon logical progress or difficult to certify from noisy online beliefs. As such, obtaining reliable symbolic states online from continuous observations remains a challenge. To address this issue, we introduce the Revealed Belief Automaton (REBA), an event-driven framework that advances the research from global belief-space discretization to a fundamental new way of thinking, namely online certification of revelation events. \revRefFour{Specifically, we propose an online revelation method that, through information-theoretic gates, can dynamically analyse and establish belief abstraction from the continuous belief space by discovering reliable anchors among noisy beliefs. We then develop an incremental topology adaptation mechanism over the certified anchors to realise the online finite Belief Automaton. By combining with the $\omega$-regular specification, REBA is able to support formal parity policy synthesis without a predefined discrete abstraction, which in turn can guide the Monte Carlo Tree Search process to perform online search beyond its local horizon. In addition, we design an error decomposition analysis which can assess the effectiveness and reliability of this discrete guidance for the underlying continuous POMDP. Empirical evaluations in patrolling and navigation scenarios show that REBA matches or exceeds all evaluated baselines, with primary metric gains of +17.0\% to +47.4\% over state-of-the-art approaches.}
\end{abstract}

\begin{IEEEkeywords}
POMDP, Online Planning, Monte Carlo Tree Search, Belief Automaton, Continuous Observation
\end{IEEEkeywords}

\section{Introduction}
Partially observable Markov decision processes (POMDPs) provide a rigorous framework for sequential decision making under uncertainty~\cite{Kaelbling_1998} and have become a foundation for long-horizon planning in robotics and autonomous driving~\cite{Lauri_2023,Hsu_2024,Frering:2025:HumanRobot,Holzinger:2024:Forestry50}. The lack of direct state observability, however, forces the agent to condition on an unbounded observation history, breaking the Markov property and rendering naive planning computationally intractable.

To recover the Markov property, the standard remedy is to track a belief state, i.e., a posterior over latent states~\cite{lovejoy1991}, which compresses the history into a sufficient statistic and lifts the POMDP to a fully observable belief MDP over a continuous belief space.

\revRefFour{Planning under $\omega$-regular specifications must now reconcile two representations that do not naturally coexist: the lifted belief state is continuous, whereas tracking temporal progress against the specification requires finite symbolic memory. Online solvers address the first half of this picture: Monte Carlo Tree Search (MCTS) with progressive widening samples reachable futures directly in belief space~\cite{sunberg2018online,silver_2010}, and recent long-horizon variants couple the search with learned policy and value approximations~\cite{Moss_BetaZero_2024}. Yet these simulated futures carry no symbolic memory, so the search remains local with respect to the specification: a locally attractive action can improve the short-horizon value estimate while abandoning the sequence of visits and safety checks that the objective requires. Cyclic tasks make this failure mode visible, as Fig.~\ref{fig1} illustrates.}

\begin{center}
\begin{minipage}{\linewidth}
\centering
\includegraphics[width=\linewidth]{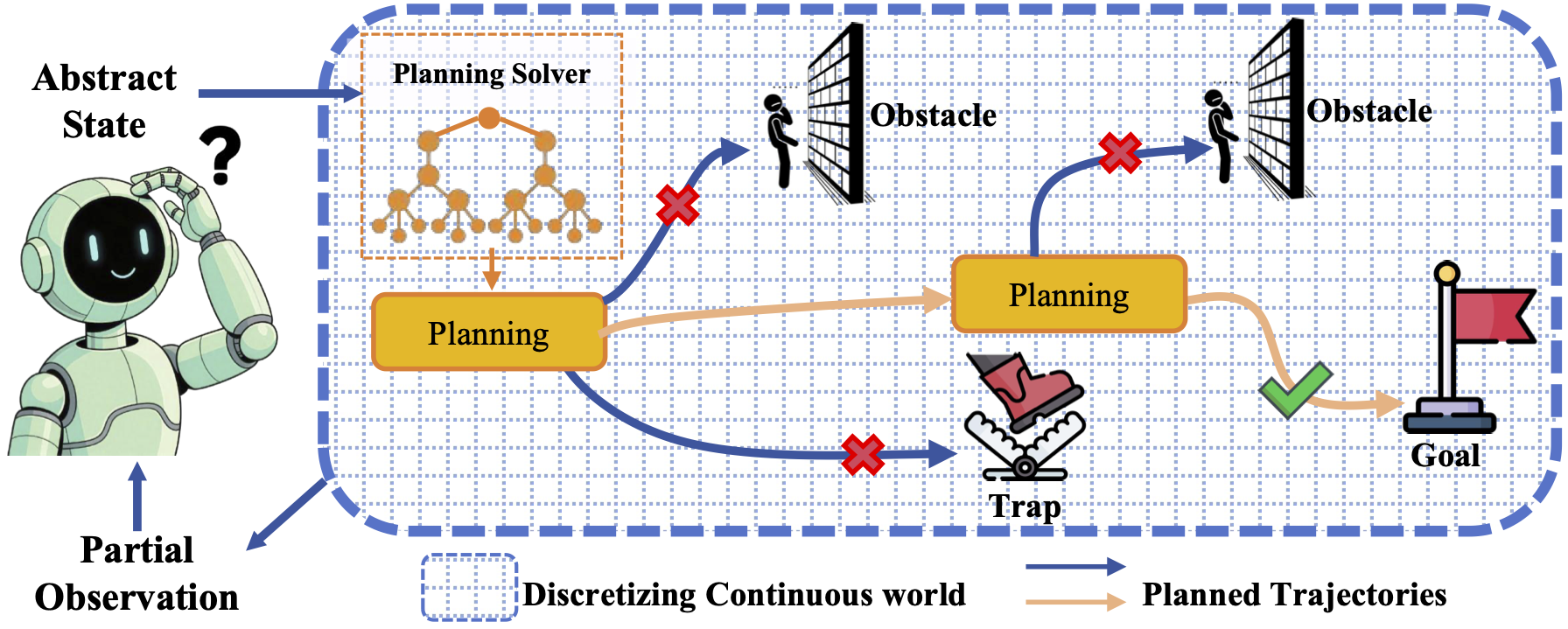}
\captionof{figure}{Local search over beliefs can favor the blue path, which looks efficient over a short horizon but leads toward a trap. The Revealed Belief Automaton adds symbolic guidance so Monte Carlo Tree Search can prefer the orange path, whose value depends on longer horizon progress and safety.}
\label{fig1}
\end{minipage}
\end{center}

\revRefFour{The symbolic half of the picture is harder still. Under partial observability, verification must reason over an uncountable belief space, and policy synthesis for general $\omega$-regular specifications~\cite{Giacomo2013,Wang_SafeTRL_2024} in POMDPs is undecidable~\cite{CHATTERJEE}. Recent revelation theory identifies a decidable route through revelation moments, i.e., times when observations make the latent state sufficiently identifiable~\cite{Belly_2025}, with quantitative analyses following for revealing subclasses~\cite{Asadi_RevealingPOMDPs_2026}. These results, however, presuppose a known discrete model and perfect, offline revelations---precisely the structure that is unavailable when beliefs are maintained online by particle-filtered approximations. Existing methods therefore either search directly in belief space without symbolic memory or depend on a predefined discrete abstraction, leaving open how reliable symbolic states can be obtained online from continuous observations.}

This paper bridges this gap with an event-driven principle: instead of discretizing the belief space globally, the Revealed Belief Automaton (REBA) certifies online the moments at which discretization is locally warranted. Information-theoretic gates admit a belief into the symbolic layer only when it qualifies as a revelation event, reframing belief abstraction from covering the continuous belief space into certifying when noisy beliefs yield reliable anchors. The certified anchors grow a finite Belief Automaton online; composing it with the $\omega$-regular specification enables parity policy synthesis without a predefined discrete abstraction; and the synthesized policy is fed back to steer MCTS beyond its local horizon, under explicit error accounting. Our main contributions are as follows:
\begin{enumerate}
    \item \revRefFour{We formulate online revelation for continuous-belief POMDPs, turning the abstraction problem from covering the full continuous belief space into certifying when noisy belief evolution has produced a state estimate reliable enough for symbolic reasoning.}
    \item \revRefFour{We develop entropy and information-gain gates that operationalize revelation events by extracting stable belief anchors only when the belief estimate is concentrated and informative relative to its predictive prior.}
    \item \revRefFour{We build the Belief Automaton online from stable belief anchors and translate the synthesized symbolic policy into MCTS guidance, with explicit approximation bounds for judging when this guidance remains reliable for the original continuous POMDP objective.}

\end{enumerate}

\revRefFour{Together, these contributions define a principled online mechanism that couples continuous-belief planning with finite symbolic memory---requiring no predefined discrete abstraction: revelation events produce stable anchors, anchors induce a finite Belief Automaton, and the resulting symbolic guidance biases future search under explicit approximation control.}

The remainder of the paper is organized as follows. Sec.~\ref{sec:related_work} reviews related work, Sec.~\ref{sec:problem_formulation} formalizes the problem, Sec.~\ref{sec:methodology} presents REBA, Sec.~\ref{sec:experiments} reports experiments, Sec.~\ref{sec:discussion} discusses bound magnitudes, assumption stress, and scalability, and Sec.~\ref{sec:conclusion} concludes. \revRefTwo{Throughout the paper, S-prefixed labels generated from Supplementary Material cross-references (e.g., Sec.~\ref{supp:proofs_revelation}, Table~\ref{tab:robust_static_2d}, Theorem~\ref{supp:thm:robust_revelation}) refer to the Supplementary Material.}
\section{Related Work}
\label{sec:related_work}
\revRc{Online planning for continuous POMDPs under $\omega$-regular specifications requires three ingredients: belief approximation for continuous uncertainty, finite memory for logical progress, and search guidance that can feed this memory back into online planning. Existing work provides these ingredients separately through continuous POMDP solvers, temporal-logic learning and control, and abstraction-guided search. REBA targets the missing mechanism: identifying revelation events in noisy continuous belief streams, maintaining finite memory from the resulting stable anchors, and feeding symbolic guidance back into online search.}

\subsection{\revRefThree{Belief-Space Planning for Continuous POMDPs}}
\label{sec:rw_compression}
\revRefTwo{Classical belief-space planners reduce the complexity of continuous POMDPs by approximating the belief space with a finite set of representative beliefs (point-based methods such as \revRefThree{Point-Based Value Iteration (}PBVI\revRefThree{)}~\cite{Pineau2003}) or by projecting beliefs onto a low-dimensional manifold (\revRefThree{Principal Component Analysis (}PCA\revRefThree{)-based} belief compression~\cite{Roy2005}). Online MCTS variants extend this idea with progressive widening~\cite{sunberg2018online}, observation re-weighting~\cite{Hoerger_48,zhang2025observation}, and information-particle tree search for belief-dependent rewards in continuous domains~\cite{Fischer_IPFT_2020}. \revRefThree{Adjacent active-sensing work combines Bayesian inference with learned sampling policies for drone-based source estimation~\cite{vanHove_Methane_2026}. These methods support belief representation, local search efficiency, or active information gathering, usually for standard planning objectives.} \revRc{REBA adds the event-level reliability mechanism needed for $\omega$-regular progress by constructing finite memory online from stable anchors produced by revelation events.}}

\subsection{\revRefThree{Symbolic and Neuro-Symbolic Reinforcement Learning for Temporal Logic}}
\label{sec:rw_symbolic}
\revRefTwo{Methods such as reward machines~\cite{Icarte2022} and logically-constrained reinforcement learning~\cite{Hasanbeig2020} enforce temporal-logic objectives during training. \revRefThree{This extends to using linear temporal logic for reward shaping in multiagent hierarchical reinforcement learning~\cite{Liu_LRS_2026} and to combining signal temporal logic specifications with control barrier functions for robust nonlinear control~\cite{Zhou_2025}.} \revRefThree{On the theoretical front of POMDPs,} a recent breakthrough establishes decidability of $\omega$-regular synthesis for specific POMDP subclasses via a revelation mechanism~\cite{Belly_2025} (the general case remains undecidable~\cite{CHATTERJEE}). \revRefThree{Subsequent work shows that the limit-sure problem and quantitative parity analysis in revealing POMDPs are EXPTIME-complete~\cite{Asadi_RevealingPOMDPs_2026}.} Runtime shielding provides a complementary safety-specification interface: it restricts available actions so online POMDP or RL policies satisfy reach-avoid or avoidance constraints under known safety dynamics or precomputed winning regions~\cite{Le_Court_2025,Sheng_ICRA24}. \revRc{These methods provide temporal objectives, decidable revealed subclasses, and runtime safety filters, with guarantees tied to known discrete models, belief-support winning regions, offline reward shaping, or restricted revealed classes. REBA operationalizes revelation events inside a continuous, partially observable belief stream and uses the resulting stable anchors for online symbolic guidance.}}

\subsection{\revRefThree{Hierarchical and Abstraction-Guided Search}}
\label{sec:rw_hier_mcts}
\revRefTwo{\revRefThree{To bridge abstract reasoning and continuous control, broad hierarchical decision frameworks integrate discrete Markov decision processes with continuous model predictive control~\cite{Wang_HMDP_MPC_2025}, while adjacent safety planners combine reachability analysis with receding horizon contingency trajectories for occluded autonomous driving~\cite{Zheng_OACP_2026}.} \revRefThree{Within the specific domain of MCTS,} hierarchical search variants such as option-based MCTS, reference-policy POMDP planning, and counterfactual open-loop MCTS steer sampling through options, reference policies, or compact open-loop search structures~\cite{Vien2015,kim2025,Phan_2025}. Recent continuous-POMDP extensions improve online search by reusing previous planning information in online anytime belief-space planning~\cite{Chen_NeurIPS23,Novitsky_RAL_2025}, adding recursive dual guidance for constrained planning~\cite{Stocco_ICAPS24}, learning offline policy and value approximations for long-horizon belief-state planning~\cite{Moss_BetaZero_2024}, or simplifying observation models with probabilistic value bounds~\cite{LevYehudi_AAAI24}. \revRc{These methods improve sampling efficiency, constraints, or heuristic quality for finite-horizon online planning. REBA adds an online event-discovery route that maintains finite symbolic memory from stable anchors and uses that memory to guide cyclic progress under explicit approximation accounting.}}

\section{Problem Formulation and Background}
\label{sec:problem_formulation}
We formalize the agent's sequential decision-making problem as a POMDP, defined by the tuple $\mathcal{P} = (\mathcal{X}, \mathcal{A}, \mathcal{Z}, p_T, p_Z, R, \gamma, \bbel_0)$, where $\mathcal{X} \subseteq \mathbb{R}^{d_S}$ and $\mathcal{Z} \subseteq \mathbb{R}^{d_O}$ are continuous state and observation spaces. The agent's actions $a \in \mathcal{A}$ influence state transitions through $p_T(x'|x,a)$ and subsequent observations via $p_Z(z|x',a)$. The agent's behavior is guided by a reward function $R: \mathcal{X} \times \mathcal{A} \to \mathbb{R}$, with future rewards discounted by a factor $\gamma \in [0, 1)$. The agent maintains a belief, $\bbel_t$, over its latent state, updated recursively with the Bayes filter:
\begin{equation}
    \bbel_t(x') \propto p_Z(z_t|x', a_{t-1}) \int_{\mathcal{X}} p_T(x'|x, a_{t-1}) \bbel_{t-1}(x) \, dx
\end{equation}

\revRefTwo{The planning objective is the satisfaction probability of an $\omega$-regular specification $\varphi$; rewards $(R,\gamma)$ serve as search heuristics.} Let $AP$ be a finite set of atomic propositions (e.g., Goal, Obstacle) and $L: \mathcal{X} \to 2^{AP}$ be a labeling function mapping states to logical properties. We represent $\varphi$ as a Deterministic Parity Automaton (DPA) $\mathcal{A}_\varphi=\big(Q_\varphi, 2^{AP}, \delta_\varphi, q_{\varphi,0}, \mathrm{prio}_\varphi\big)$. A trajectory satisfies $\varphi$ \revRefThree{if and only if} the highest priority visited infinitely often in the induced run is even. Thus, we maximize the satisfaction probability $V_{\mathcal{P}}^\varphi(\pi) := \mathbb{P}_{\mathcal{P}}^\pi[x_0,x_1,\dots \models \varphi]$.

Given the intractability of finding optimal policies, we employ an online planning approach based on MCTS. MCTS concentrates search effort within the portion of the belief space reachable from the current belief. Within this search tree, each node represents a belief $\bbel_h$, which is approximated by a weighted particle set $\mathcal{S}_h = \{(x_{h}^{(i)},w_{h}^{(i)})\}_{i=1}^{N_{p}}$, where $\sum_{i=1}^{N_p} w_{h}^{(i)} = 1$ and $N_{p}$ denotes the number of particles. The tree expands through recursive simulations: from a belief node, an action is chosen, a new state and observation are sampled from the generative models ($p_T, p_Z$), and the resulting posterior belief, updated via a particle filter, becomes a new child node. To manage the high branching factor induced by continuous action (and observation) spaces, techniques like Progressive Widening \cite{sunberg2018online} are commonly integrated to judiciously limit exploration.
\begin{figure}
    \centering
    \includegraphics[width=\linewidth]{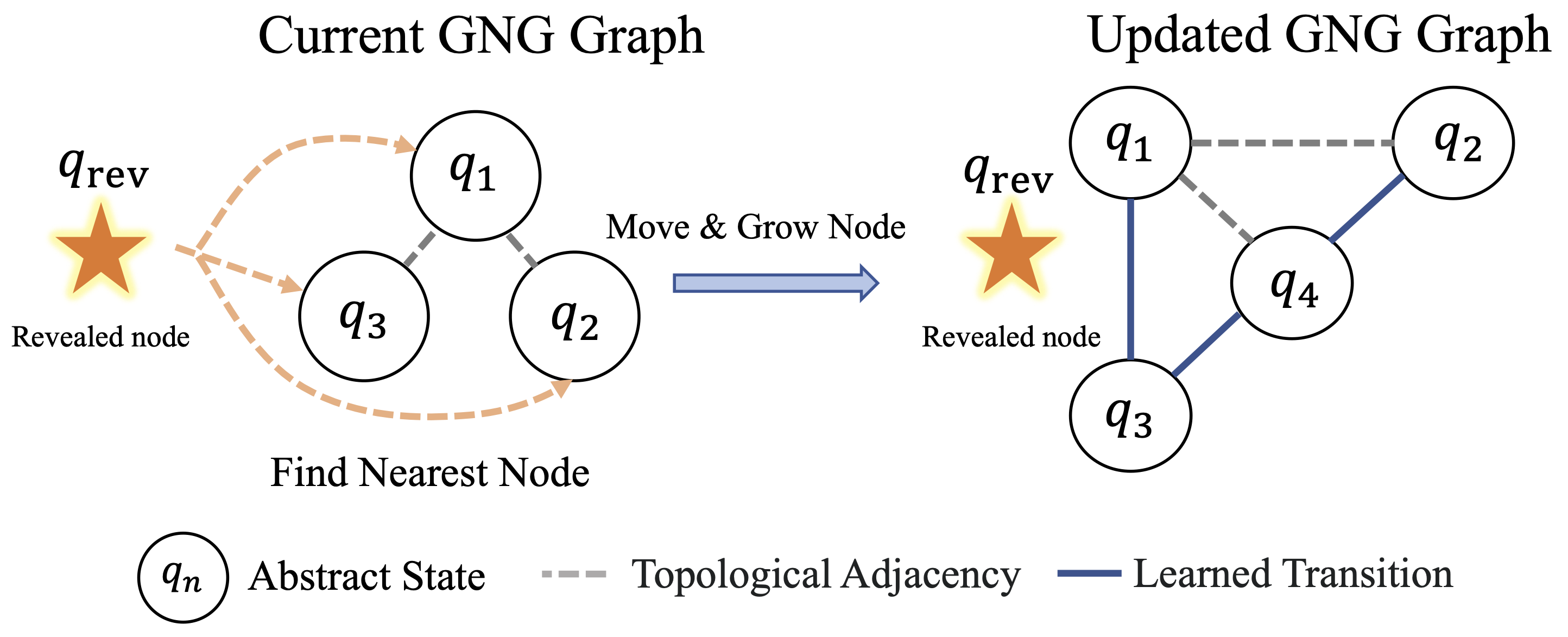}
    \caption{Dynamic update of the \revRefThree{Growing Neural Gas (GNG)} graph. \textbf{Left:} The current GNG graph is shown together with a newly revealed node $q_{\mathrm{rev}}$, which is treated as an input sample to the GNG algorithm. \textbf{Right:} Processing this sample with the GNG update refines the topology, adapting nodes and edges and, when needed, inserting new nodes (e.g., $q_4$) and connections, yielding an updated GNG graph for subsequent policy synthesis.}
    \label{fig:gng}
\end{figure}

\revRc{To handle the persistent uncertainty that challenges long-horizon planning in POMDPs, REBA operationalizes the formal-methods concept of a ``revelation''~\cite{Belly_2025} inside online belief-space search. Boers' entropy, $H_{\text{Boers}}(\bbel)$~\cite{Boers_2010}, serves as an operational concentration score: for Gaussian-like beliefs, low entropy implies low covariance and therefore high belief concentration. Together with the information-gain gate in Sec.~\ref{sec:revelation}, this score determines whether a candidate belief marks a revelation event that can supply a stable anchor for symbolic reasoning.}

\revRc{Stable anchors extracted from revelation events must then be stored in a finite memory that preserves coverage information and supports symbolic transitions. REBA uses Growing Neural Gas (GNG)~\cite{FISER201372,MING2024101541} as the incremental maintenance mechanism for this memory (Fig.~\ref{fig:gng}): the graph $\mathcal{G}_S=(N_S,E_S)$ provides a witness coverage radius for Theorem~\ref{thm:pac_coverage}, while its map radius enters the conditional abstraction-error term $\varepsilon_{\text{abs}}$ in Theorem~\ref{thm:performance_guarantee}. The resulting graph supplies the discrete state space for the Belief Automaton on which we perform formal policy synthesis.}

\section{Methodology}
\label{sec:methodology}

\revRc{REBA is an online revelation mechanism for continuous-belief planning. At each planning step, it evaluates candidate beliefs generated inside MCTS, applies revelation gates to identify stable anchors, updates the corresponding finite memory, synthesizes a symbolic policy on the product model, and feeds that policy back into subsequent search. The persistent state consists of the GNG graph $\mathcal{G}_S$, the empirical Belief Automaton $\hat{\mathcal{B}}(T)$, and the current abstract policy $\hat{\pi}^*$. Sec.~\ref{sec:revelation} defines the revelation gates, Sec.~\ref{sec:abstraction} maintains finite symbolic memory from stable anchors, Sec.~\ref{sec:automaton} learns the Belief Automaton and states the error accounting, and Sec.~\ref{sec:guidance} maps parity guidance into MCTS.}

\subsection{Online Revelation via Belief Concentration}
\label{sec:revelation}
\revRc{The revelation gate identifies which MCTS belief nodes have become stable enough to support symbolic reasoning. Its input is a stream of candidate beliefs from tree expansion, and its output is a stream of revealed representatives $x_{\mathrm{rev}}$. We first state the analytic entropy and gain certificate, then transfer it to the particle implementation through Theorem~\ref{thm:robust_revelation}, and finally give the implemented three-gate rule used by REBA.}

\revRefThree{A revealed belief must support two operations: extracting a localized representative state and confirming that the observation has made the posterior informative relative to the predictive prior. Differential entropy captures posterior spread, while a KL-gain test captures prior-to-posterior information change; used separately, either quantity can miss one side of this requirement. Boers' entropy links these two quantities through Lemma~\ref{lem:boers_identity}, and the explicit gain gate below keeps the informativeness condition separate. Theorem~\ref{thm:robust_revelation} then transfers the implemented particle score to a high-probability covariance-volume certificate under consistency assumptions.}

Let $\bbel(x' \mid h,a,z)$ denote the posterior over next states after action $a$ and observation $z$ under history $h$. We compare this posterior with the one-step mixture prior
\begin{equation}
    C(x' \mid h,a)
    \;=\;
    \int p_T(x' \mid x,a)\, \bbel_h(x)\,\mathrm{d}x ,
\end{equation}
where $p_T(\cdot\mid x,a)$ is the transition density and $\bbel_h$ is the predictive belief before observing $z$. Boers' entropy is
\begin{equation}
    H_{\text{Boers}}(\bbel)
    \;:=\;
    H_s(b)\;-\;\mathbb{E}_{x'\sim b}\big[\log C(x')\big],
\end{equation}
where $H_s(b)$ is the usual differential entropy. Under mild regularity assumptions, $H_{\text{Boers}}$ admits the identity
\begin{equation}
\label{Lam1:eq1}
    H_{\text{Boers}}(\bbel)
    \;=\;
    2\,H_s(b)
    \;+\;
    D_{\mathrm{KL}}\!\left(b \,\Vert\, C\right),
\end{equation}
as formalized in Lemma~\ref{lem:boers_identity}. The identity separates posterior volume from information gain relative to $C$. We therefore pair the entropy threshold with an explicit gain gate:
\begin{equation}
D_{\mathrm{KL}}(\bbel\Vert C)\ \ge\ \kappa_{\mathrm{gain}}.
\end{equation}
\revRc{Together, the entropy threshold and gain gate define the analytic revelation test used to justify treating revealed beliefs as stable anchors with bounded localization error.}

For Gaussian posteriors with density $b=\mathcal{N}(\mu,\Sigma)$ on $\mathbb{R}^{d_S}$, the usual entropy formula
\begin{equation}
    H_s(b)
    \;=\;
    \tfrac{1}{2}\log\!\big((2\pi e)^{d_S}\det\Sigma\big)
\end{equation}
implies that
\begin{equation}
    H_{\text{Boers}}(\bbel)
    \;=\;
    \log\!\big((2\pi e)^{d_S}\det\Sigma\big)
    \;+\;
    D_{\mathrm{KL}}(\bbel\Vert C).
\end{equation}
Thus, enforcing the entropy threshold and the gain gate,
\begin{equation}
    H_{\text{Boers}}(\bbel)\;\le\;\theta_{\mathrm{reveal}},
    \qquad
    D_{\mathrm{KL}}(\bbel\Vert C)\;\ge\;\kappa_{\mathrm{gain}},
\end{equation}
yields an explicit upper bound on the covariance determinant,
\begin{equation}
    \det\Sigma
    \;\le\;
    \frac{\exp\!\big(\theta_{\mathrm{reveal}}-\kappa_{\mathrm{gain}}\big)}{(2\pi e)^{d_S}},
\end{equation}
which is the Gaussian volume interpretation of a revealed belief.

In practice, REBA fits $\hat{\bbel}=\mathcal N(\mu_p,\widehat\Sigma_p)$ to particles and estimates the mixture prior with a \revRefThree{Kernel Density Estimator (}KDE\revRefThree{)} $\hat C$, giving
\begin{equation}
    \hat{H}_{\text{Boers}}
    \;=\;
    2H_s(\hat{\bbel})\;+\;D_{\mathrm{KL}}(\hat{\bbel} \Vert \hat C)
    \;\ge\; \log\!\big((2\pi e)^{d_S}\det(\widehat\Sigma_p)\big).
\end{equation}
\revRefTwo{The next theorem transfers this computed particle score to the true posterior under standard Sequential Monte Carlo (SMC) and KDE consistency assumptions. It gives a high-probability covariance certificate; finite-sample rates for the present weighted-SMC/KDE implementation would require stronger assumptions.}

\begin{theorem}[\revRc{Belief covariance volume bound from the revelation test}]
\label{thm:robust_revelation}
Let $S\subset\mathbb{R}^{d_S}$ be compact. Assume the posterior belief $\bbel$ and the mixture prior $C$ admit densities w.r.t.\ Lebesgue measure, $\bbel\ll C$, and $C$ is continuous, bounded, and bounded away from zero on $S$. Let the true posterior be approximated by the Gaussian fit $\hat{\bbel}=\mathcal{N}(\mu_p,\widehat\Sigma_p)$ obtained from a SMC particle set of size $N_p$ produced by a consistent filter, and assume $\widehat\Sigma_p \to \Sigma_b$ in probability as $N_p\to\infty$. Let $\hat C$ be a KDE with bandwidth $h_{N_p}\to 0$ and $N_p h_{N_p}^{d_S}\to\infty$. Define $\hat H_{\text{Boers}} := 2H_s(\hat{\bbel})+D_{\mathrm{KL}}(\hat{\bbel}\Vert \hat C)$.

For any revelation threshold $\theta_{\mathrm{reveal}}$, tolerance $\varepsilon_b>0$, and confidence level $1-\delta\in(0,1)$, there exists a sufficient particle count $N_p^\star=N_p^\star(\varepsilon_b,\delta)$ such that for all $N_p\ge N_p^\star$, if the test returns $\hat H_{\mathrm{Boers}}\le \theta_{\mathrm{reveal}}$, then with probability at least $1-\delta$,
\begin{equation}
\label{lam2:eq1}
    \det(\Sigma_b)\ \le\ \frac{\exp(\theta_{\mathrm{reveal}})}{(2\pi e)^{d_S}}\ +\ \varepsilon_b.
\end{equation}
Moreover, if one also enforces the implemented gain threshold \revRefTwo{$D_{\mathrm{KL}}(\hat{\bbel}\Vert \hat C)\ge \kappa_{\mathrm{gain}}\ge 0$}, then with the same probability,
\begin{equation}
\label{lam2:eq1tight}
    \det(\Sigma_b)\ \le\ \frac{\exp(\theta_{\mathrm{reveal}}-\kappa_{\mathrm{gain}})}{(2\pi e)^{d_S}}\ +\ \varepsilon_b.
\end{equation}
The proof is provided in Sec.~\ref{supp:proofs_revelation}.
\end{theorem}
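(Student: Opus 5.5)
The plan is to argue deterministically on the computed quantities and then transfer the bound to $\Sigma_b$ through continuity of the determinant and the convergence $\widehat\Sigma_p\to\Sigma_b$ in probability. First, I rewrite the estimated score with the Gaussian entropy formula applied to $\hat{\bbel}=\mathcal N(\mu_p,\widehat\Sigma_p)$:
\begin{equation*}
\hat H_{\mathrm{Boers}} = \log\!\big((2\pi e)^{d_S}\det\widehat\Sigma_p\big) + D_{\mathrm{KL}}(\hat{\bbel}\Vert\hat C).
\end{equation*}
The KL term must be well defined, so I need $\hat{\bbel}\ll\hat C$. A Gaussian-kernel KDE is strictly positive, which gives this; for compactly supported kernels I would restrict attention to $S$, where the assumption that $C$ is bounded away from zero together with uniform KDE consistency (using $h_{N_p}\to0$ and $N_ph_{N_p}^{d_S}\to\infty$) makes $\hat C>0$ on $S$ with high probability. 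Since KL divergence is nonnegative, the event $\hat H_{\mathrm{Boers}}\le\theta_{\mathrm{reveal}}$ implies $\det\widehat\Sigma_p\le e^{\theta_{\mathrm{reveal}}}/(2\pi e)^{d_S}$. If the gain gate $D_{\mathrm{KL}}(\hat{\bbel}\Vert\hat C)\ge\kappa_{\mathrm{gain}}$ also holds, subtracting it gives $\det\widehat\Sigma_p\le e^{\theta_{\mathrm{reveal}}-\kappa_{\mathrm{gain}}}/(2\pi e)^{d_S}$. Both inequalities hold pointwise on every sample path where the test passes, so the probabilistic content is confined to the next step.

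Second, I transfer the bound from $\det\widehat\Sigma_p$ to $\det\Sigma_b$. The determinant is continuous, so by the continuous mapping theorem $\det\widehat\Sigma_p\to\det\Sigma_b$ in probability. Given $\varepsilon_b$ and $\delta$, I choose $N_p^\star$ such that $\mathbb P(|\det\widehat\Sigma_p-\det\Sigma_b|>\varepsilon_b)\le\delta$ for all $N_p\ge N_p^\star$. On the complementary event, which has probability at least $1-\delta$, we have $\det\Sigma_b\le\det\widehat\Sigma_p+\varepsilon_b$. Combined with the first step, this yields both \eqref{lam2:eq1} and \eqref{lam2:eq1tight} on the intersection of that event with the event that the test passes. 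The same $N_p^\star$ serves both claims, because the gain gate only tightens the deterministic inequality.

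The main obstacle is reading ``if the test returns ..., then with probability at least $1-\delta$'' correctly. Conditioning on the pass event could inflate the failure probability when that event itself has small probability. I would state the result as the joint bound
\begin{equation*}
\mathbb P\big(\text{test passes and the bound fails}\big)\le\delta,
\end{equation*}
which is exactly what the argument above delivers. I would then remark that a conditional version follows by replacing $\delta$ with $\delta\,\mathbb P(\text{pass})$ whenever the pass probability is bounded below.

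A secondary technical point is the role of $\hat C$. KDE consistency, and the assumptions that $C$ is continuous and bounded away from zero on $S$, are needed only so that $D_{\mathrm{KL}}(\hat{\bbel}\Vert\hat C)$ is finite and the implemented gate is meaningful. They do not enter the determinant bound, since that bound uses only the nonnegativity of KL. If one also wants the gain certificate to transfer to $D_{\mathrm{KL}}(\bbel\Vert C)$, I would add a uniform convergence argument for $\log\hat C$ on $S$ to the same high-probability event. The theorem as stated does not require that extra step.
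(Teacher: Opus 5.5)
Your proposal is correct and follows the paper's proof essentially step for step. You bound $\det\widehat\Sigma_p$ deterministically using the Gaussian entropy formula and the nonnegativity of $D_{\mathrm{KL}}(\hat{\bbel}\Vert\hat C)$, tightened by the gain gate, and then transfer the bound to $\det\Sigma_b$ via continuity of the determinant and $\widehat\Sigma_p\to\Sigma_b$ in probability. Reading the conclusion as the joint bound $\mathbb{P}(\text{test passes and the bound fails})\le\delta$ is exactly what the paper's combination of its two parts delivers, and you are right that the KDE assumptions are never used; the compact-kernel detour is also unnecessary, since an infinite KL term simply makes the test fail.
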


Here, $\det(\Sigma_b)$ is the determinant of the true posterior covariance and $d_S$ is the state dimension. \revRc{Theorem~\ref{thm:robust_revelation} is the concentration certificate for revelation: a low computed score implies small true posterior dispersion with high probability under the stated assumptions.}

\revRc{Thus, revealed beliefs can be used as stable anchors with bounded posterior dispersion under the stated consistency and regularity assumptions.}

\begin{figure}
\centering
\includegraphics[width=\linewidth]{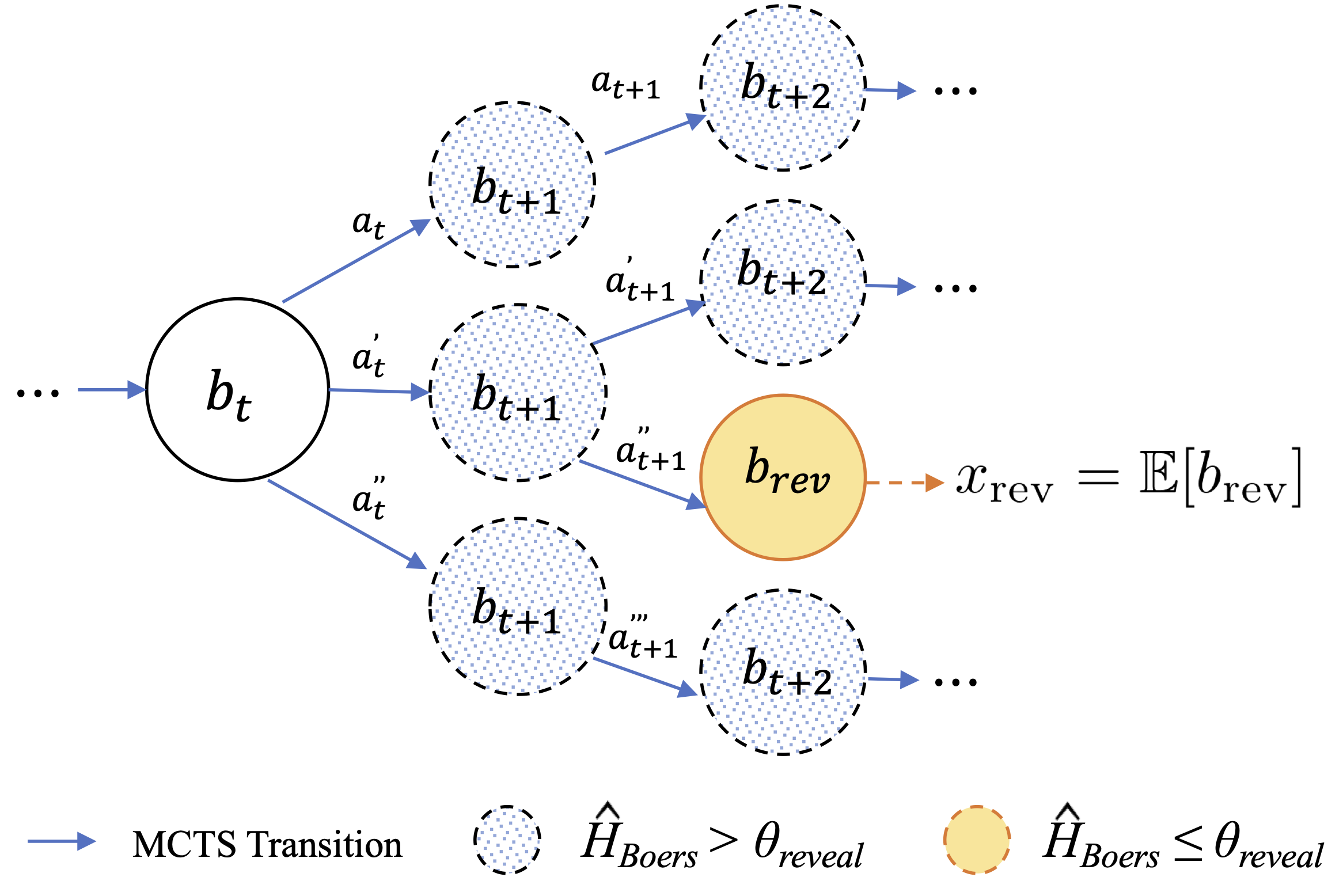}
\caption{\revRc{Online revelation and abstraction mechanism. Starting from the current belief $\bbel_t$, MCTS expands a look-ahead tree over candidate future beliefs. Nodes with dashed outlines denote beliefs with high uncertainty, i.e., $H_{\mathrm{Boers}} \ge \theta_{\mathrm{reveal}}$. When a sequence of actions and observations reaches a node $\bbel_{\mathrm{rev}}$ whose entropy falls below the threshold ($H_{\mathrm{Boers}} < \theta_{\mathrm{reveal}}$), a revelation event is triggered. The revealed belief (solid orange) supports the extraction of a representative state vector $x_{\mathrm{rev}}$ (e.g., the particle mean), which is used as a stable and informative anchor for subsequent abstraction.}}
\label{fig:revelation}
\end{figure}

\revRc{Fig.~\ref{fig:revelation} illustrates the entropy core of the analytic revelation test. In online MCTS, the particle distribution and local tree context change across depth, so the implementation adapts the entropy cutoff through $\theta_{\mathrm{reveal}}^{\mathrm{dyn}}$ and requires two additional checks: a gain gate for prior-to-posterior informativeness and a drop gate for a new entropy decrease from the parent node. With $\Delta\hat H_{\mathrm{Boers}}(h):=\hat H_{\mathrm{Boers}}(\bbel_{\mathrm{parent}(h)})-\hat H_{\mathrm{Boers}}(\bbel_h)$, Definition~\ref{def:revealed_belief} gives the implemented revealed-belief rule.}

\begin{definition}[Revealed belief]
\label{def:revealed_belief}
\revRefTwo{A belief $\bbel_h$ is revealed if it satisfies three conditions: $\hat H_{\mathrm{Boers}}(\bbel_h)\!\le\!\theta_{\mathrm{reveal}}^{\mathrm{dyn}}$, $D_{\mathrm{KL}}(\hat{\bbel}\Vert \hat C)\!\ge\!\kappa_{\mathrm{gain}}$, and $\Delta\hat H_{\mathrm{Boers}}(h)\!\ge\!\theta_{\mathrm{drop}}$.}
\end{definition}

When a belief passes these gates, REBA extracts its representative state $x_{\text{rev}}(h)$ as the particle mean. The resulting stream of revealed representatives is the input to the GNG abstraction in Sec.~\ref{sec:abstraction}.

\subsection{Finite Symbolic Memory Maintenance}
\label{sec:abstraction}

This subsection takes the revealed representatives $\{x_{\mathrm{rev}}^{(i)}\}_{i\ge 1}\subset\mathcal{X}$ from Sec.~\ref{sec:revelation} as input to construct finite prototype support. The role of this layer is to maintain a finite symbolic memory whose coverage and map radius can be monitored as new relevant regions are discovered. REBA uses a GNG graph $\mathcal{G}=(V,E)$ with node prototypes $W:V\to\mathcal{X}$ updated online from revealed vectors~\cite{Martinetz_1994,Fritzke_1994}. Each node $v\in V$ represents a local neighborhood around $W(v)$, and the learned graph supplies the discrete support used in Sec.~\ref{sec:automaton}.

For the coverage analysis, $\mathcal{R}^\star$ denotes a theoretical reference set of specification-relevant regions; Theorem~\ref{thm:pac_coverage} measures whether the GNG prototypes cover this target set while the algorithm continues to update the graph only from revealed representatives.

\begin{definition}[Core revealed regions and $\varepsilon_{\rm cov}$-coverage]
\label{def:core_regions}
Let $\mathcal{R}^\star=\{R_1^\star,\dots,R_M^\star\}\subset\mathcal{X}$ be a finite collection of disjoint, compact \emph{core regions} relevant to the $\omega$-regular specification. Given a GNG graph $\mathcal{G}=(V,E)$ with prototypes $W(\cdot)$, we say that $\mathcal{G}$ \emph{$\varepsilon_{\rm cov}$-covers} $\mathcal{R}^\star$ if for every $j\in\{1,\dots,M\}$ there exists a node $v\in V$ such that
\begin{equation}
    \mathrm{dist}(W(v),R_j^\star) := \inf_{x\in R_j^\star}\|W(v)-x\|_2 \le \varepsilon_{\rm cov}.
\end{equation}
\end{definition}
\revRefTwo{This coverage notion records whether each core region has a nearby prototype; the later performance bound also requires each revealed representative used by the abstraction to satisfy the map-radius condition.}

\paragraph{Coverage conditions}
Our coverage guarantee relies on two high-level properties of the revealed-sample stream and the exploration behavior of the underlying planner:
\begin{itemize}[leftmargin=1.25em]
    \item (H1) \textbf{Bounded revealed vectors:} $\exists R_S>0$ such that $\|x_{\mathrm{rev}}^{(i)}\|_2\le R_S$ for all $i$.
    \item(H2) \textbf{$\tau$-step minorization over uncovered regions:} \revRefTwo{there exist $\tau\ge 1$ and $p_{\min}\in(0,1]$ such that, whenever at least one core region is not yet $\varepsilon_{\rm cov}$-covered by the current abstraction, the next $\tau$ revealed vectors contain at least one sample in the currently uncovered union with probability at least $p_{\min}$.}
\end{itemize}
Formal statements and a constructive sufficient condition (linking $p_{\min}$ to the MCTS expansion schedule) are provided in Sec.~\ref{supp:proofs_gng}, Remark~\ref{rem:h2_instantiation}.

\begin{theorem}[Coverage under minorization]
\label{thm:pac_coverage}
Assume (H1) and (H2). \revRefTwo{Assume also a monotone witness abstraction: when a revealed vector hits a currently uncovered core region, a witness prototype is inserted within that region; already inserted witnesses are not removed or merged in a way that destroys their $\varepsilon_{\rm cov}$-coverage; and the node budget can hold at least $M$ such witnesses.} For any $\varepsilon_{\rm cov}>0$ and confidence level $1-\delta\in(0,1)$, there exists
\begin{equation}
\label{eq:tcov_def}
    \revRefTwoMath{T_{\mathrm{cover}} = \tau\left\lceil \frac{2(M+\log(1/\delta))}{p_{\min}} \right\rceil}
\end{equation}
such that after processing $T\ge T_{\mathrm{cover}}$ revealed state vectors, the GNG abstraction $\mathcal{G}$ $\varepsilon_{\rm cov}$-covers all core regions in $\mathcal{R}^\star$ with probability at least $1-\delta$.

The proof of Theorem~\ref{thm:pac_coverage} and a constructive instantiation of (H2) are included in Sec.~\ref{supp:proofs_gng}.
\end{theorem}

\revRefTwo{Theorem~\ref{thm:pac_coverage} gives the coverage condition used to size and monitor the finite support. In the capped implementation, coverage claims require witness preservation.} \revRefFour{Because the implemented GNG uses a finite node cap, Theorem~\ref{thm:pac_coverage} is invoked as a conditional coverage account; capped runs report coverage-radius diagnostics as the operational monitor for this condition.} The implemented map gate uses $\rho_R=0.5$. The output of this subsection is therefore finite prototype support; the next subsection learns transitions over that support and composes it with the objective automaton.

\subsection{Belief Automaton Construction}
\label{sec:automaton}
\revRefTwo{Given the finite support from the GNG graph, this subsection adds transition probabilities and specification progress.} The symbolic guidance model is built in four steps: map each revealed belief to a prototype, learn abstract transition counts, compose the abstraction with the specification DPA, and synthesize the parity policy whose lifted execution is analyzed in Theorem~\ref{thm:performance_guarantee}.

\paragraph{Online belief abstraction}
We construct a finite belief abstraction $\hat{\mathcal{B}}(T)=(Q_{\mathcal{B}}, \mathcal{A}, \widehat P_B, q_{\mathcal{B},0})$ dynamically within the MCTS loop. To ensure statistical reliability, the abstraction interacts \emph{only} with revealed beliefs, \revRefTwo{as characterized by Theorem~\ref{thm:robust_revelation}}, that satisfy the concentration and informativeness gates.
The state space $Q_{\mathcal{B}}$ consists of nodes in a Growing Neural Gas (GNG) graph, bounded by a fixed budget.
For each revealed belief $\bbel$, we extract its continuous representative $x_{\mathrm{rev}}(\bbel)\in\mathcal{X}$ (the particle mean) and map it to a unique abstract state via nearest-prototype assignment:
\begin{equation}
\label{eq:map_main}
\mathrm{map}(\bbel)\ :=\ \arg\min_{q\in Q_{\mathcal{B}}}\ \bigl\|x_{\mathrm{rev}}(\bbel)-W(q)\bigr\|_{M_{\rm met}},
\end{equation}
where $W(q)$ is the prototype of node $q$, ties are broken deterministically by node creation time, and \revRefTwo{$\|z\|_{M_{\rm met}}:=\sqrt{z^\top M_{\rm met}z}$ for a fixed positive-definite implementation metric $M_{\rm met}$. Let $\underline{\lambda}_{\rm met}:=\lambda_{\min}(M_{\rm met})$, and let $c_{\rm met}$ denote the reciprocal square root of $\underline{\lambda}_{\rm met}$. To keep this abstraction faithful to the continuous representative, REBA enforces a map-radius gate $\rho_R$: a revealed belief is mapped to an existing prototype only when $\|x_{\mathrm{rev}}(\bbel)-W(\mathrm{map}(\bbel))\|_{M_{\rm met}}\le\rho_R$; otherwise, the monotone witness abstraction inserts a new node at $x_{\mathrm{rev}}(\bbel)$. Under this condition, the Euclidean radius used in Theorem~\ref{thm:performance_guarantee} is $\varepsilon_{\rm map}:=c_{\rm met}\rho_R$. The capped GNG implementation may merge close nodes to keep $Q_{\mathcal{B}}$ compact; theorem-level coverage claims require these merges to preserve existing witnesses.}
This mapping treats revealed beliefs as dimension-consistent, quasi-discrete anchors in $\mathcal{X}$.

\paragraph{Transition learning with Dirichlet smoothing}
Transitions are learned from simulations by counting abstract moves. For each pair $(q,a)$, we accumulate the count $N_{q,a\to q'}$ of transitions where $\mathrm{map}(\bbel_t)=q$, $a_t=a$, and $\mathrm{map}(\bbel_{t+1})=q'$. To handle data sparsity, we employ a Dirichlet-smoothed estimator:
\begin{equation}
\label{eq:PB_hat}
\widehat P_B(q'\mid q,a)\ :=\ \frac{N_{q,a\to q'}+\alpha}{\sum_{q''\in Q_{\mathcal{B}}}\bigl(N_{q,a\to q''}+\alpha\bigr)}\,,\qquad \alpha>0.
\end{equation}
Policy synthesis is deferred until the sample count $\sum_{q''} N_{q,a\to q''}$ exceeds a threshold $N_{\min}$ to avoid overfitting to early noise.

\paragraph{Product construction with specification}
Let $\mathcal{A}_\varphi=(Q_\varphi,2^{AP},\delta_\varphi,q_{\varphi,0},\mathrm{prio}_\varphi)$ be the specification DPA introduced in Sec.~\ref{sec:problem_formulation}. We synthesize the high-level policy over the \emph{Prioritized Belief Automaton (PBA)}, defined as the product MDP:
\begin{equation}
\label{eq:pba_def}
\hat{\mathcal{B}}_\varphi(T)=\big(Q_\times,\ \mathcal{A},\ \widehat P_\times,\ (q_{\mathcal{B},0},q_{\varphi,0}),\ F_\times\big).
\end{equation}
The state space is $Q_\times:=Q_{\mathcal{B}}\times Q_\varphi$, and the priority function is inherited from the DPA: $F_\times(q,u) := \mathrm{prio}_\varphi(u)$.
Crucially, the DPA state evolves by evaluating atomic propositions on the abstract prototype $W(q')$. The product transition kernel is thus:
\begin{equation}
\widehat P_\times\big((q',u')\mid(q,u),a\big)
=
\widehat P_B(q'\mid q,a)\cdot \mathbf{1}\Big[u'=\delta_\varphi\big(u,L(W(q'))\big)\Big].
\end{equation}
Solving the parity objective on $\hat{\mathcal{B}}_\varphi(T)$ yields a strategy $\hat{\pi}^*:Q_\times\rightarrow\mathcal{A}$ that guides the MCTS.
Theoretical results provided in Sec.~\ref{supp:proofs_performance} indicate that parity satisfaction on this structure can be reduced to a reachability objective, which is robust to bounded perturbations in the learned kernel $\widehat P_B$.

\paragraph{Approximate performance guarantee}
The abstraction provides revelation-controlled symbolic memory for continuous-belief planning, with its influence on satisfaction performance tracked explicitly through learning and abstraction errors. Theorem~\ref{thm:performance_guarantee} states when this memory remains a valid approximation for $\omega$-regular guidance; its notation separates the learned product abstraction, the ideal revealed-anchor abstraction, and the lifted execution in the original POMDP.
Let $\widehat{\mathcal{B}}_\varphi(T)$ be the learned product abstraction with learned kernel $\widehat P_B$, and let $\mathcal{B}_\varphi^\star$ be the ideal product abstraction induced by the same \revRc{revealed} anchors but by the ideal \revRc{kernel over revealed anchors} $P_B^\star$. Let $\Pi(\hat\pi^*)$ denote the execution of a learned abstract policy in the original POMDP by maintaining the DPA state and applying the belief-to-anchor map; write $P_{\mathrm{lift}}$ for the comparison kernel induced by this lifted execution. All three values below use the same parity-to-reachability reduction with fixed accepting and bad target sets $G$ and $B$.
The bound uses four groups of conditions: successful revelation and monotone coverage, a map radius gate for revealed anchors, an L1 error event for the learned kernel, and a finite expected hitting time after the parity objective is reduced to reachability.

\begin{theorem}[\revRefTwo{Conditional approximate policy performance guarantee}]
\label{thm:performance_guarantee}
Let $\hat{\pi}^*:Q_\times\rightarrow\mathcal{A}$ be an optimal policy synthesized on $\widehat{\mathcal{B}}_\varphi(T)$, and let $\Pi(\hat{\pi}^*)$ be its lifted execution in the original POMDP. \revRefTwo{Assume that, on an event $E_{\mathrm{succ}}$ with $\mathbb{P}(E_{\mathrm{succ}})\ge 1-\delta_{\mathrm{learn}}$, the revelation and monotone coverage events hold, every revealed representative used by the abstraction passes the map gate $\rho_R$ (equivalently, has Euclidean distance at most $\varepsilon_{\rm map}$), and the learned kernel satisfies}
\begin{equation}
\label{eq:l1-model-error}
    \revRefTwoMath{\sup_{q,a}\bigl\|\widehat P_B(\cdot\mid q,a)-P_B^\star(\cdot\mid q,a)\bigr\|_1 \le \varepsilon_P.}
\end{equation}
\revRefTwo{Assume further that, after the common reduction to reachability, $\mathbb{E}_{K}^{\pi}[\tau_{G\cup B}]\le H$ for every stationary policy considered and every $K\in\{\widehat P_B,P_B^\star,P_{\mathrm{lift}}\}$. Then}
\begin{equation}
\label{thm6:eq1}
    \revRefTwoMath{V_{\mathcal{P}}^\varphi(\Pi(\hat{\pi}^*)) \;\ge\; V^{\varphi}_{\widehat{\mathcal{B}}}(\hat{\pi}^*) \;-\; \delta_{\text{learn}} \;-\; \varepsilon_{\text{model}} \;-\; \varepsilon_{\text{abs}}.}
\end{equation}
\revRefTwo{Here $\varepsilon_{\text{model}}:=2\sqrt{H\varepsilon_P}$ comes from the L1 kernel error in \eqref{eq:l1-model-error}. The Lipschitz regularity condition on the continuous transition kernel with constant $L_T$ (Assumption~\ref{ass:lipschitz-kernel}) is used only for the abstraction term, which admits the explicit bound}
\begin{equation}
\label{eq:eps-abs-bound}
    \revRefTwoMath{\varepsilon_{\text{abs}} \;\le\; 2\sqrt{H\,L_T\,\varepsilon_{\rm map}},}
\end{equation}
\revRefTwo{so $\varepsilon_{\text{abs}} = \mathcal{O}(\sqrt{\varepsilon_{\rm map}})$ as $\varepsilon_{\rm map} \to 0$. In the accepting abstract instances considered in our experiments, $V^{\varphi}_{\widehat{\mathcal{B}}}(\hat{\pi}^*)=1$, in which case the bound simplifies to $V^\varphi_{\mathcal{P}}(\Pi(\hat{\pi}^*))\ge 1-(\delta_{\text{learn}}+\varepsilon_{\text{model}}+\varepsilon_{\text{abs}})$.}
\end{theorem}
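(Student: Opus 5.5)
The argument works on the reachability form and splits the value gap with a telescoping chain through the ideal abstraction $\mathcal{B}_\varphi^\star$. After the parity-to-reachability reduction with fixed target sets $G$ (accepting) and $B$ (bad), each value is the probability of reaching $G$ before $B$ under the stationary policy $\hat\pi^*$. I would write
\[
V_{\mathcal{P}}^\varphi(\Pi(\hat\pi^*)) - V^\varphi_{\widehat{\mathcal{B}}}(\hat\pi^*) = \bigl(V_{P_{\mathrm{lift}}}-V_{P_B^\star}\bigr) + \bigl(V_{P_B^\star}-V_{\widehat P_B}\bigr),
\]
with all three values evaluated under the same $\hat\pi^*$. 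I would then work on $E_{\mathrm{succ}}$ and charge its complement separately. Since every value lies in $[0,1]$, conditioning on $E_{\mathrm{succ}}$ costs at most $\mathbb{P}(E_{\mathrm{succ}}^c)\le\delta_{\mathrm{learn}}$, which produces the $\delta_{\text{learn}}$ term.

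The key tool is a simulation lemma for reachability under an expected hitting time bound. Take two kernels $K,K'$ on the same finite product space, with a shared policy, that differ by at most $\eta$ in per-step $L^1$ (total variation) distance. Couple the two chains step by step through the maximal coupling. At each step the chains decouple with probability at most $\eta/2$. If the coupling survives until $\tau_{G\cup B}$, the outcomes agree. A union bound over steps gives $|V_K-V_{K'}|\le \tfrac{\eta}{2}\,\mathbb{E}[\tau_{G\cup B}]\le \eta H/2$.

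This linear bound is stronger than the stated $2\sqrt{H\eta}$ only when $\eta H$ is small. To match the square-root form, I would instead truncate at a horizon $n$. Markov's inequality bounds the probability of not having hit $G\cup B$ by $H/n$, and within the horizon the decoupling probability is at most $n\eta$. This gives $|V_K-V_{K'}|\le n\eta + 2H/n$ (or $H/n$ on each chain). Optimizing with $n=\sqrt{H/\eta}$ yields a bound of order $2\sqrt{H\eta}$ after adjusting constants. Applying this with $K=\widehat P_B$, $K'=P_B^\star$ and $\eta=\varepsilon_P$ from \eqref{eq:l1-model-error} gives $\varepsilon_{\text{model}}=2\sqrt{H\varepsilon_P}$. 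The DPA component is deterministic given $q'$, so the product kernels inherit the same $L^1$ distance.

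For the abstraction term, I would compare $P_{\mathrm{lift}}$ with $P_B^\star$. In the lifted execution, the true next belief's representative lies within Euclidean distance $\varepsilon_{\mathrm{map}}$ of its assigned prototype. This holds because the map gate passes on $E_{\mathrm{succ}}$, and the revelation and coverage events guarantee the anchor exists. By Assumption~\ref{ass:lipschitz-kernel}, the induced next-anchor distributions then differ in $L^1$ by at most $L_T\varepsilon_{\mathrm{map}}$ per step. Applying the same truncation lemma with $\eta=L_T\varepsilon_{\mathrm{map}}$ gives \eqref{eq:eps-abs-bound}.

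The main obstacle is this last step, because $P_{\mathrm{lift}}$ acts on continuous beliefs, not on the finite $Q_\times$. The coupling has to be set up on the pushforward through $\mathrm{map}$, with the DPA labels $L(W(q'))$ and $L(x)$ agreeing. I expect to need the assumption that labels are constant on map cells, or else to absorb label mismatches into the Lipschitz deviation. I would handle this by defining $P_{\mathrm{lift}}$ exactly as the pushforward kernel on $Q_\times$, which the statement permits as a ``comparison kernel''. I would also have to check that the hitting time bound $H$ under $P_{\mathrm{lift}}$ is the one assumed.

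Summing the three contributions gives \eqref{thm6:eq1}. The special case $V^\varphi_{\widehat{\mathcal{B}}}(\hat\pi^*)=1$ then follows immediately.
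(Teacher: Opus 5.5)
Your proposal is correct and follows essentially the paper's route: the same chain $\widehat P_B\to P_B^\star\to P_{\mathrm{lift}}$ on $E_{\mathrm{succ}}$ with $\delta_{\mathrm{learn}}$ charged by total probability, and the same prefix-coupling plus Markov-tail continuity argument (Proposition~\ref{prop:parity_continuity}) applied with $\eta=\varepsilon_P$ and $\eta=L_T\varepsilon_{\rm map}$, where the paper likewise just treats $P_{\mathrm{lift}}$ as a comparison kernel on the abstract space. For the exact constant, keep the per-step mismatch at $\eta/2$ as you first state, so the truncated bound is $n\eta/2+2H/n$ and $n=2\sqrt{H/\eta}$ gives exactly $2\sqrt{H\eta}$; your linear bound $\eta H/2$ in fact already implies $2\sqrt{H\eta}$ whenever the latter is below $1$, so the truncation step is not even needed.
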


The proof of Theorem~\ref{thm:performance_guarantee}, including the parity-to-reachability reduction, the continuity bound, and the derivation of \eqref{eq:eps-abs-bound}, is provided in Sec.~\ref{supp:proofs_performance}. \revRefTwo{To connect the bound to the implementation, we report the local diagnostic score $B_{\rm loc}$ and measured proxies for the learning, model, and abstraction terms in Fig.~\ref{fig:thm3_decomposition_main}.} \revRefTwo{The full numerical proxies remain in Sec.~\ref{supp:thm3_magnitudes}, Table~\ref{tab:supp_thm3_error_proxies}.}

\subsection{Symbolic Feedback to MCTS}
\label{sec:guidance}
\revRefTwo{The learned Belief Automaton serves as a guidance state that is fed back into online search.} Fig.~\ref{fig:syn_policy} shows how the abstract policy $\hat{\pi}^*$ is synthesized, and \revRc{this subsection explains the integration of this policy into MCTS}. REBA uses $\hat{\pi}^*$ in two places: as a tree policy bias during selection and expansion, and as a default policy guide during rollouts.

\begin{figure}[htbp]
  \centering
  \includegraphics[width=\linewidth]{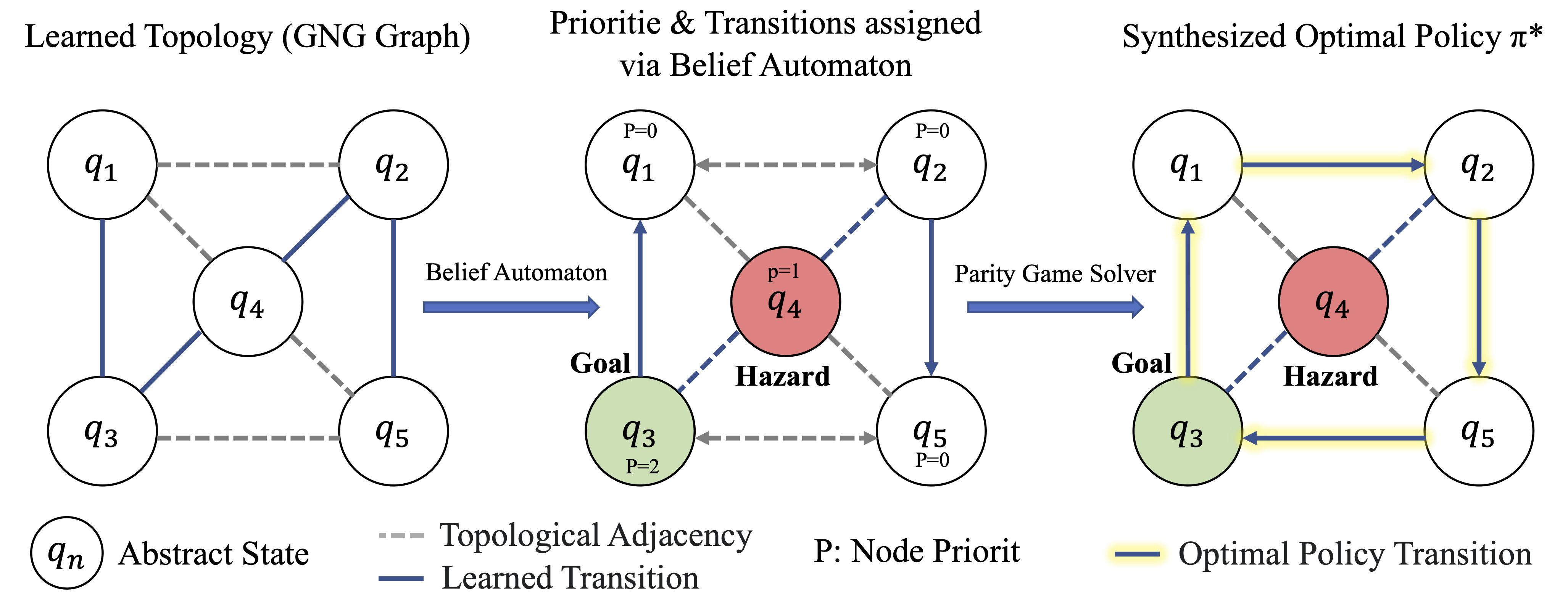}
  \caption{Synthesis of the optimal high-level policy. \textbf{Left:} A topological map is learned as a GNG graph that captures connectivity among abstract states $(q_n)$. \textbf{Center:} The learned belief abstraction is composed with the specification DPA, inducing valid priorities on product states $(q,u)$, and the learned transitions are formalized.
 \textbf{Right:} \revRefTwo{A solver for parity objectives over the product MDP computes} the optimal policy $\hat{\pi}^*$ (yellow) that satisfies the specified objective; here $\hat{\pi}^*$ induces a cycle that repeatedly visits $q_3$ while avoiding $q_4$.}
  \label{fig:syn_policy}
\end{figure}

\textbf{Tree policy guidance.}
When the search encounters a revealed belief that maps to an abstract node $q\in Q_{\mathcal B}$, we update the specification memory $u \leftarrow \delta_\varphi\big(u, L(W(q))\big)$ and denote the resulting PBA state by $\tilde q=(q,u)$. The action $a^*=\hat{\pi}^*(\tilde q)$ then receives an optimistic but vanishing bias in the tree score, encouraging expansions consistent with abstract temporal progress.
The implemented score augments the standard \revRefThree{Upper Confidence Bounds for Trees (}UCT\revRefThree{)} rule as
\begin{equation}
\label{eq29}
\begin{aligned}
    \text{UCT}(\tilde q,a) 
    = &\hat{Q}(\tilde q,a)
    + c_{\text{UCT}}\, \sqrt{\frac{\log N_T(\tilde q)}{\max\{1,\,N_T(\tilde q,a)\}}}\\
    &+ \beta_t(\tilde q)\,\mathbb{I}\bigl[\hat{\pi}^*(\tilde q)=a\bigr], \\
    \text{with}\quad & 0 \le \beta_t(\tilde q) \le \lambda\, c_{\text{UCT}} \sqrt{\frac{\log N_T(\tilde q)}{\max\{1,\,N_T(\tilde q,a)\}}}, \\
    & \beta_t(\tilde q)\downarrow 0,\quad \lambda\in(0,1).
\end{aligned}
\end{equation}
The bonus biases early search toward the abstract policy while preserving asymptotic exploration consistency as $\beta_t(\tilde q)\to 0$. A formal link from such an expansion schedule to the minorization parameter $p_{\min}$ in Theorem~\ref{thm:pac_coverage} is provided in Sec.~\ref{supp:proofs_gng}, Remark~\ref{rem:h2_instantiation}.

\textbf{Default policy guidance.}
During MCTS rollouts, whenever a simulated belief triggers a revelation and maps to $q$, REBA updates $u$ by $u \leftarrow \delta_\varphi\big(u, L(W(q))\big)$ and selects the next rollout action as $\hat{\pi}^*(q,u)$. The rollout value estimate therefore receives temporal progress information from the product policy, replacing a purely random or local heuristic default.

Algorithm~\ref{alg:mcts-olag} shows the resulting online procedure. Starting from the current belief and the current guidance state, REBA runs MCTS simulations, updates reveal statistics, maps stable anchors into the GNG and Belief Automaton, updates transition counts, periodically re-solves the product MDP, and returns the best root action together with the updated guidance state. \revRefFour{Inside the pseudocode, $\Xi=(\mathcal{G}_S,\hat{\mathcal{B}},\hat{\pi}^*)$ denotes this mutable guidance state, with $\hat{\mathcal{B}}$ abbreviating the current $\hat{\mathcal{B}}(T)$; \Func{RevealGate} denotes the entropy, gain, and drop test in Definition~\ref{def:revealed_belief}.}

\begin{algorithm}[tb]
\caption{Revealed Belief Automaton (REBA)}
\label{alg:mcts-olag}
\begin{algorithmic}[1]
\Require Current belief $\bbel_{\text{curr}}$; guidance state $\Xi$
\Ensure Action $a_t$; updated guidance state $\Xi$
\Function{REBA}{$\bbel_{\text{curr}}, \Xi$}
    \State $T \leftarrow \Func{InitTree}(\bbel_{\text{curr}})$
    \For{$i = 1$ \textbf{to} $N_{\text{sims}}$}
        \State \Call{SimLearn}{$T.\text{root}, \Xi$}
    \EndFor
    \If{\Func{TimeToReplan}}
        \State $\hat{\pi}^* \leftarrow \Func{SolvePBA}(\hat{\mathcal{B}})$ \Comment{update $\Xi$}
    \EndIf
    \State $a_t \leftarrow \Func{BestAction}(T.\text{root})$
    \State \Return $a_t, \Xi$
\EndFunction
\Function{SimLearn}{$n_{\text{root}}, \Xi$}
    \State $(n_p,n) \leftarrow \Func{SelectExpand}(n_{\text{root}},\hat{\pi}^*)$
    \State $\bbel \leftarrow n.\text{belief}$
    \State $\eta_{\text{rev}} \leftarrow \Func{UpdateRevealStats}(n_p,n)$
    \If{\Call{RevealGate}{$\bbel,\eta_{\text{rev}}$}}
        \State $x_{\text{rev}} \leftarrow \Func{ParticleMean}(n.\text{particles})$
        \State $q_{\text{gng}} \leftarrow \Func{AdaptGNG}(\mathcal{G}_S,x_{\text{rev}})$
        \State $q \leftarrow \Func{MapBA}(\bbel,q_{\text{gng}})$
        \State \Func{UpdateCounts}$(\hat{\mathcal{B}},n_p.\text{q\_id},n.\text{action},q)$
        \State $n.\text{q\_id} \leftarrow q$
    \EndIf
    \State $R \leftarrow \Func{Rollout}(n,\hat{\pi}^*)$
    \State \Call{Backprop}{$n,R$}
\EndFunction

\end{algorithmic}
\end{algorithm}

\textbf{Summary of theorem roles.} \revRc{Theorem~\ref{thm:robust_revelation} certifies revelation events by bounding local belief concentration. Theorem~\ref{thm:pac_coverage} gives a coverage condition for the finite memory built from stable anchors. Theorem~\ref{thm:performance_guarantee} gives a conditional relative bound for the lifted learned policy under explicit learned kernel, model error, and map radius assumptions. The implemented planner uses the resulting abstract policy as guidance inside MCTS, and the complete behavior is evaluated empirically in Sec.~\ref{sec:experiments}.} \revRefFour{Sec.~\ref{sec:discussion_scaling} and Fig.~\ref{fig:per_step_cost_main} summarize the online breakdown; full timing and asymptotic details are in Sec.~\ref{supp:complexity}.}

\section{Experiments}
\label{sec:experiments}
\revRefFour{We evaluate REBA in four steps: first, mechanism isolation on published dynamics to verify that each architectural component is load-bearing (Sec.~\ref{sec:mechanism_attribution}); second, nominal performance across three scenarios to assess the complete system (Sec.~\ref{sec:nominal_performance}); third, fixed-hyperparameter robustness to test generalization (Sec.~\ref{sec:robustness_main}); and finally, quantitative diagnostics in Sec.~\ref{sec:discussion} to connect the measured performance to the theoretical error accounting.}

\subsection{Evaluation Protocol}
\label{sec:eval_protocol}

\begin{table*}[!t]
  \centering
  \caption{\revRefFour{Complete nominal performance comparison across the Static 2D, Dynamic 2D, and 3D Navigation scenarios (20 trials). Results are reported as \textbf{Mean $\pm$ Standard Deviation} where defined.}}
  \label{tab:nominal_complete}
  \label{perf1_tab}
  \label{perf2_tab}
  \label{perf3_tab}
  \scriptsize
  \setlength{\tabcolsep}{2.2pt}
  \renewcommand{\arraystretch}{1.08}
  \resizebox{\textwidth}{!}{%
  \begin{tabular}{@{}llccccccccc@{}}
    \toprule
    Scenario & Metric & QR\text{-}DQN & \makecell{Recurrent\\PPO} & A2C & TRPO & POMCP & REFSOLVER & GPOMCP & CPOMDP & \textbf{Ours} \\
    \midrule
    \multirow{4}{*}{Static 2D}
    & Cycles ($\uparrow$) & $0.0 \pm 0.0$ & $5.2 \pm 2.4$ & $1.0 \pm 1.3$ & $7.7 \pm 1.5$ & $2.5 \pm 2.1$ & $8.3 \pm 2.3$ & $9.1 \pm 3.1$ & $9.4 \pm 2.6$ & $\mathbf{11.0 \pm 2.2}$ \\
    & Success Rate \% ($\uparrow$) & 0 & 40 & 10 & \textbf{95} & 30 & 80 & 90 & 90 & \textbf{95} \\
    & TTFC (steps, $\downarrow$) & \textendash & $97 \pm 22$ & $82 \pm 37$ & $68 \pm 28$ & $75 \pm 33$ & $66.0 \pm 17$ & $54 \pm 21$ & $45 \pm 11$ & $\mathbf{34 \pm 12}$ \\
    & Relative Gain & \textendash & \textendash & \textendash & \textendash & \textendash & \textendash & \textendash & \textendash & \textbf{+17.0\%} \\
    \midrule
    \multirow{4}{*}{Dynamic 2D}
    & Cycles ($\uparrow$) & $0.2 \pm 0.7$ & $3.1 \pm 2.6$ & $0.9 \pm 1.4$ & $4.8 \pm 2.7$ & $0.9 \pm 1.9$ & $2.7 \pm 2.3$ & $6.8 \pm 3.5$ & $7.6 \pm 2.4$ & $\mathbf{11.2 \pm 2.1}$ \\
    & Success Rate \% ($\uparrow$) & 5 & 35 & 10 & 55 & 5 & 20 & 65 & 80 & \textbf{95} \\
    & TTFC (steps, $\downarrow$) & 188 & $101 \pm 34$ & $154 \pm 38$ & $82 \pm 36$ & 25 & $73 \pm 33$ & $62 \pm 31$ & $57 \pm 39$ & $\mathbf{42 \pm 28}$ \\
    & Relative Gain & \textendash & \textendash & \textendash & \textendash & \textendash & \textendash & \textendash & \textendash & \textbf{+47.4\%} \\
    \midrule
    \multirow{3}{*}{3D Navigation}
    & Success Rate \% ($\uparrow$) & 0 & 35 & 5 & 25 & 0 & 35 & 60 & 75 & \textbf{95} \\
    & TTFC (steps, $\downarrow$) & N/A & $151 \pm 43$ & 238 & $169 \pm 52$ & N/A & $127 \pm 33$ & $84 \pm 31$ & $97 \pm 44$ & $\mathbf{62 \pm 20}$ \\
    & Relative Gain in Success Rate & \textendash & \textendash & \textendash & \textendash & \textendash & \textendash & \textendash & \textendash & \textbf{+26.7\%} \\
    \bottomrule
  \end{tabular}
  }
  \begin{tablenotes}
    \footnotesize
    \item Note: TTFC (Time to First Cycle) is calculated based on successful trials only. Success-count sequences follow the method-column order: Static 2D has $n_{\rm succ}=0/8/2/19/6/16/18/18/19$, Dynamic 2D has $1/7/2/11/1/4/13/16/19$, and 3D Navigation has $0/7/1/5/0/7/12/15/19$. TTFC standard deviations are reported only when $n_{\rm succ}\ge2$; single successful-trial values are reported without a standard deviation, and N/A marks no successful trials. Relative Gain compares Ours with CPOMDP, using Cycles for the two 2D patrolling tasks and Success Rate for the 3D reach-avoid task.
  \end{tablenotes}
\end{table*}

\begin{figure*}[!t]
    \centering
    \captionsetup[subfigure]{justification=centering}
    \newcommand{\figfivepanelheight}{3.2cm}
    \begin{subfigure}[t]{0.24\textwidth}
        \centering
        \includegraphics[width=\linewidth,height=\figfivepanelheight,keepaspectratio]{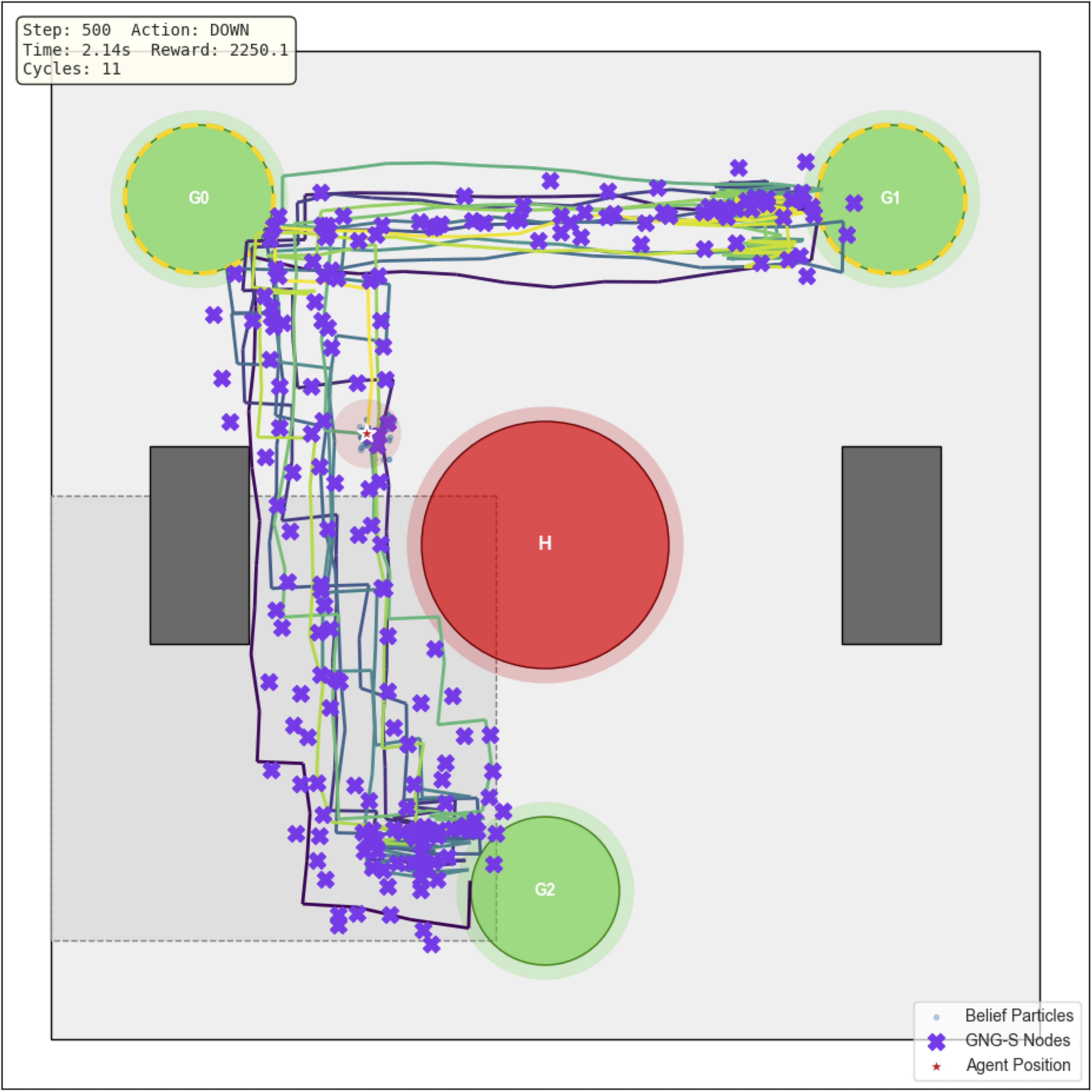}
        \caption{Static 2D}
        \label{fig_static_result}
    \end{subfigure}
    \hfill
    \begin{subfigure}[t]{0.24\textwidth}
        \centering
        \includegraphics[width=\linewidth,height=\figfivepanelheight,keepaspectratio]{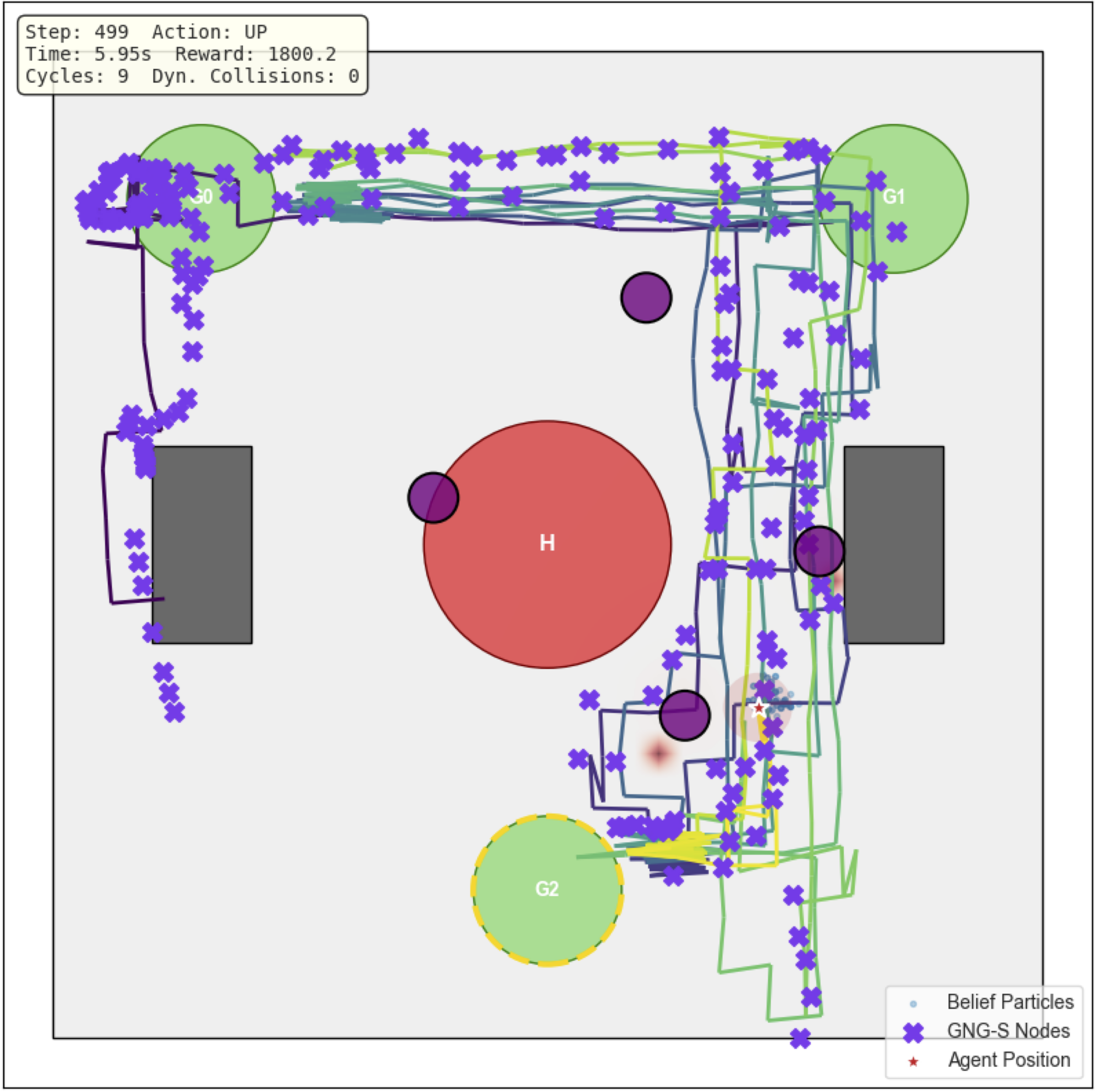}
        \caption{Dynamic 2D}
        \label{fig_dynamic_result}
    \end{subfigure}
    \hfill
    \begin{subfigure}[t]{0.24\textwidth}
        \centering
        \includegraphics[width=\linewidth,height=\figfivepanelheight,keepaspectratio]{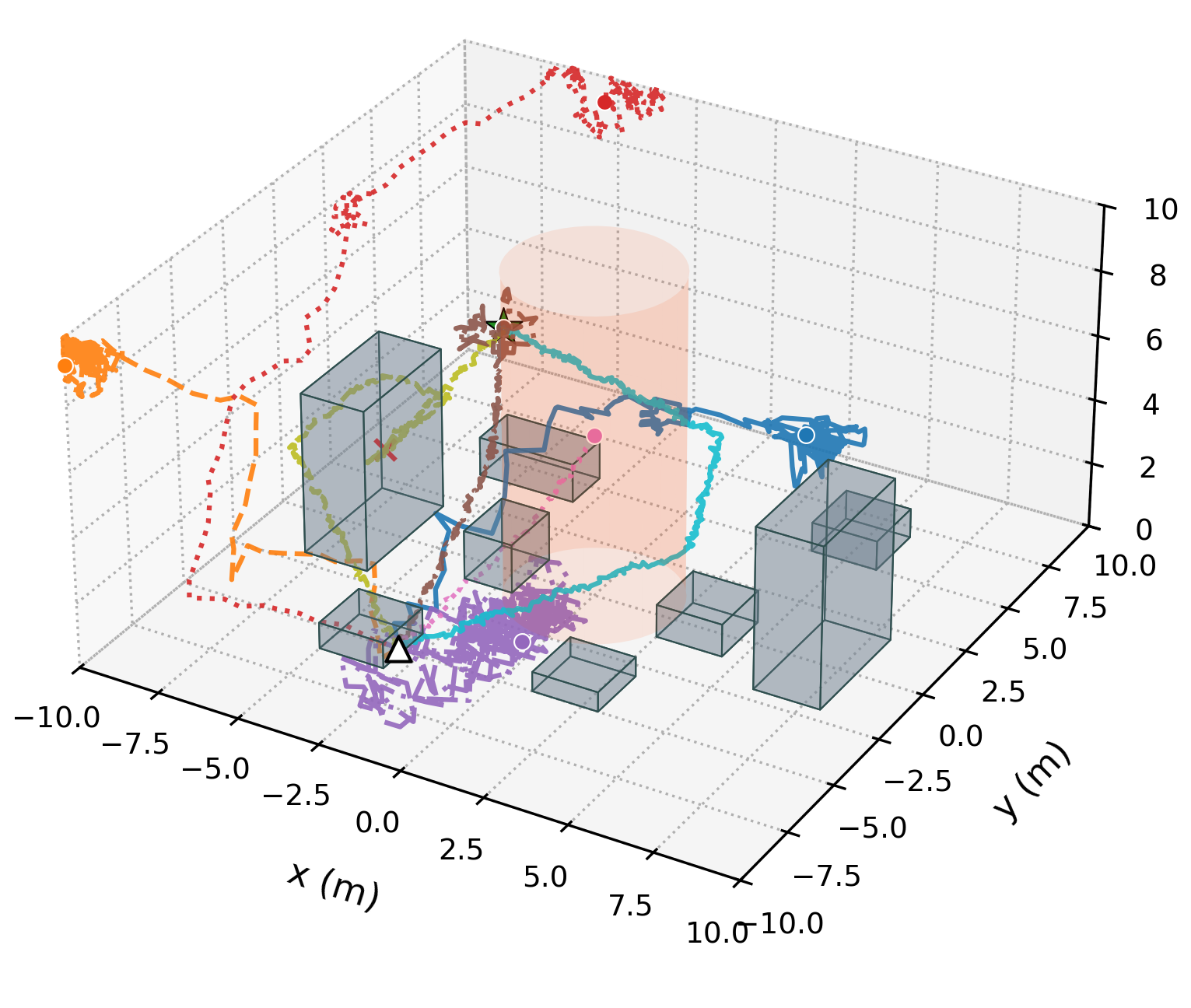}
        \caption{Motion Trajectories of Each Baseline}
        \label{fig:main_3d_baselines}
    \end{subfigure}
    \hfill
    \begin{subfigure}[t]{0.24\textwidth}
        \centering
        \includegraphics[width=\linewidth,height=\figfivepanelheight,keepaspectratio]{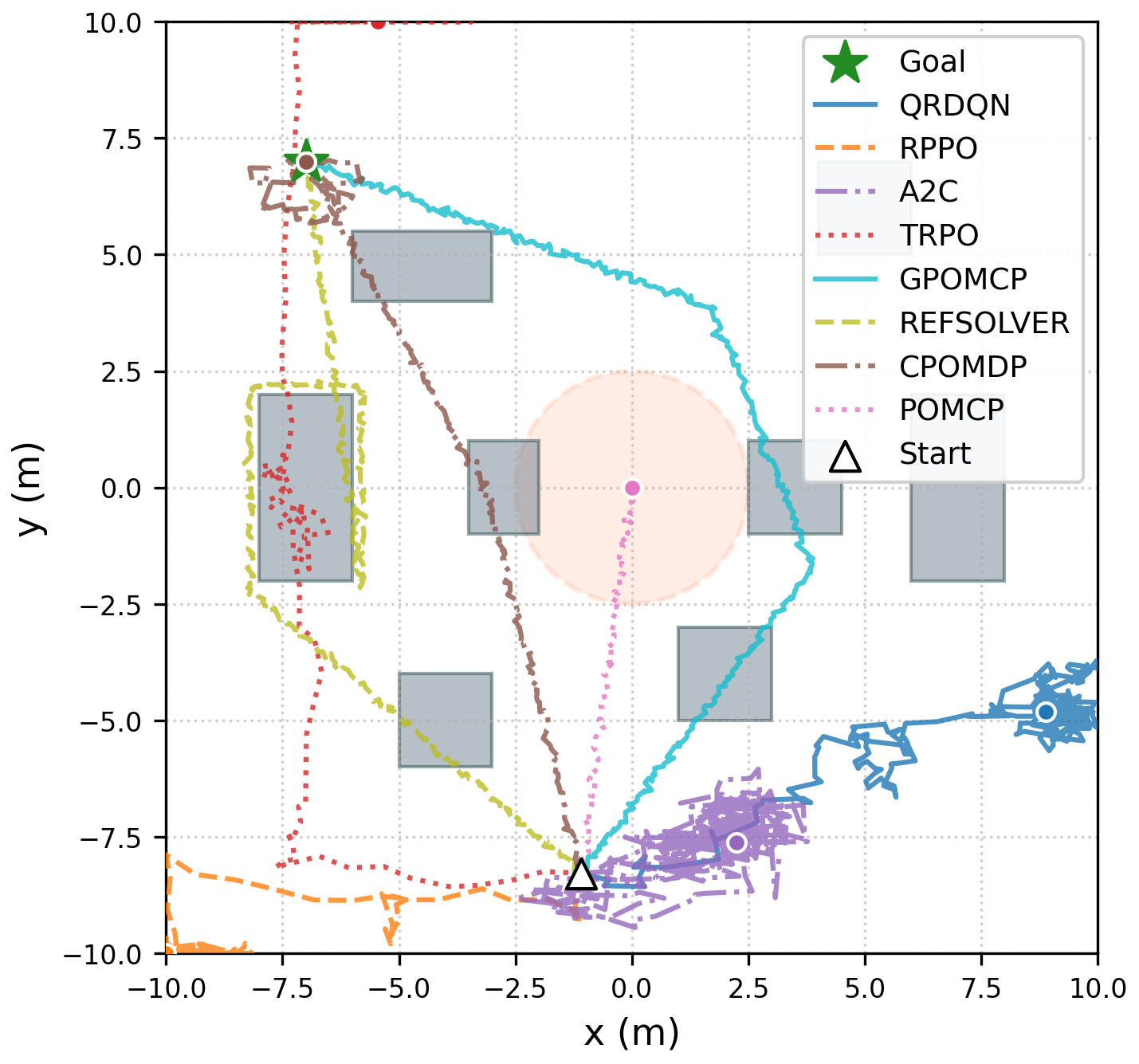}
        \caption{Top-down view in 3D scenario}
        \label{fig:main_3d_topdown}
    \end{subfigure}
    \caption{\revRefFour{Compact visual overview of nominal experiments: Static 2D, Dynamic 2D, baseline trajectories in the 3D navigation scenario, and the corresponding top-down 3D view.}}
    \label{Expe_results_1}
\end{figure*}

\revRefFour{\textbf{2D Persistent Patrolling.} The agent moves in a $20\times20$ continuous arena over $[-10,10]\times[-10,10]$. The $\omega$-regular specification requires repeated visits to three circular goals centered at $(-7,7)$, $(7,7)$, and $(0,-7)$, while avoiding the central circular hazard at $(0,0)$ and two static rectangular obstacles. The state is $x\in\mathbb{R}^2$, the initial position is sampled from $[-1,1]\times[-10,8]$, the action set is $\{(0,\pm1),(\pm1,0)\}$, and transitions follow $x' \sim \mathcal N(x+a,\sigma_t^2 I)$ with $\sigma_t=0.1$. Static 2D uses Gaussian coordinate observations with $\sigma_o=0.5$. Dynamic 2D adds four mobile obstacles and a sensor range of $5.0$; the agent maintains a risk map over obstacle locations through diffusion and temporal decay. Fig.~\ref{Expe_results_1}(a,b) illustrates the 2D patrolling layouts and representative revealed-anchor trajectories, while Fig.~\ref{Expe_results_1}(c,d) gives compact 3D trajectory views used to interpret the nominal results.}

\revRefFour{\textbf{3D Navigation.} The agent operates in $[-10,10]\times[-10,10]\times[0,10]$ with six motion primitives $\{\pm x,\pm y,\pm z\}$, step size $1.0$, transition noise $\sigma_t=0.15$, and observation noise $\sigma_o=0.5$. The reach and avoid objective starts the agent in $[-2,2]\times[-9,-7]\times[1,3]$, requires reaching a goal centered at $(-7,7,3)$, and forbids entering a cylindrical hazard or colliding with eight cuboid obstacles.}

\revRefFour{The reported metrics remain aligned with the logical tasks: \textbf{Cycles}, the number of completed patrol cycles per episode of 500 steps in 2D; \textbf{Success Rate}, the fraction of episodes that complete one 2D cycle or reach the 3D goal without safety violations; and \textbf{\revRefThree{Time to First Cycle (TTFC)}}, measured as the time to first completed cycle in 2D or first goal in 3D among successful episodes.}

\revRc{For reward-based baselines and reward-guided search variants, the specifications are compiled into dense proxy rewards to make training and stepwise planning feasible. The reward-trained DRL baselines are QR-DQN, Recurrent PPO, A2C, and TRPO implemented through the Stable-Baselines3 ecosystem~\cite{stable-baselines3}; Table~\ref{tab:nominal_complete} reports these rows alongside the online POMDP solvers across the three nominal scenarios. The full reward definitions, rationale, DRL training budgets, hyperparameters, and training curves are reported in Sec.~\ref{supp:reward_rationale}, Sec.~\ref{supp:hyperparams}, and Sec.~\ref{supp:drl_curves}.}

\revRefFour{To quantify variability, each method undergoes 20 independent trials per nominal scenario: 500 steps for 2D and 300 steps for 3D. Table~\ref{tab:nominal_complete} reports mean $\pm$ standard deviation where defined, while Table~\ref{tab:nominal_stats_main} reports aggregate 95\% confidence intervals for cycle differences and Wilson 95\% confidence intervals for success counts. Experiments run on a 12-core Intel Xeon Platinum 8352V CPU server with 90GB RAM, Python 3.10, PyTorch 2.3.1, and NumPy 1.26.4.}

\revRefFour{REBA uses 300 MCTS simulations ($c_{\text{UCT}}\!=\!10$) with progressive widening ($k_a\!=\!5.0,\alpha_a\!=\!0.1$; $k_o\!=\!3,\alpha_o\!=\!0.6$), GNG capped at 100 nodes, a dynamic entropy threshold at the 5th percentile over a 50-step window with $\kappa_{\text{gain}}\!=\!0.01$, and Dirichlet-smoothed transition learning ($\alpha\!=\!0.01$, $N_{\min}\!=\!20$); Sec.~\ref{supp:hyperparams} reports parameter sweeps for the reveal percentile, information-gain filter, and entropy-drop threshold. Hyperparameter ranges, validation task, and final settings for all methods are also reported there.}

\subsection{Mechanism Attribution on Continuous LightDark}
\label{sec:mechanism_attribution}
\label{sec:lightdark_main}

\revRc{Before evaluating the complete system across multiple scenarios, we first isolate the role of each architectural component on a recurrent-visit variant of the published Continuous LightDark benchmark~\cite{sunberg2018online,Fischer_IPFT_2020}. This benchmark keeps the published dynamics, observation model, and noise levels and reformulates the task specification as a recurrent visit with safety, so the evaluation exercises the $\omega$-regular layer on external dynamics. Table~\ref{tab:lightdark_main} reports effect sizes and per-comparison $p$-values from paired 30-seed tests against Full REBA. The ablation rows test the functional decomposition of the revelation-controlled architecture.}

\revRc{The pattern isolates four roles. \textbf{(i) Revelation gate under benign conditions:} Ungated anchors reaches $8.70$ cycles compared with Full REBA's $9.23$ ($p=0.0038$), while preserving perfect safety and success. This benchmark uses Gaussian observations and smooth dynamics, so most beliefs remain well-concentrated under both variants. The gate's value emerges under stress: the assumption diagnostics in Table~\ref{tab:supp_assumption_diagnostics} show that dip rejection rate rises from $4.6\%$ (nominal Static-2D) to $18.7\%$ (jump/heavy-tailed noise) and the effective sample size ratio $\mathrm{ESS}/N_p$ drops from $0.61$ to $0.24$, indicating that the gate rejects a substantially larger fraction of candidate anchors when belief quality degrades. \textbf{(ii) Adaptive memory as the enabler of recurrent visits:} Frozen GNG reaches $1.97$ cycles, a $79\%$ drop from Full REBA ($p<10^{-4}$), while preserving safety, isolating online coverage growth as the mechanism behind repeated task completion. \textbf{(iii) Learned dynamics and parity feedback as safety drivers:} Fixed prior yields $18/30$ success and $0.43$ safety violations per episode ($p<10^{-4}$ for all metrics); No parity yields $10/30$ success and $1.43$ safety violations per episode ($p=1.4\times10^{-8}$ on McNemar). \textbf{(iv) MCTS-only as the unguided limit:} MCTS-only yields $2.73$ cycles and $28/30$ success, establishing the baseline against which each architectural component is measured. Complete CIs, timings, all baselines, and test details remain in Sec.~\ref{supp:lightdark_recurrent}.}

\begin{table}[!ht]
\centering
\caption{\revRefFour{Comparison on a recurrent-visit task based on Continuous LightDark~\cite{sunberg2018online}. Rows report representative external baselines and key REBA ablations over $30$ paired seeds; the $p$-value columns report per-comparison $p$-values from paired tests against Full REBA. Complete CIs, timing details, tests, and the full baseline set are in Table~\ref{tab:supp_lightdark_recurrent}.}}
\label{tab:lightdark_main}
\scriptsize
\setlength{\tabcolsep}{0.8pt}
\renewcommand{\arraystretch}{1.08}
\resizebox{\columnwidth}{!}{%
\begin{tabular}{@{}lccccccc@{}}
\toprule
Method & Cycles & Success & Safety violations & TTFC & $p_{\rm cycles}$ & $p_{\rm success}$ & $p_{\rm TTFC}$ \\
\midrule
\textbf{Full REBA} & \textbf{9.23} & \textbf{30/30} & \textbf{0.00} & \textbf{84.6} & reference & reference & reference \\
POMCP & 1.2 & 2/30 & 2.50 & 240.0 & \textbf{$<10^{-4}$} & \textbf{$<10^{-7}$} & \textbf{$<10^{-4}$} \\
CPOMDP & 3.4 & 24/30 & 0.20 & 153.0 & \textbf{$<10^{-4}$} & \textbf{0.031} & \textbf{$<10^{-4}$} \\
REFSOLVER & 2.8 & 14/30 & 1.00 & 170.0 & \textbf{$<10^{-4}$} & \textbf{$<10^{-4}$} & \textbf{$<10^{-4}$} \\
Recurrent PPO & 2.5 & 22/30 & 0.20 & 178.5 & \textbf{$<10^{-4}$} & \textbf{0.008} & \textbf{$<10^{-4}$} \\
\midrule
Ungated anchors & 8.70 & 30/30 & 0.00 & 89.4 & \textbf{0.0038} & 1.000 & 0.390 \\
Frozen GNG & 1.97 & 30/30 & 0.00 & 98.2 & \textbf{$<10^{-4}$} & 1.000 & 0.060 \\
Fixed prior & 2.4 & 18/30 & 0.43 & 152.0 & \textbf{$<10^{-4}$} & \textbf{$<10^{-4}$} & \textbf{$<10^{-4}$} \\
No parity & 6.50 & 10/30 & 1.43 & 129.1 & \textbf{$<10^{-4}$} & \textbf{$1.4{\times}10^{-8}$} & \textbf{$<10^{-4}$} \\
MCTS-only & 2.73 & 28/30 & 0.07 & 193.6 & \textbf{$<10^{-4}$} & 0.500 & \textbf{$<10^{-4}$} \\
\bottomrule
\end{tabular}
}
\end{table}

\FloatBarrier

\subsection{Nominal Online Performance}
\label{sec:nominal_performance}

\revRefFour{Table~\ref{tab:nominal_complete} reports the complete nominal results across all three scenarios, with reward-trained DRL and online POMDP solver rows shown consistently for Static-2D, Dynamic-2D, and 3D Navigation. Table~\ref{tab:nominal_stats_main} reports strongest-baseline confidence intervals and per-comparison $p$-values for the within-table comparisons. Together, these tables support three observations.}

\begin{table}[!t]
\centering
\scriptsize
\caption{\revRefFour{Compact nominal statistical summary. \revRefThree{CI denotes confidence interval.} The $p$-value column reports per-comparison $p$-values for the REBA versus CPOMDP comparisons in this table and is interpreted together with the effect sizes and confidence intervals.}}
\label{tab:nominal_stats_main}
\setlength{\tabcolsep}{2.0pt}
\renewcommand{\arraystretch}{1.15}
\resizebox{\columnwidth}{!}{%
\begin{tabular}{@{}lllll@{}}
\toprule
Scenario & Metric & REBA [95\% CI] & CPOMDP [95\% CI] & $p$ \\
\midrule
Static 2D & Cycles & $11.0$ [$9.97$, $12.03$] & $9.4$ [$8.18$, $10.62$] & $0.043$ \\
Static 2D & Success & $19/20$ [$0.76$, $0.99$] & $18/20$ [$0.70$, $0.97$] & $1.000$ \\
Static 2D & TTFC & $34$ [$28.22$, $39.78$] & $45$ [$39.53$, $50.47$] & $0.006$ \\
Dynamic 2D & Cycles & $11.2$ [$10.22$, $12.18$] & $7.6$ [$6.48$, $8.72$] & $1.19\!\times\!10^{-5}$ \\
Dynamic 2D & Success & $19/20$ [$0.76$, $0.99$] & $16/20$ [$0.58$, $0.92$] & $0.342$ \\
Dynamic 2D & TTFC & $42$ [$28.50$, $55.50$] & $57$ [$36.22$, $77.78$] & $0.210$ \\
3D navigation & Success & $19/20$ [$0.76$, $0.99$] & $15/20$ [$0.53$, $0.89$] & $0.182$ \\
3D navigation & TTFC & $62$ [$52.36$, $71.64$] & $97$ [$72.63$, $121.37$] & $0.010$ \\
\bottomrule
\end{tabular}
}
\par\vspace{0.25em}
\footnotesize\revRefFour{Note: REBA is compared with CPOMDP, the strongest online POMDP baseline in each nominal scenario. Continuous metrics use 95\% confidence intervals; success counts use Wilson 95\% confidence intervals.}
\end{table}

\revRefFour{First, REBA's advantage over the strongest online POMDP baseline (CPOMDP) grows with scenario difficulty: the cycle gain widens from $+17.0\%$ in Static-2D to $+47.4\%$ in Dynamic-2D, where mobile obstacles and risk-map updates stress local search, while the relative success gain in 3D navigation reaches $+26.7\%$ ($95\%$ vs.\ $75\%$). Second, in the scenarios where sample sizes permit, the TTFC comparisons have small per-comparison $p$-values ($p=0.006$ Static, $p=0.010$ 3D), supporting the measured reading that symbolic guidance accelerates task completion alongside final success. Third, the reward-trained DRL rows show the limit of dense proxy rewards in these logical tasks: TRPO ties Static-2D success but trails in cycles ($7.7$ vs.\ $11.0$), and the best Dynamic-2D and 3D DRL rows reach $55\%$ and $35\%$ success, respectively. Fig.~\ref{Expe_results_1} provides compact 2D and 3D trajectory views; enlarged 3D trajectory visualizations remain in Sec.~\ref{supp:drone_3d_trajectories}.}

\revRefFour{The nominal gaps are consistent with the role of revelation-controlled symbolic guidance in cyclic objectives: REBA keeps the $\omega$-regular specification explicit through parity synthesis on the product MDP, while reward-based baselines rely on dense surrogate rewards whose relationship to logical satisfaction is indirect. The full reward definitions, hyperparameter protocols, and DRL training curves are reported in Secs.~\ref{supp:reward_rationale} to~\ref{supp:drl_curves}.}

\FloatBarrier

\subsection{Robustness Beyond Nominal Settings}
\label{sec:robustness_main}

\revRefFour{\textbf{Stress tests with fixed hyperparameters.} Table~\ref{tab:stress_main} summarizes stress tests under observation and dynamics shifts, with all hyperparameters left at their nominal values. Two patterns are notable. First, REBA's margins over CPOMDP \emph{grow} under stress ($+25.0\%$ to $+70.5\%$) relative to the nominal setting ($+17.0\%$ to $+47.4\%$), which is the opposite of what overfitting to nominal conditions would produce. Second, the advantage is larger under observation stress than under dynamics stress in both scenarios, consistent with the revelation gate's reliance on belief concentration: observation degradation directly challenges the entropy test, while dynamics shifts primarily affect model fidelity (discussed further in Sec.~\ref{sec:discussion_robustness}). Complete standard deviations and TTFC values are in Tables~\ref{tab:robust_static_2d} and~\ref{tab:robust_dynamic_2d}.}

\begin{table}[!ht]
\centering
\caption{\revRefFour{Compact fixed-configuration stress-test summary (20 trials per setting). REBA and CPOMDP cycle counts are reported as mean $\pm$ SD. The baseline is CPOMDP in all rows; full TTFC values are in Tables~\ref{tab:robust_static_2d} and~\ref{tab:robust_dynamic_2d}.}}
\label{tab:stress_main}
\footnotesize
\setlength{\tabcolsep}{2.5pt}
\renewcommand{\arraystretch}{1.10}
\begin{tabular}{@{}lcccc@{}}
\toprule
Setting & REBA cycles & CPOMDP cycles & Margin & REBA success \\
\midrule
Static Set-Obs & $\mathbf{9.0\pm2.0}$ & $7.0\pm2.2$ & \textbf{+28.6\%} & 85\% \\
Static Set-Dyn & $\mathbf{10.0\pm1.8}$ & $8.0\pm2.1$ & \textbf{+25.0\%} & 90\% \\
Dynamic Set-Obs & $\mathbf{7.5\pm2.6}$ & $4.4\pm2.0$ & \textbf{+70.5\%} & 80\% \\
Dynamic Set-Dyn & $\mathbf{8.2\pm2.2}$ & $5.8\pm2.2$ & \textbf{+41.4\%} & 90\% \\
\bottomrule
\end{tabular}
\end{table}

\FloatBarrier

\begin{figure*}[!t]
\centering
\includegraphics[width=0.86\textwidth]{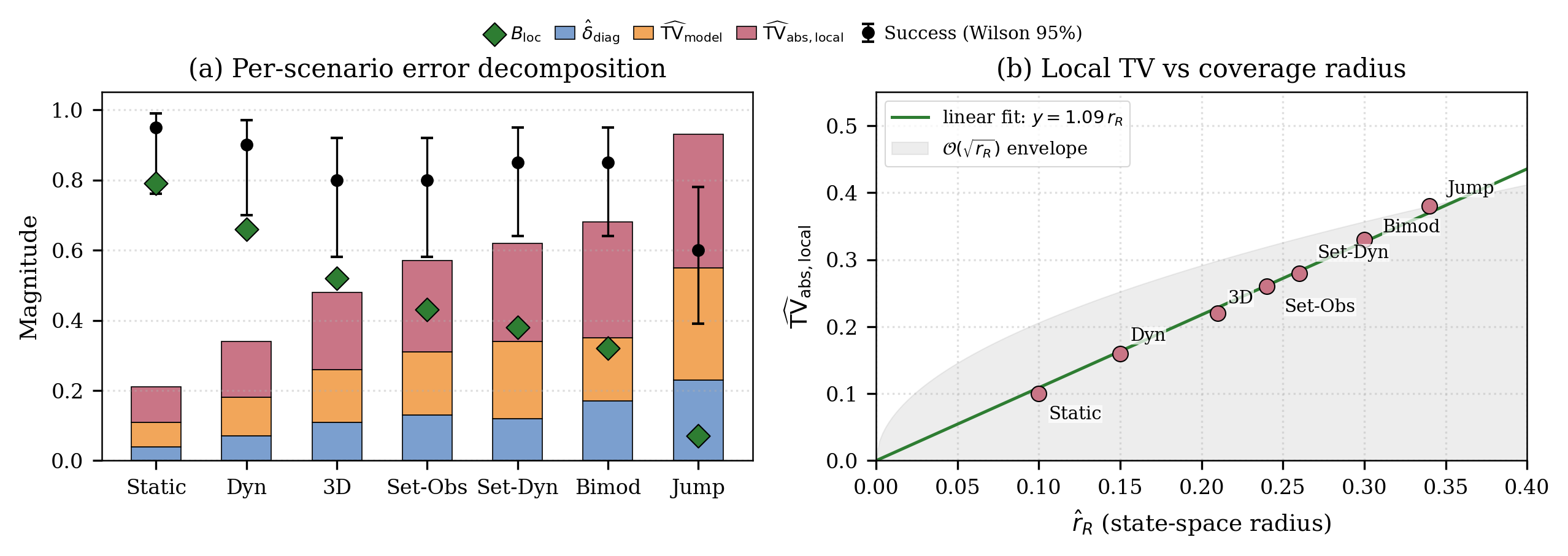}
\caption{\revRefFour{Empirical Theorem~\ref{thm:performance_guarantee} error decomposition. (a) Stacked measured local terms $\hat\delta_{\rm diag}$, $\widehat{\mathrm{TV}}_{\rm model}$, and $\widehat{\mathrm{TV}}_{\rm abs,local}$ over seven aggregated categories formed from the nine measured settings in Table~\ref{tab:supp_thm3_error_proxies}; Set-Obs and Set-Dyn each aggregate Static-2D and Dynamic-2D stress rows, and Jump denotes the deliberate heavy-tailed-noise and abrupt-dynamics stress case. Green diamonds show $B_{\rm loc}$ and black circles show empirical success with Wilson 95\% intervals. (b) Total-variation proxy for local abstraction versus the measured coverage radius $\hat r_R$; the green line is a least-squares fit through the origin and the gray envelope shows the normalized $\mathcal{O}(\sqrt{\hat r_R})$ trend. Table~\ref{tab:supp_thm3_error_proxies} reports all nine measured settings, including $B_{\rm wc}$ and the worst case proxy terms.}}
\label{fig:thm3_decomposition_main}
\end{figure*}

\section{\texorpdfstring{\revRefFour{Discussion}}{Discussion}}
\label{sec:discussion}

\revRc{The preceding experiments test whether the revelation-to-guidance mechanism behaves as intended. Reliable revelations control which beliefs become symbolic states, adaptive memory supports recurrent task progress, learned abstract dynamics estimate how those states evolve, and parity feedback returns the long-horizon objective to MCTS. This section examines the operating regime of that mechanism through measured error terms, assumption diagnostics, boundary behavior, and computational cost.}

\revRefFour{The mechanism-attribution results in Sec.~\ref{sec:mechanism_attribution} clarify how the revelation-to-guidance loop relates to the error terms in Theorem~\ref{thm:performance_guarantee}. The revelation gate and GNG determine whether admitted symbolic states are trustworthy and sufficiently covered, thereby controlling the abstraction error $\varepsilon_{\rm abs}$. The learned transition kernel with Dirichlet smoothing determines whether the finite memory predicts progress accurately; the Fixed prior ablation raises the empirical model-TV proxy to levels associated with $18/30$ success, illustrating the role of model error $\varepsilon_{\rm model}$. The parity feedback on the product MDP supplies the objective-level signal that local search lacks; the No parity ablation reduces success to $10/30$, showing that symbolic progress feedback is a dominant safety driver. The MCTS-only row gives the unguided reference for the complete revelation-to-guidance loop. The following subsections examine the measured magnitude of each error term, the assumptions that govern reliability, and the computational cost of maintaining this loop online.}

\subsection{\texorpdfstring{\revRefFour{Quantitative Magnitude of Performance Bounds}}{Quantitative Magnitude of Performance Bounds}}
\label{sec:discussion_bounds}

\revRefFour{Theorem~\ref{thm:performance_guarantee} decomposes the satisfaction gap into learning, model, and abstraction error terms. Fig.~\ref{fig:thm3_decomposition_main} reports measured local proxies for these terms across seven aggregated diagnostic categories, with the full nine measured settings listed in Table~\ref{tab:supp_thm3_error_proxies}; the aggregation keeps the figure readable by combining the Static-2D and Dynamic-2D rows within Set-Obs and Set-Dyn, while preserving the nominal, bimodal-prior, and jump/noise stress cases as separate categories. Two observations carry practical implications. First, the local abstraction total-variation proxy $\widehat{\mathrm{TV}}_{\rm abs,local}$ is approximately proportional to the measured coverage radius $\hat r_R$, with an empirical ratio in $[1.00, 1.12]$, consistent with the $\mathcal{O}(\sqrt{\varepsilon_{\rm map}})$ scaling predicted by the theorem. This means that the coverage radius, which is directly monitored during online operation, serves as a practical proxy for abstraction quality: maintaining a small coverage radius keeps the abstraction-error term controlled. Second, the worst case diagnostic $B_{\rm wc}$ shows the conservative scale of the theorem-structured certificate: the nominal values are $0.62$, $0.49$, and $0.35$ for Static-2D, Dynamic-2D, and 3D-Navigation, respectively, and the stress and diagnostic rows range from $0.02$ to $0.36$. The local diagnostic score $B_{\rm loc}$ ranges from $0.07$ to $0.79$ and remains below the empirical success rate across all nine measured settings, with the smallest value in the deliberate heavy-tailed-noise and abrupt-dynamics stress case, where empirical success is also lowest ($12/20$). The diagnostic terms thus co-vary with task difficulty in the expected direction, providing a measurable signal for when symbolic guidance loses reliability.}

\subsection{\texorpdfstring{\revRefFour{Robustness to Assumption Stress}}{Robustness to Assumption Stress}}
\label{sec:discussion_robustness}
\label{sec:limitations}

\revRc{The revelation gate is conservative by design: it admits a belief into the symbolic memory only when the entropy and information-gain tests are jointly satisfied. This conservatism defines the behavior at the aliasing boundary. Under severe sensor aliasing or persistent symmetry, the posterior remains multimodal and the gate withholds anchors; symbolic guidance is therefore withheld, and the planner behavior approaches the unguided MCTS reference until ambiguity is resolved. To isolate this boundary, Sec.~\ref{supp:particle_stability} uses a forced mirror-alias pilot, a seven-seed diagnostic whose observation map preserves the sign ambiguity in the latent $x$ branch. In that diagnostic, Full REBA reports no accepted anchors and consequently no GNG or \revRefThree{Belief Automaton} updates; task outcomes approach MCTS-only levels. This boundary behavior ties symbolic structure to reliable beliefs. The increasing dip rejection rates and decreasing effective sample size ($\mathrm{ESS}/N_p$) values reported in Table~\ref{tab:supp_assumption_diagnostics} and Fig.~\ref{fig:supp_assumption_diagnostics} under deliberate stress (bimodal prior, heavy-tailed noise) provide a quantitative signature of this boundary.}

\revRefFour{The second stress mode, model mismatch under heavy-tailed noise and abrupt dynamics changes, operates through a different mechanism. The Jump/noise row reports $\widehat{\mathrm{TV}}_{\rm model}=0.32$ and $\mathrm{ESS}/N_p=0.24$, while the planner kernel remains the nominal Gaussian model; the H3 boundary (non-Lipschitz transition discontinuities) is a distinct regime. The soft-bias integration of the abstract policy into MCTS allows online replanning to absorb part of the model mismatch, producing gradual degradation, and the diagnostic terms identify the boundary where symbolic guidance becomes less reliable.}

\subsection{\texorpdfstring{\revRefFour{Computational Complexity and Scalability}}{Computational Complexity and Scalability}}
\label{sec:discussion_scaling}

\begin{figure}[!t]
\centering
\includegraphics[width=\columnwidth]{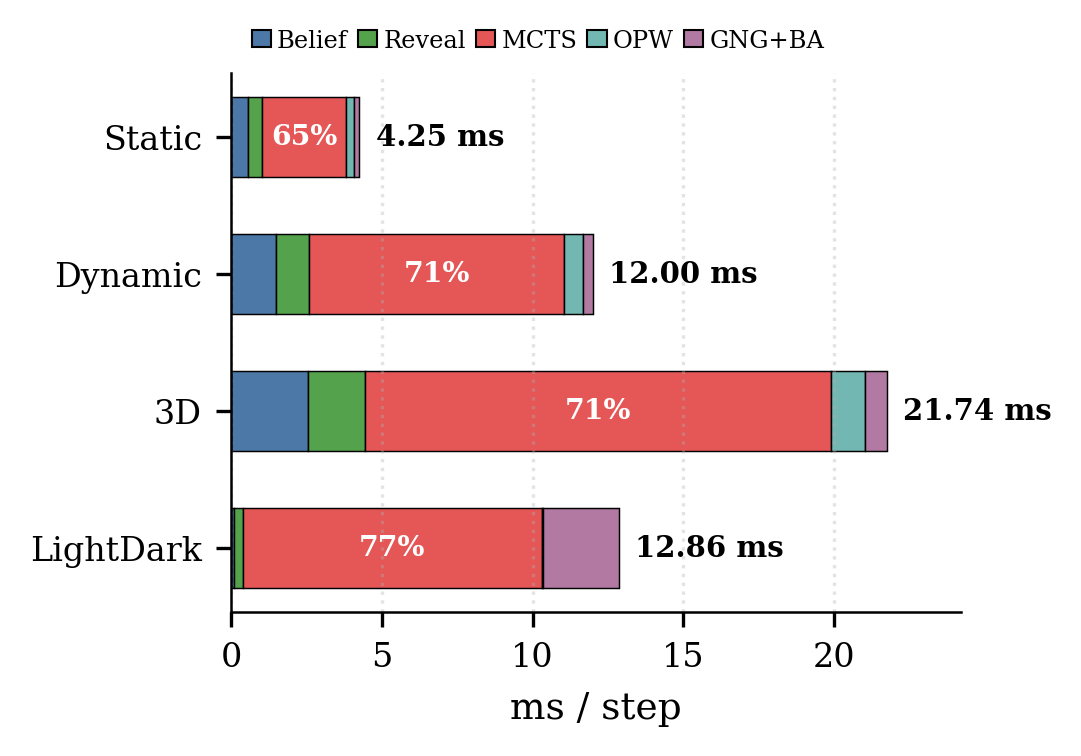}
\caption{\revRefFour{Single-column view of REBA's mean online cost per step, grouped from the per-component measurements in Table~\ref{tab:supp_complexity}. OPW denotes observation progressive widening.}}
\label{fig:per_step_cost_main}
\end{figure}

\revRefFour{\textbf{Computational cost breakdown.} Fig.~\ref{fig:per_step_cost_main} and Table~\ref{tab:supp_complexity} show that the symbolic-layer components (reveal statistic, GNG update, \revRefThree{Belief Automaton} update, and product rebuild) account for $11.9\%$ to $15.1\%$ of per-step online planning time, while MCTS simulation accounts for $65.4\%$ to $71.1\%$. Total online costs are $4.25\pm0.52$ ms (Static-2D), $12.00\pm1.15$ ms (Dynamic-2D), $21.74\pm2.20$ ms (3D-Navigation), and $12.86$ ms (Continuous LightDark Full REBA). The practical implication is that adding $\omega$-regular symbolic guidance to an existing MCTS planner costs roughly $12\%$ to $15\%$ overhead, a modest computational premium relative to the mechanism gains demonstrated in Sec.~\ref{sec:mechanism_attribution}.}

\revRefFour{\textbf{Runtime and statistical scaling.} At fixed $N_p$ and $N_{\rm sim}$, the measured per-component runtime scales roughly linearly with state dimension $d_S$ (Sec.~\ref{supp:complexity}), so the implementation-level cost remains predictable in the tested range. The harder scaling constraint comes from belief statistics: the consistency requirement for the weighted SMC and KDE implementation underlying Theorem~\ref{thm:robust_revelation}, together with the $(2\pi e)^{d_S}$ factor in the Gaussian volume bound, implies more conservative reveal thresholds and larger particle budgets in higher dimensions. The current implementation is therefore most appropriate for the low-to-moderate dimensional continuous state spaces studied here; extending the approach to higher-dimensional systems, where the Gaussian-like belief condition may itself become restrictive, calls for learned belief representations or neural generative belief models beyond the current particle filter.}

\section{Conclusion}
\label{sec:conclusion}

\revRefFour{This paper introduced REBA, an event-driven framework that bridges continuous belief-space planning and finite-state symbolic reasoning under $\omega$-regular specifications through online certification of revelation events. Information-theoretic gates extract reliable anchors from noisy beliefs, and these anchors enable the online construction of a finite Belief Automaton for parity policy synthesis without a predefined discrete abstraction. The synthesized parity policy provides symbolic guidance to MCTS, steering online search beyond its local horizon. We characterize its reliability through a learning, model, and abstraction error decomposition for the underlying continuous POMDP. Empirically, anchor certification, adaptive memory growth, learned abstract dynamics, and parity feedback each contribute materially to performance, yielding primary metric gains of +17.0\% to +47.4\% over the strongest online baseline with 11.9\% to 15.1\% symbolic-layer overhead.}

\revRefFour{The operating envelope of REBA is governed by belief quality, and the certification principle makes this boundary explicit rather than silent. When observations concentrate the posterior, revelation events are certified and symbolic guidance is active; under persistent aliasing or heavy-tailed noise, the gates withhold anchors and the planner degrades gracefully toward unguided MCTS until ambiguity resolves. This conservatism defines the method's scope: REBA is most effective when stable anchors can be certified reliably from the belief stream. Extending the envelope calls for richer belief representations, including learned or neural generative belief models, and for physical deployments under persistent aliasing and realistic sensing failures.}


\bibliographystyle{IEEEtran}
\bibliography{refs}

\hypersetup{pageanchor=true}
\REBASavedBibliographystyle{IEEEtran}
\REBASavedBibliography{refs}

\clearpage
\hypersetup{pageanchor=false}


\setcounter{page}{1}
\setcounter{section}{0}
\setcounter{subsection}{0}
\setcounter{subsubsection}{0}
\setcounter{table}{0}
\setcounter{figure}{0}
\setcounter{equation}{0}
\setcounter{theorem}{0}
\setcounter{lemma}{0}
\setcounter{corollary}{0}
\setcounter{definition}{0}
\setcounter{assumption}{0}
\setcounter{remark}{0}
\setcounter{proposition}{0}
\setcounter{paragraph}{0}
\setcounter{footnote}{0}

\renewcommand{\thesection}{S\arabic{section}}
\renewcommand{\thesubsection}{\thesection.\arabic{subsection}}
\renewcommand{\thesubsubsection}{\thesubsection.\arabic{subsubsection}}
\renewcommand{\thetable}{S\arabic{table}}
\renewcommand{\thefigure}{S\arabic{figure}}
\renewcommand{\theequation}{S\arabic{equation}}
\renewcommand{\thetheorem}{S\arabic{theorem}}
\renewcommand{\thelemma}{S\arabic{lemma}}
\renewcommand{\thecorollary}{S\arabic{corollary}}
\renewcommand{\thedefinition}{S\arabic{definition}}
\renewcommand{\theassumption}{S\arabic{assumption}}
\renewcommand{\theremark}{S\arabic{remark}}
\renewcommand{\theproposition}{S\arabic{proposition}}

\renewcommand{\theHsection}{S\arabic{section}}
\renewcommand{\theHsubsection}{\theHsection.\arabic{subsection}}
\renewcommand{\theHsubsubsection}{\theHsubsection.\arabic{subsubsection}}
\renewcommand{\theHtable}{S\arabic{table}}
\renewcommand{\theHfigure}{S\arabic{figure}}
\renewcommand{\theHequation}{S\arabic{equation}}
\renewcommand{\theHtheorem}{S\arabic{theorem}}
\renewcommand{\theHlemma}{S\arabic{lemma}}
\renewcommand{\theHcorollary}{S\arabic{corollary}}
\renewcommand{\theHdefinition}{S\arabic{definition}}
\renewcommand{\theHassumption}{S\arabic{assumption}}
\renewcommand{\theHremark}{S\arabic{remark}}
\renewcommand{\theHproposition}{S\arabic{proposition}}
\providecommand{\theHpage}{\arabic{page}}
\renewcommand{\theHpage}{S\arabic{page}}
\renewcommand{\theHparagraph}{S\theHsubsubsection.\arabic{paragraph}}
\renewcommand{\theHfootnote}{S\arabic{footnote}}
\makeatletter
\renewcommand{\thesectiondis}{\thesection.}
\renewcommand{\thesubsectiondis}{\thesubsection.}
\renewcommand{\thesubsubsectiondis}{\thesubsubsection.}
\makeatother

\twocolumn[
\begin{center}
{\LARGE\bfseries Supplementary Material for}\par\vspace{0.6em}
{\Large\bfseries ``REBA: A Revealed Belief Automaton Framework\\for Online Planning in Continuous POMDPs''}
\end{center}
\vspace{1.2em}
]

\section{Proofs for Online Revelation via Belief Concentration}
\label{supp:proofs_revelation}

We score belief concentration with \emph{Boers' entropy} $H_{\text{Boers}}(\bbel)$ and pair this concentration score with the gain and drop checks used by the implemented revelation rule. Its suitability follows from its link to the KL divergence. With posterior $\bbel(x' \mid h,a,z)$ and mixture prior
\begin{equation}
    C(x' \mid h,a)=\int p_T(x' \mid x,a)\, \bbel_h(x)\,\mathrm{d}x ,
\end{equation}
Lemma~\ref{lem:boers_identity} gives the decomposition below.

\begin{lemma}[Boers' entropy identity under mild regularity]
\label{lem:boers_identity}
Let $(\mathcal{X},\mathcal{B},\lambda)$ be a measurable space where $\lambda$ is a reference measure (e.g., Lebesgue on $\mathbb{R}^{d_S}$). Assume the posterior belief $\bbel$ and the mixture prior $C$ admit densities $b(\cdot)$ and $C(\cdot)$ with respect to $\lambda$, and $\bbel\ll C$. Define
\begin{equation}
    H_{\text{Boers}}(\bbel)\;:=\;H_s(b)\;-\;\mathbb{E}_{x'\sim b}\big[\log C(x')\big],
\end{equation}
where $H_s(b):=-\!\int b(x')\log b(x')\,\mathrm{d}\lambda(x')$ is the differential entropy. Then
\begin{equation}
\label{supp:Lam1:eq1}
    H_{\text{Boers}}(\bbel) \;=\; 2\,H_s(b) \;+\; D_{\mathrm{KL}}\!\left(b \,\Vert\, C\right).
\end{equation}
\end{lemma}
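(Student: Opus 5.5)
The identity is a purely algebraic rearrangement of integrals. The plan is to write the cross term $-\mathbb{E}_{x'\sim b}[\log C(x')]$ as the sum of a KL divergence and an entropy, and then substitute it into the definition of $H_{\text{Boers}}$. The one point needing care is integrability: we must avoid splitting an integral into an expression of the form $\infty-\infty$. The ``mild regularity'' in the statement will be made explicit as that integrability condition.

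\textbf{Step 1 (pointwise decomposition).} On the set $\{b>0\}$, where $b$ is concentrated, the absolute continuity $\bbel\ll C$ gives $C>0$ $\lambda$-a.e. The ratio $b/C$ is then well defined there, and pointwise we have
\begin{equation*}
-\log C(x') \;=\; \log\frac{b(x')}{C(x')} \;-\; \log b(x').
\end{equation*}

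\textbf{Step 2 (integrate against $b$).} Multiply the display by $b$ and integrate with respect to $\lambda$. By definition, $\int b\log(b/C)\,\mathrm{d}\lambda = D_{\mathrm{KL}}(b\Vert C)$, and $-\int b\log b\,\mathrm{d}\lambda = H_s(b)$. This yields
\begin{equation*}
-\mathbb{E}_{x'\sim b}[\log C(x')] \;=\; D_{\mathrm{KL}}(b\Vert C) + H_s(b).
\end{equation*}
Adding $H_s(b)$ to both sides gives exactly \eqref{supp:Lam1:eq1}.

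\textbf{Step 3 (the main obstacle: justifying linearity).} Splitting $\int b(\log(b/C)-\log b)\,\mathrm{d}\lambda$ into two integrals is only legitimate when at least one of them is finite. I would take the regularity hypothesis to be $\int b\,|\log b|\,\mathrm{d}\lambda<\infty$, so that $H_s(b)$ is finite.

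Recall that $D_{\mathrm{KL}}(b\Vert C)$ is always well defined in $[0,\infty]$, since its negative part is integrable by the usual $x\log x$ argument. Linearity of the integral for functions whose negative parts are integrable then shows that both sides are equal as extended reals: they are both finite or both $+\infty$. In particular $H_{\text{Boers}}(\bbel)$ is well defined in $(-\infty,\infty]$.

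If instead one assumes $\mathbb{E}_b|\log C|<\infty$, for example $C$ bounded and bounded away from zero on the support of $b$ as in Theorem~\ref{thm:robust_revelation}, the same argument runs with the roles of the two integrals exchanged. I would remark on this alternative so that the lemma covers the setting in which Theorem~\ref{thm:robust_revelation} invokes it.
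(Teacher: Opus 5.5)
Your proposal is correct and follows the paper's route. The paper's proof invokes the cross-entropy decomposition $H_\times(b,C)=H_s(b)+D_{\mathrm{KL}}(b\Vert C)$ and observes $H_{\text{Boers}}(\bbel)=H_s(b)+H_\times(b,C)$, which is exactly your Steps 1--2, while your Step 3 supplies integrability bookkeeping the paper leaves implicit (finite $H_s(b)$, and KL well defined in $[0,\infty]$).

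One caveat concerns your closing alternative. If you assume only $\mathbb{E}_b|\log C|<\infty$, the exchanged argument gives $H_s(b)\in[-\infty,\infty)$, with $D_{\mathrm{KL}}(b\Vert C)=+\infty$ exactly when $H_s(b)=-\infty$. The right-hand side can then be the indeterminate $-\infty+\infty$, so in that regime you still need $H_s(b)>-\infty$, equivalently $D_{\mathrm{KL}}(b\Vert C)<\infty$.
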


\begin{proof}
The identity follows directly from the standard decomposition of cross-entropy, $H_{\times}(b,C)=H_s(b)+D_{\mathrm{KL}}(\bbel\Vert C)$, by noting that $H_{\text{Boers}}(\bbel)=H_s(b)+H_{\times}(b,C)$.
\end{proof}

Absolute continuity $\bbel\!\ll\!C$ ensures $D_{\mathrm{KL}}(\bbel\Vert C)\!<\!\infty$ (otherwise $H_{\text{Boers}}=+\infty$). Here, $H_s(b)$ measures posterior spread and $D_{\mathrm{KL}}(\bbel\Vert C)$ is the information gain from $C$ to $b$. Thus $H_{\text{Boers}}=2H_s+D_{\mathrm{KL}}$ jointly captures \emph{uncertainty} and \emph{informativeness}; we also enforce $D_{\mathrm{KL}}\!\ge\!\kappa_{\mathrm{gain}}$.

\begin{corollary}[Gaussian posterior gives a volume bound]
\label{cor:gaussian_volume}
Let $b=\mathcal{N}(\mu,\Sigma)$ be the density of a non-degenerate Gaussian belief on $\mathbb{R}^{d_S}$, with covariance matrix $\Sigma\succ 0$. Then $H_s(b) = \tfrac{1}{2}\log\!\big((2\pi e)^{d_S}\det\Sigma\big)$, hence $H_{\text{Boers}}(\bbel) = \log\!\big((2\pi e)^{d_S}\det\Sigma\big) + D_{\mathrm{KL}}(\bbel\Vert C)$. Consequently, for any thresholds $\theta_{\mathrm{reveal}}\in\mathbb{R}$ and $\kappa_{\mathrm{gain}}\ge 0$, if $H_{\text{Boers}}(\bbel)\le\theta_{\mathrm{reveal}}$ and $D_{\mathrm{KL}}(\bbel\Vert C)\ge\kappa_{\mathrm{gain}}$, then
\begin{equation}
    \det\Sigma \;\le\; \frac{\exp\!\big(\theta_{\mathrm{reveal}}-\kappa_{\mathrm{gain}}\big)}{(2\pi e)^{d_S}}.
\end{equation}
\end{corollary}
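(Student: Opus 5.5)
The corollary is essentially a direct computation, so I would proceed in three short steps. First I would recall the closed form of the differential entropy of a non-degenerate Gaussian. Writing $b(x')=(2\pi)^{-d_S/2}(\det\Sigma)^{-1/2}\exp\!\big(-\tfrac12(x'-\mu)^\top\Sigma^{-1}(x'-\mu)\big)$, we have
\begin{equation}
    -\log b(x') = \tfrac{d_S}{2}\log(2\pi)+\tfrac12\log\det\Sigma+\tfrac12(x'-\mu)^\top\Sigma^{-1}(x'-\mu).
\end{equation}
Taking expectation under $b$, I would use $\mathbb{E}\big[(x'-\mu)^\top\Sigma^{-1}(x'-\mu)\big]=\mathrm{tr}(\Sigma^{-1}\Sigma)=d_S$. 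This yields $H_s(b)=\tfrac12\log\!\big((2\pi e)^{d_S}\det\Sigma\big)$. Here $\Sigma\succ0$ guarantees that the density exists and that $\det\Sigma>0$.

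Second, I would substitute this expression into the identity of Lemma~\ref{lem:boers_identity}, namely $H_{\text{Boers}}(\bbel)=2H_s(b)+D_{\mathrm{KL}}(b\Vert C)$. The hypotheses of the lemma are inherited: $b$ and $C$ have Lebesgue densities and $\bbel\ll C$. Doubling the Gaussian entropy immediately gives the stated expression $\log\!\big((2\pi e)^{d_S}\det\Sigma\big)+D_{\mathrm{KL}}(\bbel\Vert C)$.

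Third, for the volume bound I would combine the two gate inequalities. From $H_{\text{Boers}}(\bbel)\le\theta_{\mathrm{reveal}}$ we get $\log\!\big((2\pi e)^{d_S}\det\Sigma\big)\le\theta_{\mathrm{reveal}}-D_{\mathrm{KL}}(\bbel\Vert C)$. From $D_{\mathrm{KL}}(\bbel\Vert C)\ge\kappa_{\mathrm{gain}}$ this is at most $\theta_{\mathrm{reveal}}-\kappa_{\mathrm{gain}}$. Exponentiating, which is monotone, and dividing by $(2\pi e)^{d_S}>0$ gives the claim.

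The only point requiring care is finiteness. If $D_{\mathrm{KL}}(\bbel\Vert C)=+\infty$, then $H_{\text{Boers}}=+\infty$, so the hypothesis $H_{\text{Boers}}\le\theta_{\mathrm{reveal}}$ excludes this case. That lets us subtract the KL term legitimately. I expect no genuine obstacle beyond noting this.
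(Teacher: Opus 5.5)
Your proposal is correct and follows the paper's proof: compute the Gaussian entropy, substitute it into Lemma~\ref{lem:boers_identity}, chain the two gate inequalities, and exponentiate. Beyond the paper, you derive $H_s(b)$ explicitly via $\mathrm{tr}(\Sigma^{-1}\Sigma)=d_S$, and you note that $H_{\text{Boers}}\le\theta_{\mathrm{reveal}}$ rules out $D_{\mathrm{KL}}=+\infty$; the paper leaves both implicit.
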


\begin{proof}
For a non-degenerate Gaussian belief with density $b=\mathcal{N}(\mu,\Sigma)$ on $\mathbb{R}^{d_S}$, the differential entropy is $H_s(b) = \tfrac{1}{2}\log((2\pi e)^{d_S}\det\Sigma)$. Substituting into Lemma~\ref{lem:boers_identity} gives $H_{\text{Boers}}(\bbel) = \log((2\pi e)^{d_S}\det\Sigma)+D_{\mathrm{KL}}(\bbel\Vert C)$. Under the constraints $H_{\text{Boers}}(\bbel)\le\theta_{\mathrm{reveal}}$ and $D_{\mathrm{KL}}(\bbel\Vert C)\ge\kappa_{\mathrm{gain}}$, $\log((2\pi e)^{d_S}\det\Sigma) \le \theta_{\mathrm{reveal}}-\kappa_{\mathrm{gain}}$. Exponentiating yields the bound.
\end{proof}

\begin{theorem}[\revRc{Belief covariance volume bound from the revelation test}]
\label{supp:thm:robust_revelation}
Let $S\subset\mathbb{R}^{d_S}$ be compact. Assume the posterior belief $\bbel$ and the mixture prior $C$ admit densities w.r.t.\ Lebesgue measure, $\bbel\ll C$, and $C$ is continuous, bounded, and bounded away from zero on $S$. Let the true posterior be approximated by the Gaussian fit $\hat{\bbel}=\mathcal{N}(\mu_p,\widehat\Sigma_p)$ obtained from an SMC particle set of size $N_p$ produced by a consistent filter, and assume $\widehat\Sigma_p\to\Sigma_b$ in probability as $N_p\to\infty$. Let $\hat C$ be a KDE with bandwidth $h_{N_p}\to 0$ and $N_p h_{N_p}^{d_S}\to\infty$. For any $\theta_{\mathrm{reveal}}$, $\varepsilon_b>0$, and $\delta\in(0,1)$, there exists a sufficient particle count $N_p^\star=N_p^\star(\varepsilon_b,\delta)$ such that for all $N_p\ge N_p^\star$, if $\hat H_{\mathrm{Boers}}\le \theta_{\mathrm{reveal}}$, then with probability at least $1-\delta$,
\begin{equation}
\label{supp:lam2:eq1}
    \det(\Sigma_b)\ \le\ \frac{\exp(\theta_{\mathrm{reveal}})}{(2\pi e)^{d_S}}\ +\ \varepsilon_b.
\end{equation}
If one also enforces the implemented gain gate \revRefTwo{$D_{\mathrm{KL}}(\hat{\bbel}\Vert \hat C)\ge \kappa_{\mathrm{gain}}\ge 0$}, then with the same probability,
\begin{equation}
\label{supp:lam2:eq1tight}
    \det(\Sigma_b)\ \le\ \frac{\exp(\theta_{\mathrm{reveal}}-\kappa_{\mathrm{gain}})}{(2\pi e)^{d_S}}\ +\ \varepsilon_b.
\end{equation}
\end{theorem}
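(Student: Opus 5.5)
The plan is to work entirely with the computed score and push it to the true covariance through a continuity argument. The key observation is that the computed score satisfies a deterministic inequality that does not depend on any consistency assumption.

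\textbf{Step 1: deterministic bound on the fitted covariance.} Since $\hat{\bbel}=\mathcal{N}(\mu_p,\widehat\Sigma_p)$ is Gaussian, apply Lemma~\ref{lem:boers_identity} and Corollary~\ref{cor:gaussian_volume} with $\hat C$ in place of $C$. This gives
\[
\hat H_{\mathrm{Boers}}=\log\!\big((2\pi e)^{d_S}\det\widehat\Sigma_p\big)+D_{\mathrm{KL}}(\hat{\bbel}\Vert\hat C).
\]
Because $D_{\mathrm{KL}}\ge 0$, the event $\hat H_{\mathrm{Boers}}\le\theta_{\mathrm{reveal}}$ implies
\[
\det\widehat\Sigma_p\le \exp(\theta_{\mathrm{reveal}})/(2\pi e)^{d_S}.
\]
If the gain gate $D_{\mathrm{KL}}(\hat{\bbel}\Vert\hat C)\ge\kappa_{\mathrm{gain}}$ is also enforced, the same line gives the sharper bound with $\exp(\theta_{\mathrm{reveal}}-\kappa_{\mathrm{gain}})$. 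This step holds surely, for every realization of the particle set. The KDE $\hat C$ only needs to be a valid density, which a KDE with positive kernel is, so that $\hat{\bbel}\ll\hat C$. The KDE bandwidth conditions and the regularity of $C$ on $S$ are therefore needed only to make the score well defined and finite, and to connect it to its population analogue. They do not enter the covariance bound itself.

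\textbf{Step 2: transfer to $\Sigma_b$.} The map $\Sigma\mapsto\det\Sigma$ is continuous, so the continuous mapping theorem applied to $\widehat\Sigma_p\to\Sigma_b$ in probability gives $\det\widehat\Sigma_p\to\det\Sigma_b$ in probability. Hence for given $\varepsilon_b,\delta$ there is $N_p^\star$ such that for all $N_p\ge N_p^\star$,
\[
\mathbb{P}\big(|\det\widehat\Sigma_p-\det\Sigma_b|\le\varepsilon_b\big)\ge1-\delta.
\]
On this event, combined with Step 1,
\[
\det\Sigma_b\le\det\widehat\Sigma_p+\varepsilon_b\le \exp(\theta_{\mathrm{reveal}}[-\kappa_{\mathrm{gain}}])/(2\pi e)^{d_S}+\varepsilon_b,
\]
which is \eqref{supp:lam2:eq1}, and \eqref{supp:lam2:eq1tight} under the gain gate. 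Since $\Sigma_b$ is fixed, I will work with the one-sided event $\{\det\Sigma_b-\det\widehat\Sigma_p\le\varepsilon_b\}$, which suffices.

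\textbf{Main obstacle: the conditioning.} The phrase ``if the test returns \dots, then with probability at least $1-\delta$'' must be read carefully. Conditioning on the test outcome could in principle distort the probability. I will avoid conditioning altogether and prove the joint statement
\[
\mathbb{P}\big(\{\hat H_{\mathrm{Boers}}\le\theta_{\mathrm{reveal}}\}\cap\{\text{bound fails}\}\big)\le\delta.
\]
This follows because the failure set on the test event is contained in $\{|\det\widehat\Sigma_p-\det\Sigma_b|>\varepsilon_b\}$. I will also note that $N_p^\star$ depends only on $(\varepsilon_b,\delta)$ and not on $\theta_{\mathrm{reveal}}$. This matches the statement, since the thresholds enter only through the deterministic Step 1.
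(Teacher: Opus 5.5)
Your proposal is correct and follows the paper's proof essentially step for step. First, you obtain a deterministic bound on $\det\widehat\Sigma_p$ from $\hat H_{\mathrm{Boers}}\le\theta_{\mathrm{reveal}}$ and the nonnegativity of the KL term, tightened by $\kappa_{\mathrm{gain}}$ under the gain gate. Second, you use continuity of the determinant and $\widehat\Sigma_p\to\Sigma_b$ in probability to transfer that bound to $\det\Sigma_b$. Your explicit reading of the conclusion as a bound on $\mathbb{P}(\{\hat H_{\mathrm{Boers}}\le\theta_{\mathrm{reveal}}\}\cap\{\text{bound fails}\})$, rather than a conditional probability, is what the paper's ``combining'' step implicitly establishes, and it is the right way to make that step rigorous.
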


\begin{proof}
\textbf{Part A: Deterministic bound on $\det(\widehat\Sigma_p)$.} The condition $\hat H_{\text{Boers}} \le \theta_{\text{reveal}}$ and the non-negativity of the KL-divergence in Lemma~\ref{lem:boers_identity} together imply that $2 H_s(\hat{\bbel}) \le \hat H_{\text{Boers}} \le \theta_{\text{reveal}}$. Substituting the formula for the differential entropy of a Gaussian, $H_s(\hat{\bbel}) = \tfrac12 \log\!\big((2\pi e)^{d_S}\det(\widehat\Sigma_p)\big)$, and solving for $\det(\widehat\Sigma_p)$ directly yields the deterministic bound:
\begin{equation} \label{eq:proof_combined_A}
    \det(\widehat\Sigma_p) \;\le\; \frac{\exp(\theta_{\text{reveal}})}{(2\pi e)^{d_S}}.
\end{equation}

\textbf{Part B: From $\det(\widehat\Sigma_p)$ to $\det(\Sigma_b)$ in probability.} Under the theorem's stated consistency assumptions, the Gaussian-fit covariance $\widehat\Sigma_p$ converges in probability to the true covariance $\Sigma_b$. Since the determinant is continuous in the matrix entries, $\det(\widehat\Sigma_p)$ also converges in probability to $\det(\Sigma_b)$. Hence, for any $\varepsilon_b>0$ and $\delta\in(0,1)$, there exists a sufficient sample size $N_p^\star=N_p^\star(\varepsilon_b,\delta)$ such that for all $N_p \ge N_p^\star$:
\begin{equation} \label{eq:proof_combined_B}
    \mathbb{P}\!\left( \det(\Sigma_b) \le \det(\widehat\Sigma_p) + \varepsilon_b \right) \;\ge\; 1 - \delta.
\end{equation}
Combining this high-probability event with the deterministic bound \eqref{eq:proof_combined_A} gives the claimed result \eqref{supp:lam2:eq1}. \revRefTwo{The tighter bound \eqref{supp:lam2:eq1tight} follows by the same logic, since the estimated gain gate gives $2 H_s(\hat{\bbel}) + \kappa_{\mathrm{gain}} \le \hat H_{\mathrm{Boers}}\le \theta_{\mathrm{reveal}}$. An explicit finite-sample rate for $N_p^\star$ under the present weighted-SMC/KDE implementation would require stronger assumptions than those used in this theorem.}
\end{proof}

\section{Proofs for GNG-Based Abstraction and Coverage}
\label{supp:proofs_gng}

\begin{definition}[Core revealed regions and $\varepsilon_{\rm cov}$-coverage]
\label{def:core_regions_app}
Let $\mathcal{R}^\star=\{R_1^\star,\dots,R_M^\star\}\subset\mathcal{X}$ be a finite collection of disjoint, compact core regions. Given a graph $\mathcal{G}=(V,E)$ with prototypes $W(\cdot)$, we say $\mathcal{G}$ \emph{$\varepsilon_{\rm cov}$-covers} $\mathcal{R}^\star$ if for every $j$ there exists $v\in V$ such that $\mathrm{dist}(W(v),R_j^\star) := \inf_{x\in R_j^\star}\|W(v)-x\|_2 \le \varepsilon_{\rm cov}$.
\revRefTwo{This coverage notion is witness-level. It is sufficient for the coverage theorem below; Theorem~\ref{supp:thm:performance_guarantee} separately assumes the $M_{\rm met}$-norm map-radius condition for each revealed representative compared to its assigned prototype.}
\end{definition}

\begin{assumption}[Bounded input (H1)]
\label{assum:h1}
There exists $R_S>0$ such that $\|x_{\mathrm{rev}}^{(i)}\|_2\le R_S$ for all $i\ge 1$.
\end{assumption}

\begin{assumption}[$\tau$-step minorization over uncovered regions (H2)]
\label{assum:h2}
There exist $\tau\ge 1$ and $p_{\min}\in(0,1]$ such that whenever some region in $\mathcal{R}^\star$ is not yet $\varepsilon_{\rm cov}$-covered by the current abstraction, the next $\tau$ revealed vectors contain at least one sample in the currently uncovered set with probability at least $p_{\min}$.
\end{assumption}

\begin{assumption}[\revRefTwo{L1-Lipschitz transition regularity} (H3)]
\label{ass:lipschitz-kernel}
There exists $L_T>0$ such that for all $x,x'\in\mathcal{X}$ and $a\in\mathcal{A}$, $\bigl\| p_T(\cdot \mid x,a) - p_T(\cdot \mid x',a) \bigr\|_1 \le L_T \|x - x'\|_2$, where $\|\cdot\|_1$ denotes the $\ell_1$-norm of signed measures.
\end{assumption}

\begin{theorem}[Coverage under minorization]
\label{supp:thm:pac_coverage}
Under Assumptions~\ref{assum:h1} and~\ref{assum:h2}, \revRefTwo{and under a monotone witness abstraction that inserts a witness prototype when an uncovered core region is hit, preserves already inserted witnesses, and has node budget at least $M$,} for any $\varepsilon_{\rm cov}>0$ and $\delta\in(0,1)$, after processing
\begin{equation}
    \revRefTwoMath{T_{\mathrm{cover}} = \tau\left\lceil \frac{2(M+\log(1/\delta))}{p_{\min}} \right\rceil}
\end{equation}
revealed vectors, the abstraction $\mathcal{G}$ $\varepsilon_{\rm cov}$-covers all core regions in $\mathcal{R}^\star$ with probability at least $1-\delta$.
\end{theorem}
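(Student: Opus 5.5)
We reduce coverage to counting successes in a sequence of dominated Bernoulli trials, using (H2) blockwise and the monotone witness property to make progress irreversible. Partition the first $T_{\mathrm{cover}}$ revealed vectors into $K:=\lceil 2(M+\log(1/\delta))/p_{\min}\rceil$ consecutive blocks of length $\tau$. Let $U_k$ denote the number of core regions not yet $\varepsilon_{\rm cov}$-covered at the start of block $k$. Call block $k$ a \emph{success} if either $U_k=0$, or the block contains a revealed vector lying in the currently uncovered union. On a success with $U_k>0$, the hit vector lies in some uncovered region $R_j^\star$. The monotone witness abstraction then inserts a prototype inside $R_j^\star$, so $\mathrm{dist}(W(v),R_j^\star)=0\le\varepsilon_{\rm cov}$. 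Because witnesses are never removed or destroyed and the node budget is at least $M$, the insertion is always possible and $U_{k+1}\le U_k-1$. Hence $M$ successes in blocks with $U_k>0$ force full coverage. (H1) enters only to ensure the revealed stream and prototypes live in a bounded set, so the GNG updates are well defined; it plays no quantitative role.

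Next we set up a supermartingale comparison. Let $\mathcal{F}_k$ be the history up to the start of block $k$. By (H2), on $\{U_k>0\}$ we have $\mathbb{P}(\text{block } k \text{ success}\mid\mathcal{F}_k)\ge p_{\min}$, and on $\{U_k=0\}$ this probability is $1$. Writing $S_K$ for the number of successes among $K$ blocks, the standard coupling or martingale argument shows that $S_K$ stochastically dominates a $\mathrm{Binomial}(K,p_{\min})$ variable. Concretely, we would define $Y_k=\mathbf{1}[\text{success}]$ and use $\mathbb{E}[e^{-\lambda Y_k}\mid\mathcal{F}_k]\le 1-p_{\min}(1-e^{-\lambda})$, then iterate the tower property. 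The failure event $\{U_{K+1}>0\}$ is contained in $\{S_K\le M-1\}$, since every success before coverage reduces $U$.

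The final step is a tail bound, and it is the main obstacle: getting the constants to match the stated $T_{\mathrm{cover}}$. We apply the multiplicative Chernoff lower tail to the dominating binomial with mean $\mu=Kp_{\min}\ge 2(M+\log(1/\delta))$. This gives
\[
\mathbb{P}(S_K<M)\le\exp\!\big(-\mu(1-M/\mu)^2/2\big).
\]
Since $M/\mu\le 1/2$, this bound is only $\exp(-\mu/8)$, which is too weak. We will therefore use the exponential-moment bound directly with $\lambda=1$:
\[
\mathbb{P}(S_K\le M-1)\le e^{M}\big(1-p_{\min}(1-e^{-1})\big)^K\le\exp\!\big(M-(1-e^{-1})Kp_{\min}\big).
\]
Since $(1-e^{-1})\cdot 2\approx 1.26>1$, we have $(1-e^{-1})Kp_{\min}\ge M+\log(1/\delta)$. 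Hence $\mathbb{P}(\text{not covered})\le\delta$. If that constant check fails in some edge case, we would adjust $\lambda$, since any $\lambda$ with $2(1-e^{-\lambda})\ge\lambda$ suffices. Finally, monotonicity of witnesses extends the conclusion from $T=T_{\mathrm{cover}}$ to all $T\ge T_{\mathrm{cover}}$.
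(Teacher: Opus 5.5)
Your proof is correct, and its skeleton matches the paper's. Both use blocks of length $\tau$, a conditional success probability of at least $p_{\min}$ from (H2), and irreversible progress from the monotone witness property, so that $M$ successes force coverage. Both then compare with a $\mathrm{Binomial}(K,p_{\min})$ count with $\mu=Kp_{\min}\ge 2(M+\log(1/\delta))$.

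You diverge only at the tail step, and your stated reason for diverging is mistaken. The paper uses exactly the multiplicative Chernoff bound you wrote down, and that bound already suffices:
\[
\frac{(\mu-M)^2}{2\mu}=\frac{\mu}{2}-M+\frac{M^2}{2\mu}\;\ge\;\frac{\mu}{2}-M\;\ge\;\log(1/\delta).
\]
Your $\exp(-\mu/8)$ came from the lossy estimate $(1-M/\mu)^2\ge 1/4$, not from any weakness of the inequality.

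Your replacement, the exponential-moment bound at $\lambda=1$ iterated through the tower property, is equally valid because $2(1-e^{-1})>1$. It also has a real advantage. It works directly with the adapted indicators, and your convention of counting blocks with $U_k=0$ as automatic successes removes the need for the paper's informal coupling ``from below by i.i.d.\ Bernoulli variables while uncovered regions remain.''

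One correction to your fallback remark about other values of $\lambda$. For the argument to work uniformly in $M$ and $\delta$, you need $2(1-e^{-\lambda})\ge\max\{\lambda,1\}$, not just $2(1-e^{-\lambda})\ge\lambda$. The $\log(1/\delta)$ term must also be absorbed, and small $\lambda$ fails when $\delta$ is small. At $\lambda=1$ both conditions hold, so there is no edge case to patch.
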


\begin{proof}
Partition the revealed-vector stream into consecutive blocks of length $\tau$. Let $B:=\lceil T/\tau \rceil$. Define $Y_b\in\{0,1\}$ indicating whether block $b$ contains at least one revealed vector in the currently uncovered set. If at the start of block $b$ some region is uncovered, Assumption~\ref{assum:h2} gives $\mathbb{P}(Y_b=1 \mid \mathcal{F}_{b-1}) \ge p_{\min}$. Whenever $Y_b=1$, the monotone witness abstraction inserts a witness in at least one previously uncovered core region, and witness preservation keeps that region covered afterward. Thus $M$ successful blocks are sufficient to cover all $M$ regions. \revRefTwo{As before, couple $\{Y_b\}$ from below by i.i.d.\ variables $\tilde Y_b\sim\mathrm{Bernoulli}(p_{\min})$ while uncovered regions remain, so $S_B=\sum_{b=1}^{B}Y_b$ stochastically dominates $\tilde S_B=\sum_{b=1}^{B}\tilde Y_b$. For $B=\lceil2(M+\log(1/\delta))/p_{\min}\rceil$, the mean $\mu=p_{\min}B$ satisfies $\mu\ge2(M+\log(1/\delta))$. The multiplicative Chernoff lower-tail bound gives}
\[
\revRefTwoMath{\mathbb{P}(\tilde S_B<M)
\le \exp\!\left(-\frac{(\mu-M)^2}{2\mu}\right)
\le \exp(-\log(1/\delta))=\delta.}
\]
\revRefTwo{Stochastic domination then gives $\mathbb{P}(S_B<M)\le\delta$.} Hence $T=\tau B$ gives the stated $T_{\mathrm{cover}}$.
\end{proof}

\begin{remark}[A sufficient condition yielding an explicit $p_{\min}$]
\label{rem:h2_instantiation}
Let $R_{\mathrm{un}}:=\bigcup_{j\in\mathcal{J}_{\mathrm{un}}} R_j^\star$ be the uncovered set. Fix any abstract pair $(q,a)$ visited while uncovered regions remain. Assume that after $(q,a)$ has been visited $N$ times, the planner performs at least $K(N) := \lceil k_{\mathrm{o}} N^{\alpha_{\mathrm{o}}}\rceil$ new successor-expansion attempts (e.g., progressive widening), each with success probability $\ge \phi_{\mathrm{hit}}>0$ of producing a revealed successor that lands in $R_{\mathrm{un}}$. Then, by iterated conditioning,
\begin{equation}
    \mathbb{P}(\text{hit } R_{\mathrm{un}} \text{ within } K(N)\text{ attempts}) \;\ge\; 1-(1-\phi_{\mathrm{hit}})^{K(N)},
\end{equation}
so Assumption~\ref{assum:h2} holds with $p_{\min}\ge 1-(1-\phi_{\mathrm{hit}})^{K(N)}>0$.
\end{remark}

\section{Proofs for Performance Guarantees}
\label{supp:proofs_performance}

\begin{lemma}[Parity as accepting bottom-class reachability]
\label{lem:ParityAsReach}
Let $\mathcal{M}$ be a finite product MDP with a parity objective, and fix a stationary policy $\pi$. Let $G$ denote the union of accepting bottom recurrent classes under $\pi$. Then $\Pr^\pi(\text{parity satisfied}) = \Pr^\pi(\mathrm{reach}\, G)$.
\end{lemma}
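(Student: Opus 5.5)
Fixing the stationary policy $\pi$ turns the finite product MDP $\mathcal{M}$ into a finite Markov chain on $Q_\times$. The proof then rests on two standard facts about finite chains: almost-sure absorption into a bottom strongly connected component (BSCC), and the infinitely-visited set of an absorbed path being that whole component almost surely. From there, parity satisfaction becomes a property of the component alone.

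First, I would decompose the chain's state space into transient states and bottom recurrent classes $C_1,\dots,C_k$. For a finite chain, almost every path leaves the transient states after finitely many steps and stays forever in a single class $C_i$. Within $C_i$, irreducibility and finiteness make every state positive recurrent. A Borel--Cantelli / strong Markov argument then shows that, almost surely on the event of entering $C_i$, every state of $C_i$ is visited infinitely often, while transient states are visited only finitely often. Hence, up to a null set, the set $\mathrm{Inf}(\rho)$ of states visited infinitely often along a path $\rho$ equals exactly the class $C_i$ it enters.

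Second, I would read off parity from this. Since $F_\times$ depends only on states, the maximal priority seen infinitely often along $\rho$ equals $\max_{s\in C_i} F_\times(s)$ almost surely. I would call $C_i$ \emph{accepting} when this maximum is even, and take $G$ to be the union of the accepting classes, matching the lemma's definition. Then the parity event coincides, up to a null set, with the event that the path is eventually absorbed into an accepting class.

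Third, I would identify this absorption event with $\mathrm{reach}\,G$. Every bottom class is closed, so reaching any state of $G$ means being absorbed into the corresponding accepting class. Conversely, absorption into an accepting class trivially reaches $G$. Taking probabilities gives $\Pr^\pi(\text{parity satisfied})=\Pr^\pi(\mathrm{reach}\,G)$.

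The main obstacle is the almost-sure claim that $\mathrm{Inf}(\rho)=C_i$, and it is the only step that needs care with null sets. I would handle it as follows: for each pair $s,s'\in C_i$, the number of visits to $s$ is infinite almost surely. Each visit gives a fixed positive probability of reaching $s'$ within $|C_i|$ steps. A conditional Borel--Cantelli argument therefore forces infinitely many visits to $s'$, and a union over the finitely many pairs completes the claim.
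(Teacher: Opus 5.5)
Your proposal is correct and follows the same route as the paper's proof: fixing $\pi$ gives a finite Markov chain that is almost surely absorbed into a bottom recurrent class, and since priorities are state-based, parity is decided by that class, so satisfaction coincides with reaching the union $G$ of accepting classes. Your Borel--Cantelli argument for $\mathrm{Inf}(\rho)=C_i$ and your closedness argument for reaching $G$ spell out steps the paper states in one line; for the former, start from the pigeonhole fact that some state of $C_i$ is visited infinitely often, rather than assuming this for every $s$ at the outset.
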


\begin{proof}
Under a stationary policy, the finite-state process is a Markov chain that almost surely enters a bottom recurrent class. The parity condition depends only on priorities visited infinitely often, which is determined by the bottom class. Hence parity is satisfied if and only if the chain enters an accepting bottom class.
\end{proof}

\begin{lemma}[Stepwise maximal coupling]
\label{lem:MaxCoupling}
If $\sup_{x,a}\bigl\|P(\cdot\mid x,a)-\tilde P(\cdot\mid x,a)\bigr\|_1 \le \varepsilon$, then for any $(x,a)$ there exists a coupling $(Y,Z)$ with marginals $Y\sim P(\cdot\mid x,a)$ and $Z\sim \tilde P(\cdot\mid x,a)$ such that $\Pr(Y\neq Z \mid x,a)\le \tfrac12\varepsilon$.
\end{lemma}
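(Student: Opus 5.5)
The plan is the standard maximal-coupling construction from the coupling characterization of total variation, specialized to a single pair $(x,a)$. The supremum hypothesis is used only pointwise: for the fixed $(x,a)$ we have $\|P(\cdot\mid x,a)-\tilde P(\cdot\mid x,a)\|_1\le\varepsilon$. Write $\mu:=P(\cdot\mid x,a)$ and $\nu:=\tilde P(\cdot\mid x,a)$.

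Both measures are dominated by $\lambda:=\mu+\nu$, so we take densities $f=d\mu/d\lambda$ and $g=d\nu/d\lambda$. In the finite product MDP this is just counting measure on $Q_\times$. Set
\[
m:=\min(f,g), \qquad \theta:=\int m\,d\lambda = 1-\tfrac12\|\mu-\nu\|_1 .
\]
The identity for $\theta$ comes from $\min(f,g)=\tfrac12(f+g-|f-g|)$.

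The coupling is built as follows.
\begin{itemize}
\item With probability $\theta$, draw a common value $W\sim m/\theta$ and set $Y=Z=W$.
\item With probability $1-\theta$, draw $Y\sim (f-m)/(1-\theta)$ and, independently, $Z\sim (g-m)/(1-\theta)$.
\item The degenerate cases $\theta\in\{0,1\}$ are handled by dropping the empty branch.
\end{itemize}

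The next step checks the marginals. The law of $Y$ has density $m+(f-m)=f$, and similarly the law of $Z$ has density $g$. This gives $Y\sim\mu$ and $Z\sim\nu$.

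Then $\{Y\neq Z\}$ is contained in the second branch, so
\[
\Pr(Y\neq Z\mid x,a)\le 1-\theta=\tfrac12\|\mu-\nu\|_1\le\tfrac12\varepsilon .
\]
In fact $f-m$ and $g-m$ have disjoint supports, so the inequality is an equality. This is not needed for the lemma.

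The only delicate point is measurability of the construction when the state space is continuous, as for the lifted kernel $P_{\mathrm{lift}}$. Using densities with respect to a common dominating measure handles this. Everything else in the proof is routine normalization.
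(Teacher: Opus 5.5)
Your proposal is correct and is the same standard maximal-coupling argument the paper invokes. The paper's proof only cites the fact that any two distributions admit a coupling with mismatch probability $\tfrac12\|p-q\|_1$, whereas you write out the explicit construction via $\min(f,g)$ with respect to $\mu+\nu$ (one small precision: $\{Y\neq Z\}$ is an event because the diagonal is measurable, which holds for finite sets and $\mathbb{R}^{d_S}$, not because of the dominating measure, and this does not affect the argument).
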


\begin{proof}
Standard maximal coupling: any two distributions admit a coupling whose mismatch probability equals the total variation distance $\tfrac12\|p-q\|_1$.
\end{proof}

\begin{lemma}[Prefix error bound]
\label{lem:PrefixBound}
Fix a stationary policy $\pi$ and initial state $x_0$. Construct coupled trajectories $(X_t,\tilde X_t)_{t\ge 0}$ under $P$ and $\tilde P$ with the stepwise coupling of Lemma~\ref{lem:MaxCoupling}. Then for any $T\in\mathbb{N}$, $\Pr(\exists\, t\le T:\, X_t\neq \tilde X_t) \le T\varepsilon/2$. Letting $A_T=\{\tau_G\le T\}$ and $\tilde A_T=\{\tilde\tau_G\le T\}$, $|\Pr_{P}^{\pi}(A_T) - \Pr_{\tilde P}^{\pi}(\tilde A_T)| \le T\varepsilon/2$.
\end{lemma}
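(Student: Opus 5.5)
We build the coupled trajectories step by step and prove the bound by a union bound over the first $T$ steps. The two processes start from the same state, $X_0=\tilde X_0=x_0$, and both use the same stationary policy $\pi$. At each time $t$ we draw the action once: if $X_t=\tilde X_t$, both processes share it, $a_t\sim\pi(\cdot\mid X_t)$. We then draw $(X_{t+1},\tilde X_{t+1})$ from the maximal coupling of $P(\cdot\mid X_t,a_t)$ and $\tilde P(\cdot\mid X_t,a_t)$ given by Lemma~\ref{lem:MaxCoupling}. Once the trajectories have split, we continue each one independently under its own kernel. Either way, each trajectory marginally has the law of the $\pi$-controlled chain under its own kernel, so probabilities of events computed on the coupled space agree with $\Pr^\pi_P$ and $\Pr^\pi_{\tilde P}$.

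Next we bound the first disagreement time. Let $D_t:=\{X_s=\tilde X_s \text{ for all } s\le t\}$. On $D_t$ the two processes sit in the same state and take the same action. Lemma~\ref{lem:MaxCoupling} then gives, conditionally on the history,
\[
\Pr\bigl(X_{t+1}\neq\tilde X_{t+1}\mid D_t,\mathcal{F}_t\bigr)\le \tfrac12\varepsilon .
\]
The event $\{\exists\, t\le T: X_t\neq\tilde X_t\}$ is the disjoint union over $t=0,\dots,T-1$ of the events $D_t\cap\{X_{t+1}\neq\tilde X_{t+1}\}$. Taking expectations and summing over these $T$ terms, with the tower property handling the conditioning, gives the bound $T\varepsilon/2$. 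The $t=0$ term contributes nothing because the initial states are equal.

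For the second claim, on the event $D_T$ the two prefixes $(X_0,\dots,X_T)$ and $(\tilde X_0,\dots,\tilde X_T)$ are identical. Therefore $\mathbf 1_{A_T}=\mathbf 1_{\tilde A_T}$ there, since $\tau_G\le T$ depends only on the prefix up to time $T$. It follows that
\[
|\Pr(A_T)-\Pr(\tilde A_T)|\le \Pr\bigl(A_T\,\triangle\,\tilde A_T\bigr)\le \Pr(D_T^c)\le T\varepsilon/2,
\]
and by the marginal property these probabilities are exactly $\Pr^\pi_P(A_T)$ and $\Pr^\pi_{\tilde P}(\tilde A_T)$.

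The main subtlety is a measurability point: Lemma~\ref{lem:MaxCoupling} provides a coupling for each fixed pair $(x,a)$, but we use it at random states. We need a measurable choice of maximal coupling as a function of $(x,a)$. For countable or finite state spaces, as in the product MDP, this is immediate. In the continuous lifted case, we would take the standard explicit construction: with probability $1-\mathrm{TV}$, draw from the normalized overlap $\min(p,q)$; otherwise draw from the normalized residuals. This construction is measurable in $(x,a)$ whenever the densities are. The prefix bound is linear in $T$, so converting it to an infinite-horizon statement requires a separate tail term such as $\Pr(\tau_G>T)$. That tail term is handled later via the hitting-time bound $H$, not in this lemma.
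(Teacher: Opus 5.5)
Your proposal is correct and follows the paper's argument. The paper takes a union bound over the $T$ one-step divergence events, each of conditional probability at most $\varepsilon/2$ under the stepwise maximal coupling, and then notes that the reach-within-$T$ events coincide whenever the coupled prefixes have not diverged. Your extra details (shared action draws while the states agree, the disjoint first-divergence decomposition, and a measurable choice of maximal coupling) make explicit what the paper's three-line proof leaves implicit. One wording slip: the index-$0$ term $D_0\cap\{X_1\neq\tilde X_1\}$ does contribute up to $\varepsilon/2$; what vanishes is the time-$0$ disagreement $\{X_0\neq\tilde X_0\}$.
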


\begin{proof}
At each step, $\Pr(X_{t+1}\neq \tilde X_{t+1}\mid X_t=\tilde X_t)\le \varepsilon/2$. A union bound over $t=0,\dots,T-1$ gives the divergence bound. If the coupled paths do not diverge by $T$, the reach-within-$T$ events agree, giving the second claim.
\end{proof}

\begin{lemma}[Tail error bound]
\label{lem:TailBound}
If $\mathbb{E}[\tau_{G\cup B}]\le H$ and $\mathbb{E}[\tilde\tau_{G\cup B}]\le H$, then for any $T\in\mathbb{N}$, $\Pr(\tau_{G\cup B}>T)\le H/T$ and $\Pr(\tilde\tau_{G\cup B}>T)\le H/T$.
\end{lemma}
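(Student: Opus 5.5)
This is a direct application of Markov's inequality to the nonnegative hitting time. The statement is Lemma~\ref{lem:TailBound}: if $\mathbb{E}[\tau_{G\cup B}]\le H$ and $\mathbb{E}[\tilde\tau_{G\cup B}]\le H$, then $\Pr(\tau_{G\cup B}>T)\le H/T$ and $\Pr(\tilde\tau_{G\cup B}>T)\le H/T$ for every $T\in\mathbb{N}$.

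\textbf{Step 1 (setup).} The variable $\tau_{G\cup B}$ takes values in $\mathbb{N}\cup\{\infty\}$ and is nonnegative. The hypothesis $\mathbb{E}[\tau_{G\cup B}]\le H<\infty$ forces $\tau_{G\cup B}<\infty$ almost surely, so no mass sits at $\infty$ to worry about. For $T=0$ the bound $H/T$ is read as $+\infty$, which is trivial, so I would assume $T\ge 1$.

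\textbf{Step 2 (Markov's inequality).} For $T\ge1$, since $\tau_{G\cup B}\ge 0$,
\[
T\,\mathbf{1}\{\tau_{G\cup B}>T\}\le \tau_{G\cup B}\,\mathbf{1}\{\tau_{G\cup B}>T\}\le \tau_{G\cup B}.
\]
Taking expectations gives $T\,\Pr(\tau_{G\cup B}>T)\le \mathbb{E}[\tau_{G\cup B}]\le H$. Dividing by $T$ yields the first claim.

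\textbf{Step 3 (second kernel).} The same argument, applied verbatim to $\tilde\tau_{G\cup B}$ under the kernel $\tilde P$ with its own expectation bound, gives $\Pr(\tilde\tau_{G\cup B}>T)\le H/T$. Nothing in the argument depends on the coupling of Lemma~\ref{lem:MaxCoupling}; each marginal law is treated separately.

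\textbf{Main obstacle.} There is essentially none. The only point needing care is the role of the bound: it will later be combined with Lemma~\ref{lem:PrefixBound} by choosing $T\approx\sqrt{H/\varepsilon}$. That choice balances $T\varepsilon/2+2H/T$ and yields the $2\sqrt{H\varepsilon}$ term. For this lemma itself, I only need to make sure the expectations are taken under the same fixed stationary policy $\pi$ and initial state as in Lemma~\ref{lem:PrefixBound}, which the hypothesis of Theorem~\ref{thm:performance_guarantee} supplies for each kernel $K$.
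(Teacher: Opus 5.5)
Your argument is correct and matches the paper's proof, which is the single line $\Pr(\tau>T)\le\mathbb{E}[\tau]/T$ from Markov's inequality, applied separately to each kernel. One minor aside, outside the lemma itself: the balancing choice in Proposition~\ref{prop:parity_continuity} is $T=\lceil 2\sqrt{H/\varepsilon}\,\rceil$ rather than $T\approx\sqrt{H/\varepsilon}$, because minimizing $T\varepsilon/2+2H/T$ gives $T=2\sqrt{H/\varepsilon}$ and the value $2\sqrt{H\varepsilon}$, whereas your choice would give $\tfrac{5}{2}\sqrt{H\varepsilon}$.
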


\begin{proof}
Markov's inequality: $\Pr(\tau>T)\le \mathbb{E}[\tau]/T$.
\end{proof}

\begin{proposition}[Parity value continuity under kernel perturbations]
\label{prop:parity_continuity}
Let $G$ be the accepting reachability target set and $B$ the bad target set from a common parity-to-reachability reduction (Lemma~\ref{lem:ParityAsReach}). Let $P$ and $\tilde P$ satisfy $\sup_{x,a}\|P(\cdot\mid x,a)-\tilde P(\cdot\mid x,a)\|_1\le\varepsilon$. Assume there exists $H>0$ such that for any stationary $\pi$, $\mathbb{E}[\tau_{G\cup B}]\le H$ under both kernels. Then for any stationary $\pi$,
\begin{equation}
    |\Pr_{P}^{\pi}(\mathrm{reach}\,G) - \Pr_{\tilde P}^{\pi}(\mathrm{reach}\,G)| \;\le\; 2\sqrt{H\,\varepsilon},
\end{equation}
and consequently $|\sup_{\pi}\Pr_{P}^{\pi}(\mathrm{reach}\,G) - \sup_{\pi}\Pr_{\tilde P}^{\pi}(\mathrm{reach}\,G)| \le 2\sqrt{H\varepsilon}$.
\end{proposition}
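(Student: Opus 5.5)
The plan is a standard truncation argument: split the reach-$G$ event at a horizon $T$, control the prefix with the coupling of Lemma~\ref{lem:PrefixBound}, control the tail with Lemma~\ref{lem:TailBound}, and then optimize over $T$.

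Fix a stationary $\pi$ and the initial state, and write $\tau_G$ and $\tau_{G\cup B}$ for the hitting times under $P$ (with tildes for $\tilde P$). Since $G$ and $B$ come from the common reduction of Lemma~\ref{lem:ParityAsReach}, treat both as absorbing. Then reaching $G$ means the first entrance into $G\cup B$ lands in $G$, so
\[
\Pr_P^\pi(\mathrm{reach}\,G)=\Pr_P^\pi(\tau_G\le T)+\Pr_P^\pi(T<\tau_G<\infty),
\]
and on the second event $\tau_{G\cup B}>T$. Hence for every $T\in\mathbb{N}$,
\[
0\le \Pr_P^\pi(\mathrm{reach}\,G)-\Pr_P^\pi(A_T)\le \Pr_P^\pi(\tau_{G\cup B}>T)\le H/T,
\]
using Lemma~\ref{lem:TailBound}, and the same bound holds under $\tilde P$.

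Combining this with the prefix bound of Lemma~\ref{lem:PrefixBound} via the triangle inequality gives
\[
|\Pr_P^\pi(\mathrm{reach}\,G)-\Pr_{\tilde P}^\pi(\mathrm{reach}\,G)|\le T\varepsilon/2+H/T.
\]
The two one-sided tail errors have the same sign, so their difference is at most $H/T$ rather than $2H/T$. Even if I only use the cruder bound $2H/T$, the argument still closes.

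Next I optimize over $T$. The continuous minimizer of $T\varepsilon/2+2H/T$ is $T^\ast=2\sqrt{H/\varepsilon}$, which gives $2\sqrt{H\varepsilon}$. Since $T$ must be an integer, I take $T=\lceil 2\sqrt{H/\varepsilon}\rceil$. This adds at most $\varepsilon/2$ to the prefix term.

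The main obstacle is this integer rounding. I plan to handle it by using the sharper tail bound $H/T$, whose real optimum is $\sqrt{2H\varepsilon}$, leaving slack of $(2-\sqrt2)\sqrt{H\varepsilon}$. That slack absorbs the extra $\varepsilon/2$ whenever $\varepsilon\le c H$ for a suitable constant $c$. In the remaining regime $\sqrt{H\varepsilon}\ge 1/2$, the bound $2\sqrt{H\varepsilon}\ge 1$ is trivial because the left side is a difference of probabilities. The degenerate case $\varepsilon=0$ follows by letting $T\to\infty$.

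For the supremum statement, the per-policy bound is uniform in $\pi$, because $H$ bounds the expected hitting time for every stationary policy under both kernels. I then use the elementary fact $|\sup_\pi f-\sup_\pi g|\le\sup_\pi|f-g|$. Finite product MDPs with reachability objectives admit optimal stationary policies, so restricting the supremum to stationary policies loses nothing.
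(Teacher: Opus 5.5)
Your proposal is correct and follows the paper's proof: split reachability at a horizon $T$, bound the prefix discrepancy by $T\varepsilon/2$ via the coupling of Lemma~\ref{lem:PrefixBound}, bound the tails with Lemma~\ref{lem:TailBound}, and optimize $T$. Your sharper $H/T$ tail and your care with rounding are genuine improvements, since the paper's bound $T\varepsilon/2+2H/T$ at $T=\lceil 2\sqrt{H/\varepsilon}\rceil$ in general gives only $2\sqrt{H\varepsilon}+\varepsilon/2$. One step is implicit: your case split needs a lower bound on $H$ to get from $\varepsilon>cH$ to $\sqrt{H\varepsilon}\ge 1/2$. The bound $H\ge 1$ does hold, because $\tau_{G\cup B}\ge 1$ unless the start already lies in $G\cup B$, in which case the two probabilities coincide. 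You can avoid the split entirely by taking $T=\lfloor 2\sqrt{H/\varepsilon}\rfloor$ and using $\Pr(\tau>T)=\Pr(\tau\ge T+1)\le H/(T+1)$ for the integer-valued hitting time; this gives $T\varepsilon/2+2H/(T+1)\le 2\sqrt{H\varepsilon}$ exactly.
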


\begin{proof}
Decompose reachability into prefix ($\tau_G\le T$) and tail ($\tau_G>T$). The tail is upper-bounded by $\Pr(\tau_{G\cup B}>T)$. Applying Lemmas~\ref{lem:PrefixBound} and~\ref{lem:TailBound} to both kernels, $|\Pr_{P}^{\pi}(\mathrm{reach}\,G) - \Pr_{\tilde P}^{\pi}(\mathrm{reach}\,G)| \le T\varepsilon/2 + 2H/T$. Choosing $T=\lceil 2\sqrt{H/\varepsilon}\rceil$ minimizes the RHS to $2\sqrt{H\varepsilon}$.
\end{proof}

\begin{theorem}[\revRefTwo{Conditional approximate policy performance guarantee}]
\label{supp:thm:performance_guarantee}
Let $\widehat{\mathcal{B}}_\varphi(T)$ be the learned product abstraction, let $\mathcal{B}_\varphi^\star$ be the ideal product abstraction induced by the same \revRc{revealed anchors} and ideal kernel $P_B^\star$, and let $\Pi(\hat\pi^*)$ be the execution of the learned optimal abstract policy in the original POMDP. \revRc{Assume that, on an event $E_{\mathrm{succ}}$ with $\mathbb{P}(E_{\mathrm{succ}})\ge1-\delta_{\mathrm{learn}}$, the revelation, monotone coverage, map radius, and L1 model error conditions hold, with}
\begin{equation}
    \revRefTwoMath{\sup_{q,a}\bigl\|\widehat P_B(\cdot\mid q,a)-P_B^\star(\cdot\mid q,a)\bigr\|_1\le\varepsilon_P.}
\end{equation}
\revRefTwo{Assume also that the common parity-to-reachability reduction uses fixed accepting and bad sets $G$ and $B$, and that $\mathbb{E}_{K}^{\pi}[\tau_{G\cup B}]\le H$ for every stationary policy considered and for each kernel $K\in\{\widehat P_B,P_B^\star,P_{\mathrm{lift}}\}$. Then}
\begin{equation}
\label{supp:thm6:eq1}
    \revRefTwoMath{V_{\mathcal{P}}^\varphi(\Pi(\hat{\pi}^*)) \;\ge\; V^{\varphi}_{\widehat{\mathcal{B}}}(\hat{\pi}^*) \;-\; \delta_{\text{learn}} \;-\; \varepsilon_{\text{model}} \;-\; \varepsilon_{\text{abs}},}
\end{equation}
\revRefTwo{where $\varepsilon_{\text{model}}:=2\sqrt{H\varepsilon_P}$, and under Assumption~\ref{ass:lipschitz-kernel} and the Euclidean map radius $\varepsilon_{\rm map}:=c_{\rm met}\rho_R$, with $c_{\rm met}$ denoting the reciprocal square root of $\underline{\lambda}_{\rm met}:=\lambda_{\min}(M_{\rm met})$,}
\begin{equation}
    \revRefTwoMath{\varepsilon_{\text{abs}} \le 2\sqrt{H\,L_T\,\varepsilon_{\rm map}}.}
\end{equation}
\end{theorem}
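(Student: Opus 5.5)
The bound comes from a telescoping chain of three value comparisons under the fixed policy $\hat\pi^*$. Each comparison is controlled by Proposition~\ref{prop:parity_continuity}, after the common parity-to-reachability reduction of Lemma~\ref{lem:ParityAsReach} with fixed sets $G$ and $B$. We use the same stationary $\hat\pi^*$ throughout and never a supremum, so the per-policy form of Proposition~\ref{prop:parity_continuity} is enough.

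We work on the event $E_{\mathrm{succ}}$ and write
\[
V^\varphi_{\mathcal P}(\Pi(\hat\pi^*)) \ge \mathbb{P}(E_{\mathrm{succ}})\,V^\varphi_{\mathcal P}(\Pi(\hat\pi^*)\mid E_{\mathrm{succ}}) \ge V^\varphi_{\mathcal P}(\Pi(\hat\pi^*)\mid E_{\mathrm{succ}})-\delta_{\mathrm{learn}},
\]
using that values lie in $[0,1]$; this produces the $\delta_{\mathrm{learn}}$ term. Conditional on $E_{\mathrm{succ}}$, we split
\[
V_{\mathcal P}-V_{\widehat{\mathcal B}}=\bigl(V_{P_{\mathrm{lift}}}-V_{P_B^\star}\bigr)+\bigl(V_{P_B^\star}-V_{\widehat P_B}\bigr),
\]
with all values evaluated under $\hat\pi^*$ and the reachability target $G$. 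Lemma~\ref{lem:ParityAsReach} identifies each parity value with the probability of reaching $G$ on the respective product chain. For the lifted execution we additionally need the identity $V^\varphi_{\mathcal P}(\Pi(\hat\pi^*))=\Pr_{P_{\mathrm{lift}}}(\mathrm{reach}\,G)$. This holds by definition of $P_{\mathrm{lift}}$ as the comparison kernel of the lifted process on $Q_\times$, since the DPA state is tracked exactly along revealed anchors.

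The model term is direct. The kernels $\widehat P_B$ and $P_B^\star$ live on the same finite product space, and the indicator factor for the DPA update is identical in both. Hence the product-kernel $L_1$ gap equals the $\widehat P_B$ versus $P_B^\star$ gap, which is at most $\varepsilon_P$. The hitting-time bound $H$ is assumed for both kernels, so Proposition~\ref{prop:parity_continuity} gives $|V_{P_B^\star}-V_{\widehat P_B}|\le2\sqrt{H\varepsilon_P}=\varepsilon_{\mathrm{model}}$.

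The abstraction term is the main obstacle. Here we must bound $\sup_{q,a}\|P_{\mathrm{lift}}(\cdot\mid q,a)-P_B^\star(\cdot\mid q,a)\|_1\le L_T\varepsilon_{\mathrm{map}}$. We plan to take $P_B^\star(\cdot\mid q,a)$ as the pushforward under $\mathrm{map}$ of $p_T(\cdot\mid W(q),a)$, composed with revelation. Then $P_{\mathrm{lift}}(\cdot\mid q,a)$ is the pushforward of $p_T(\cdot\mid x_{\mathrm{rev}},a)$ for the actual representative $x_{\mathrm{rev}}$ assigned to $q$. Two facts close the bound. First, pushforward by a measurable map is an $L_1$ contraction. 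Second, by Assumption~\ref{ass:lipschitz-kernel} and the map gate $\|x_{\mathrm{rev}}-W(q)\|_2\le c_{\rm met}\|x_{\mathrm{rev}}-W(q)\|_{M_{\rm met}}\le c_{\rm met}\rho_R=\varepsilon_{\rm map}$, the difference is at most $L_T\varepsilon_{\mathrm{map}}$. The delicate points are that the continuous state inside a belief is not exactly $x_{\mathrm{rev}}$, and that revelation times intervene between abstract steps. We would handle both by treating $P_{\mathrm{lift}}$ as the step-to-next-revelation kernel. We would also absorb the residual belief spread into $E_{\mathrm{succ}}$ via Theorem~\ref{thm:robust_revelation}, stating explicitly that this is part of the "revelation event holds" hypothesis. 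With $H$ assumed for $P_{\mathrm{lift}}$, Proposition~\ref{prop:parity_continuity} then yields $\varepsilon_{\mathrm{abs}}\le2\sqrt{HL_T\varepsilon_{\mathrm{map}}}$. Summing the three terms gives \eqref{supp:thm6:eq1}.
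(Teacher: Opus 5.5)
Your proposal is correct and follows the paper's proof essentially step for step. It uses the same per-policy chain through $P_B^\star$: the model term comes from Proposition~\ref{prop:parity_continuity} after noting that the deterministic DPA factor leaves the $L_1$ gap unchanged, the abstraction term comes from the map gate $\|x_{\mathrm{rev}}-W(q)\|_2\le c_{\rm met}\rho_R$ combined with Assumption~\ref{ass:lipschitz-kernel}, and the $\delta_{\mathrm{learn}}$ term comes from total probability with values in $[0,1]$. The only difference is that you justify the $L_T\varepsilon_{\rm map}$ perturbation of $P_{\mathrm{lift}}$ explicitly (pushforward contraction, a step-to-next-revelation kernel, residual belief spread folded into $E_{\mathrm{succ}}$), whereas the paper simply asserts it.
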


\begin{proof}
Let $E_{\text{succ}}$ be the event in the theorem statement and set $\delta_{\text{learn}}=\mathbb{P}(\neg E_{\text{succ}})$.

\revRefTwo{\textbf{Stage A (Model error $\varepsilon_{\text{model}}$).} Condition on $E_{\text{succ}}$. The L1 kernel-error condition gives $\sup_{q,a}\|\widehat P_B(\cdot\mid q,a)-P_B^\star(\cdot\mid q,a)\|_1\le\varepsilon_P$. Because the DPA transition is deterministic once the next abstract state is chosen, the same L1 perturbation bound applies to the product kernels. Applying Proposition~\ref{prop:parity_continuity} with the fixed policy $\hat\pi^*$ yields}
\[
\revRefTwoMath{V_{\mathcal{B}^\star}^{\varphi}(\hat\pi^*) \ge V_{\widehat{\mathcal B}}^{\varphi}(\hat\pi^*) - 2\sqrt{H\varepsilon_P}.}
\]
\revRefTwo{Define $\varepsilon_{\text{model}}:=2\sqrt{H\varepsilon_P}$. This step uses the stated L1 model-error event; Assumption~\ref{ass:lipschitz-kernel} is used later for the abstraction-error term.}

\revRefTwo{\textbf{Stage B (Abstraction error $\varepsilon_{\text{abs}}$).} The lifted policy $\Pi(\hat\pi^*)$ applies the same abstract action selected by $\hat\pi^*$ after mapping a revealed representative to its prototype. On $E_{\text{succ}}$, each compared representative $x_{\mathrm{rev}}$ and its assigned prototype $W(q)$ satisfy $\|x_{\mathrm{rev}}-W(q)\|_{M_{\rm met}}\le\rho_R$. Since $M_{\rm met}\succ0$, $\|x_{\mathrm{rev}}-W(q)\|_2\le c_{\rm met}\rho_R=\varepsilon_{\rm map}$. Assumption~\ref{ass:lipschitz-kernel} therefore gives an L1 kernel perturbation at most $L_T\varepsilon_{\rm map}$ between the lifted comparison kernel and the ideal revealed-anchor kernel for the same policy. Proposition~\ref{prop:parity_continuity} gives}
\[
\revRefTwoMath{V_{\mathcal P}^{\varphi}(\Pi(\hat\pi^*)) \ge V_{\mathcal{B}^\star}^{\varphi}(\hat\pi^*) - 2\sqrt{H L_T\varepsilon_{\rm map}}.}
\]
\revRefTwo{Define $\varepsilon_{\text{abs}}:=2\sqrt{H L_T\varepsilon_{\rm map}}$.}

\revRefTwo{\textbf{Stage C (Remove conditioning).} Combining Stages A and B for the same fixed policy $\hat\pi^*$ gives, on $E_{\text{succ}}$, $V_{\mathcal P}^{\varphi}(\Pi(\hat\pi^*))\ge V_{\widehat{\mathcal B}}^{\varphi}(\hat\pi^*)-\varepsilon_{\text{model}}-\varepsilon_{\text{abs}}$. By total probability and boundedness of satisfaction values in $[0,1]$, $V_{\mathcal P}^{\varphi}(\Pi(\hat\pi^*))\ge V_{\widehat{\mathcal B}}^{\varphi}(\hat\pi^*)-\delta_{\text{learn}}-\varepsilon_{\text{model}}-\varepsilon_{\text{abs}}$. In the accepting abstract instances considered in our experiments, $V^{\varphi}_{\widehat{\mathcal{B}}}(\hat\pi^*)=1$, yielding the simplified bound $1-(\delta_{\text{learn}}+\varepsilon_{\text{model}}+\varepsilon_{\text{abs}})$.}
\end{proof}

\section{Additional Experimental Details}
\label{supp:experiments}

\subsection{Reward Functions: Rationale and Specification-to-Reward Compilation}
\label{supp:reward_rationale}

\paragraph{\revRefFour{Proxy reward rationale}} \revRefFour{The primary objective studied in this work is logical satisfaction of an $\omega$-regular parity objective. Two practical reasons motivate a carefully designed proxy reward.} \textbf{(i) Fair comparison with DRL baselines}: a direct ``specification satisfaction'' signal for an $\omega$-regular property is inherently \emph{infinite-horizon} and yields an extremely sparse training signal, so a dense proxy reward is needed for feasibility. \textbf{(ii) Heuristic guidance for reward-based planners and ablations}: several POMDP solvers and ablated variants ultimately rely on stepwise returns, and a reward compiled from the specification provides a consistent heuristic that correlates with safety and liveness progress. \revRefFour{Success is always defined by specification-aligned metrics: Success Rate, Cycles, and TTFC.}

\paragraph{Design principles} The proxy rewards reflect the intended \emph{lexicographic structure}: (1) safety violations strongly discouraged; (2) liveness progress observable within finite episodes via event-triggered rewards tied to meaningful milestones; (3) a small per-step time cost for efficiency; (4) reward-hacking avoided (e.g., staying inside a goal region cannot accumulate reward).

\subsubsection{2D Persistent Patrolling: From Büchi Patrolling to Cycle Rewards}

\paragraph{Specification recap} (i) \textbf{Liveness}: visit goal regions $G_1,G_2,G_3$ infinitely often; (ii) \textbf{Safety}: avoid the hazard set $X_{\text{hazard}}$ (and obstacles and collisions).

\paragraph{Finite-episode surrogate for liveness} A \emph{cycle} is completed when all three goals have been visited at least once since the previous cycle completion; this aligns with Büchi acceptance in that policies that repeatedly complete cycles without violations correspond to trajectories that keep revisiting all goals.

\paragraph{Event-triggered progress tracking} We maintain a progress set $\mathcal{V}_t\subseteq\{1,2,3\}$:
\begin{equation}
    \mathcal{V}_{t+1}=\begin{cases}\mathcal{V}_t\cup\{i\} & \text{if } x_{t+1}\in G_i \text{ and } i\notin\mathcal{V}_t,\\ \mathcal{V}_t & \text{otherwise.}\end{cases}
\end{equation}
A \emph{new-goal} event occurs when $i\notin\mathcal{V}_t$ and the agent enters $G_i$. A \emph{cycle-complete} event occurs when $\mathcal{V}_{t+1}=\{1,2,3\}$; after awarding the cycle reward, $\mathcal{V}_{t+1}\leftarrow \emptyset$.

\paragraph{Reward definition (2D)}
\begin{align}
\label{eq:r2d_app}
    r_{2D}(t) = 100\,\mathbb{I}[\text{cycle-complete}] + 50\,\mathbb{I}[\text{new-goal}]\\ - 0.1 - 2\,\mathbb{I}[\mathsf{col}_{\text{static}}] - 50\,\mathbb{I}[\mathsf{col}_{\text{dyn}}] - 100\,\mathbb{I}[\mathsf{haz}].
\end{align}

\paragraph{Why each term is reasonable} The $+100$ cycle bonus is the most direct finite-episode witness of Büchi liveness. The $+50$ new-goal bonus reduces sparsity; being event-triggered, it cannot be exploited by lingering. The $-0.1$ per-step cost discourages indecisive wandering. \revRefFour{Safety penalties ($-2$ static, $-50$ dynamic, and $-100$ hazard) reflect the proxy-reward scale used in Eq.~\eqref{eq:r2d_app}; dynamic collisions and hazard entry are treated as higher-risk events than static contacts.}

\subsubsection{3D Navigation: Reach-Avoid Reward}

\paragraph{Reward definition (3D)}
\begin{equation}
\label{eq:r3d_app}
    r_{3D}(t) = 50\,\mathbb{I}[\mathsf{goal}] - 1 - 100\,\mathbb{I}[\mathsf{unsafe}].
\end{equation}
The $+50$ terminal goal reward encourages reaching the target; the larger $-1$ per-step penalty compared with 2D reflects the longer paths and higher branching factor; the $-100$ unsafe penalty discourages risky shortcuts. We use $\gamma=0.99$ across all experiments; reward magnitudes stay within order $10^2$ to avoid destabilizing DRL while preserving the priority order safety~$\succ$~task completion~$\succ$~progress~$\succ$~efficiency.

\subsection{Hyperparameters for REBA and Baselines}
\label{supp:hyperparams}

\paragraph{Common settings} \revRefFour{The online POMDP baselines (POMCP, GPOMCP, REFSOLVER, CPOMDP) share $\gamma=0.99$ and $N_p=300$ particles with planning budget $N_{\mathrm{sim}}=300$ per environment step. Baseline planners using \revRefThree{observation progressive widening (OPW)} use $(k_o,\alpha_o)=(3,0.6)$, and baseline planners using \revRefThree{action progressive widening (APW)} use $(k_a,\alpha_a)=(10,0.1)$ where applicable. REBA uses $(k_a,\alpha_a)=(5.0,0.1)$, as reported in Sec.~\ref{sec:eval_protocol} and in the REBA tuning protocol below.}

\revRefFour{
\begin{table*}[t]
\centering
\caption*{\textbf{Baseline reporting summary.} This unnumbered protocol note summarizes which baselines are shown in the main and supplementary tables. Main Table~\ref{tab:nominal_complete} uses a unified nominal comparison for Static 2D, Dynamic 2D, and 3D Navigation, while Table~\ref{tab:supp_lightdark_recurrent} retains the complete Continuous LightDark comparison. \revRefThree{POMCPOW denotes Partially Observable Monte Carlo Planning with Observation Widening, and PFT-DPW denotes Particle Filter Trees with Double Progressive Widening.}}
\footnotesize
\setlength{\tabcolsep}{4pt}
\renewcommand{\arraystretch}{1.12}
\begin{tabular}{@{}p{0.17\textwidth}p{0.36\textwidth}p{0.39\textwidth}@{}}
\toprule
Setting & Baselines reported & Reporting rationale \\
\midrule
Static 2D & Recurrent PPO, QR-DQN, A2C, TRPO; POMCP, GPOMCP, REFSOLVER, CPOMDP & Static 2D supports both DRL baselines trained with rewards and online POMDP solvers under the same metrics aligned with the specification. \\
Dynamic 2D & Recurrent PPO, QR-DQN, A2C, TRPO; POMCP, GPOMCP, REFSOLVER, CPOMDP & Dynamic 2D tests online planning with changing obstacles and risk map updates. Main Table~\ref{tab:nominal_complete} reports both the reward-trained DRL rows and the online POMDP solvers, while budgets, hyperparameters, and curves remain in Table~\ref{tab:supp_drl_hparams_all} and Fig.~\ref{fig:comparison}. \\
3D navigation & Recurrent PPO, QR-DQN, A2C, TRPO; POMCP, GPOMCP, REFSOLVER, CPOMDP & The 3D setting tests online planning with a larger continuous belief state. Main Table~\ref{tab:nominal_complete} reports both the reward-trained DRL rows and the online POMDP solvers, while budgets, hyperparameters, and curves remain in Table~\ref{tab:supp_drl_hparams_all} and Fig.~\ref{fig:comparison}. \\
Continuous LightDark & Main Table~\ref{tab:lightdark_main}: POMCP, CPOMDP, REFSOLVER, Recurrent-PPO;
Table~\ref{tab:supp_lightdark_recurrent}: POMCPOW, PFT-DPW, CPOMDP, REFSOLVER, POMCP, GPOMCP, Recurrent-PPO & The main table prioritizes consistency with the core online POMDP baseline set, while the supplementary table keeps the complete external-baseline and ablation set. \\
\bottomrule
\end{tabular}
\end{table*}
}

\revRc{Main Table~\ref{tab:nominal_complete} includes the Dynamic-2D and 3D DRL rows; this supplementary section retains the reward definitions, budgets, hyperparameters, and training curves needed to interpret those reward-trained baselines.}

\paragraph{Default policy (rollouts)} For POMCP, GPOMCP, and CPOMDP we use a goal-directed heuristic that (i) moves toward the current target, (ii) breaks ties by maximizing clearance from known static obstacles, and (iii) for the dynamic scenario incorporates a repulsive term from the risk-map estimate.

\paragraph{Scenario-specific hyperparameters} Table~\ref{tab:hyper_all_environments} summarizes the tuned values for each algorithm across the three scenarios.

\begin{table*}[t]
\centering
\caption{Hyperparameters for online POMDP baselines across three environments.}
\label{tab:hyper_all_environments}
\small
\setlength{\tabcolsep}{12pt}
\begin{tabular}{lcccc}
\toprule
Method & $N_{\mathrm{sim}}$/step & $N_p$ & Horizon / depth & Exploration \\
\midrule
\multicolumn{5}{l}{\textbf{\textit{Environment 1: 2D Static Persistent Patrolling}}} \\
\midrule
POMCP (with OPW) & 300 & 300 & $d_{\max}=150$ & $c_{\mathrm{UCT}}=25$ \\
REFSOLVER      & 300 & 300 & $d_{\mathrm{tree}}=90, d_{\mathrm{roll}}=160$ & $\alpha=0.6$ \\
GPOMCP         & 300 & 300 & $d_{\max}=150$ & $C_N=25,\; C_b=5$ \\
CPOMDP         & 300 & 300 & $d_{\max}=150$ & $\kappa=25,\; \nu=0.1$ \\
\midrule
\multicolumn{5}{l}{\textbf{\textit{Environment 2: 2D Dynamic Persistent Patrolling}}} \\
\midrule
POMCP (with OPW) & 300 & 300 & $d_{\max}=120$ & $c_{\mathrm{UCT}}=30$ \\
REFSOLVER      & 300 & 300 & $d_{\mathrm{tree}}=60, d_{\mathrm{roll}}=140$ & $\alpha=0.4$ \\
GPOMCP         & 300 & 300 & $d_{\max}=120$ & $C_N=30,\; C_b=10$ \\
CPOMDP         & 300 & 300 & $d_{\max}=120$ & $\kappa=30,\; \nu=0.1$ \\
\midrule
\multicolumn{5}{l}{\textbf{\textit{Environment 3: 3D Navigation}}} \\
\midrule
POMCP (with OPW) & 300 & 300 & $d_{\max}=160$ & $c_{\mathrm{UCT}}=15$ \\
REFSOLVER      & 300 & 300 & $d_{\mathrm{tree}}=120, d_{\mathrm{roll}}=200$ & $\alpha=0.4$ \\
GPOMCP         & 300 & 300 & $d_{\max}=160$ & $C_N=15,\; C_b=8$ \\
CPOMDP         & 300 & 300 & $d_{\max}=160$ & $\kappa=15,\; \nu=0.1$ \\
\bottomrule
\end{tabular}
\end{table*}

\begin{table*}[t]
\centering
\caption{DRL baseline hyperparameters and training budgets. All other unspecified hyperparameters follow Stable-Baselines3 / SB3-Contrib defaults.}
\label{tab:supp_drl_hparams_all}
\footnotesize
\setlength{\tabcolsep}{8pt}
\renewcommand{\arraystretch}{1.2}
\begin{tabular}{llp{11.5cm}}
\toprule
\textbf{Algorithm} & \textbf{Policy} & \textbf{Hyperparameters (explicitly set)} \\
\midrule
\multicolumn{3}{l}{\textbf{Exp-1: 2D Static}~~\textit{Budget: Train $2\times 10^{6}$ steps; Test horizon: 500}} \\
\cmidrule(r){1-3}
QR-DQN & Mlp & $\gamma{=}0.99$; lr=1e-4; buf=100k; start=10k; batch=256; $\tau$=1.0; train\_freq=4; grad=2; tgt=1000; exp\_frac=0.25; $\epsilon_0$=1.0; $\epsilon_f$=0.05; max\_g=0.5; nQ=50. \\
Recurrent PPO & MlpLstm & $\gamma{=}0.99$; n\_steps=1024; batch=256; epochs=4; gae=0.95; clip=0.1; ent=0.01; vf=0.5; max\_g=0.5; lr=3e-4; LSTM(h=64, layers=1). \\
TRPO & Mlp & $\gamma{=}0.99$; n\_steps=1024; batch=256; gae=0.95; lr=3e-4; critic\_update=10. \\
A2C & Mlp & $\gamma{=}0.99$; n\_steps=512; gae=0.95; ent=0.01; vf=0.25; max\_g=0.5; lr=3e-4. \\
\midrule
\multicolumn{3}{l}{\textbf{Exp-2: 2D Dynamic}~~\textit{Budget: Train $2\times 10^{6}$ steps; Truncation: 500; VecEnv: 1}} \\
\cmidrule(r){1-3}
QR-DQN & Mlp & $\gamma{=}0.99$; lr=1e-4; buf=100k; start=10k; batch=256; $\tau$=1.0; train\_freq=4; grad=2; tgt=1000; exp\_frac=0.3; $\epsilon_0$=1.0; $\epsilon_f$=0.1; max\_g=0.5; nQ=50. \\
Recurrent PPO & MlpLstm & $\gamma{=}0.99$; n\_steps=2048; batch=256; epochs=4; gae=0.9; clip=0.1; ent=0.01; vf=0.5; max\_g=0.5; lr=3e-4; LSTM(h=64, layers=1). \\
TRPO & Mlp & $\gamma{=}0.99$; n\_steps=2048; batch=256; gae=0.9; lr=3e-4; critic\_upd=20/15. \\
A2C & Mlp & $\gamma{=}0.99$; n\_steps=512; gae=0.9; ent=0.01; vf=0.25; max\_g=0.5; lr=3e-4. \\
\midrule
\multicolumn{3}{l}{\textbf{Exp-3: Drone3D}~~\textit{Budget: Train $1\times 10^{7}$ steps; Max ep: 300; VecEnv: 8 train / 1 eval; Eval freq: 50k}} \\
\cmidrule(r){1-3}
QR-DQN & Mlp & $\gamma{=}0.99$; lr=1e-4; buf=200k; start=20k; batch=64; train\_freq=4; grad=1; tgt=1000; exp\_frac=0.1; $\epsilon_f$=0.01; nQ=50. \\
Recurrent PPO & MlpLstm & $\gamma{=}0.99$; n\_steps=128; batch=256; gae=0.95; ent=0.01; vf=0.5; max\_g=0.5; lr=2.5e-4. \\
A2C & Mlp & $\gamma{=}0.99$; n\_steps=5; gae=1.0; ent=0.01; vf=0.5; lr=7e-4. \\
TRPO & Mlp & $\gamma{=}0.99$; gae=0.95; cg\_steps=10; cg\_damp=0.1; target\_kl=0.01. \\
\bottomrule
\end{tabular}
\end{table*}

\subsection{REBA Hyperparameter Tuning Protocol}
\label{supp:reba_tuning_protocol}

\paragraph{Parameter categorization} REBA's parameters are grouped as: (i) \textbf{Compute Budget}: fixed at $N_p=300$ and $N_{\text{sim}}=300$; (ii) \textbf{Standard MCTS}: $k_o=3,\alpha_o=0.6$; $k_a=5.0,\alpha_a=0.1$; $c_{\text{UCT}}=10$; (iii) \textbf{REBA-Specific} (the only ones swept): the reveal percentile $p$, the gain $\kappa_{\mathrm{gain}}$, the drop multiplier $m$ for $\theta_{\mathrm{drop}}=m\sigma_{\Delta H}$, and the GNG node budget $|Q_{\mathcal{B}}|$ and \revRefTwo{map radius $\rho_R$}. Secondary parameters use stabilizing values (Dirichlet $\alpha=0.01$, $N_{\min}=20$).

\paragraph{Validation setting} Tuning uses a held-out validation task that shares the Static-2D map and noise but uses independent random seeds. We screen 10 episodes (500 steps) per configuration, then re-evaluate the top three with 20 episodes.

\paragraph{Lexicographic selection rule} (1) maximize Success Rate; (2) within 1\% to 2\% of the best, maximize Cycles; (3) break ties by minimizing TTFC; (4) finally prefer the smallest $|Q_{\mathcal{B}}|$. Search ranges and final values are in Table~\ref{tab:reba_tuning}.

\begin{table}[h]
\centering
\caption{REBA hyperparameter sweep ranges and final selected values.}
\label{tab:reba_tuning}
\renewcommand{\arraystretch}{1.1}
\begin{tabular}{llc}
\toprule
\textbf{Hyperparameter} & \textbf{Search Range (Tested)} & \textbf{Selected} \\
\midrule
Reveal percentile $p$ & $\{0.02,\ 0.05,\ 0.08,\ 0.10,\ 0.15\}$ & $0.05$ \\
Info gain $\kappa_{\mathrm{gain}}$ & $\{0,\ 0.005,\ \mathbf{0.01},\ 0.02,\ 0.05\}$ & $0.01$ \\
Drop multiplier $m$ & $\{0.5,\ 1.0,\ 2.0\}$ & $1.0$ \\
GNG node budget & $\{20,\ 50,\ 100\}$ & $100$ \\
\revRefTwo{Map radius $\rho_R$} & $\{0.3,\ 0.5,\ 0.8\}$ & $0.5$ \\
Dirichlet smoothing $\alpha$ & $\{0.001,\ 0.01,\ 0.1\}$ & $0.01$ \\
Min count $N_{\min}$ & $\{10,\ 20,\ 50\}$ & $20$ \\
\bottomrule
\end{tabular}
\end{table}

\revRefTwo{\paragraph{Sensitivity analysis} Fig.~\ref{supp:fig_para_sensitivity} reports the parameter sweeps referenced in the main evaluation protocol. Performance remains stable across the practical ranges $p\in[0.02,0.10]$ for the dynamic entropy percentile, $\kappa_{\mathrm{gain}}\le 0.05$ for the information-gain filter, and $\theta_{\mathrm{drop}}\le 1.0\sigma$ for the entropy-drop threshold; outside these ranges the abstraction either disconnects, passes noisy anchors, or blocks useful new nodes.}

\begin{figure*}[t]
    \centering
    \includegraphics[width=0.96\textwidth]{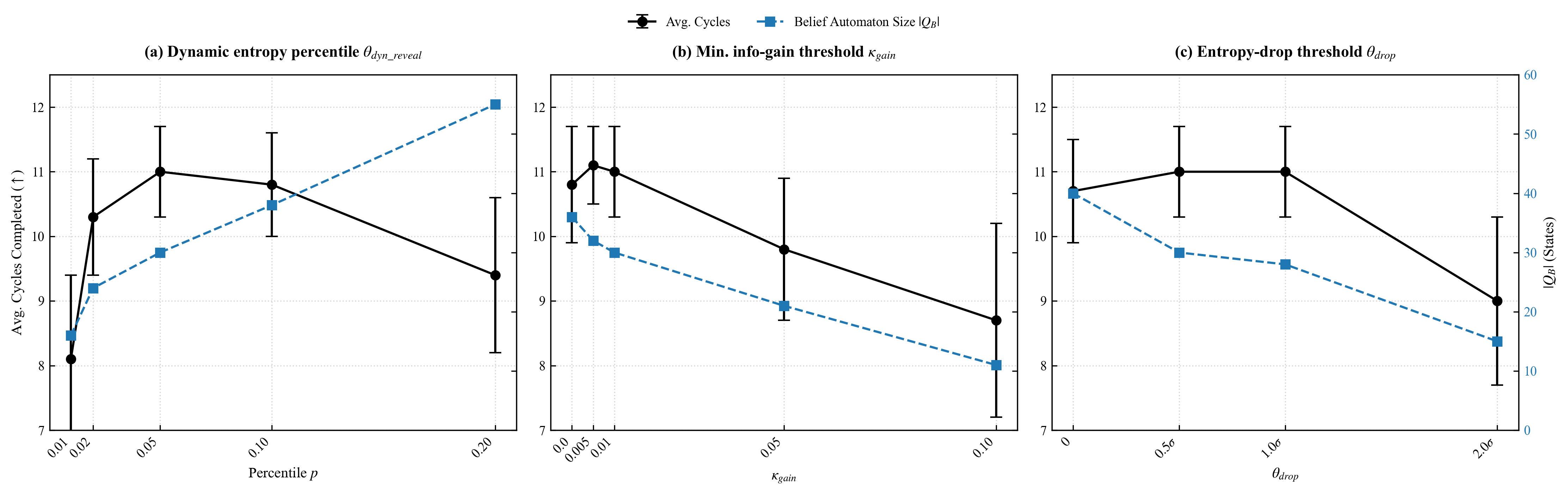}
    \caption{\revRefTwo{Sensitivity of REBA to (a) entropy percentile $\theta_{\mathrm{reveal}}^{\mathrm{dyn}}$, (b) information gain $\kappa_{\mathrm{gain}}$, and (c) entropy-drop threshold $\theta_{\mathrm{drop}}$ on patrolling performance (solid) and automaton size $|Q_{\mathcal{B}}|$ (dashed).}}
    \label{supp:fig_para_sensitivity}
\end{figure*}

\subsection{DRL Baselines Training Curves}
\label{supp:drl_curves}

\begin{figure*}[t]
    \centering
    \begin{subfigure}[t]{0.48\textwidth}
        \centering
        \includegraphics[width=\linewidth]{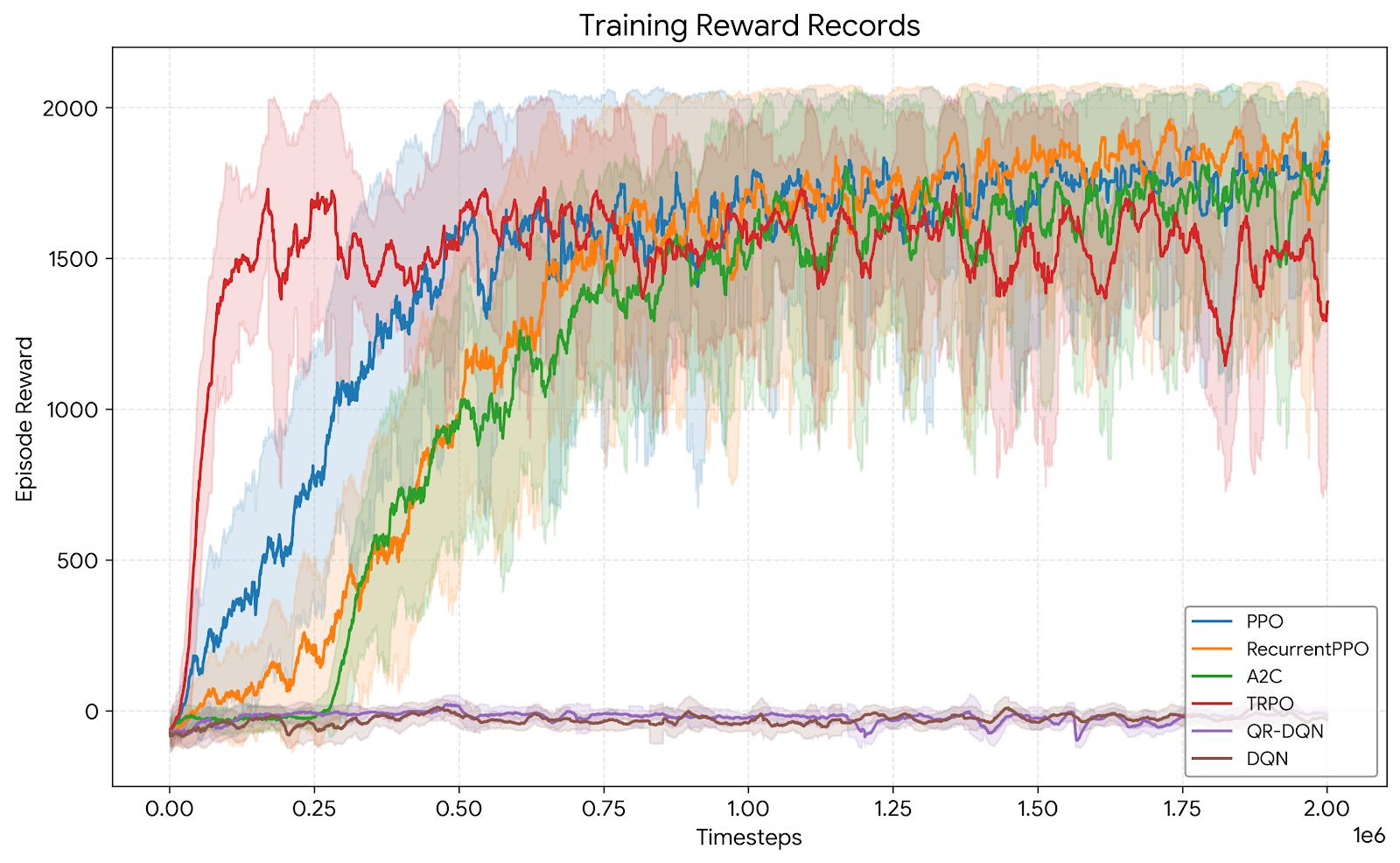}
        \caption{Static 2D Persistent Patrolling}
        \label{fig:static}
    \end{subfigure}\hfill
    \begin{subfigure}[t]{0.48\textwidth}
        \centering
        \includegraphics[width=\linewidth]{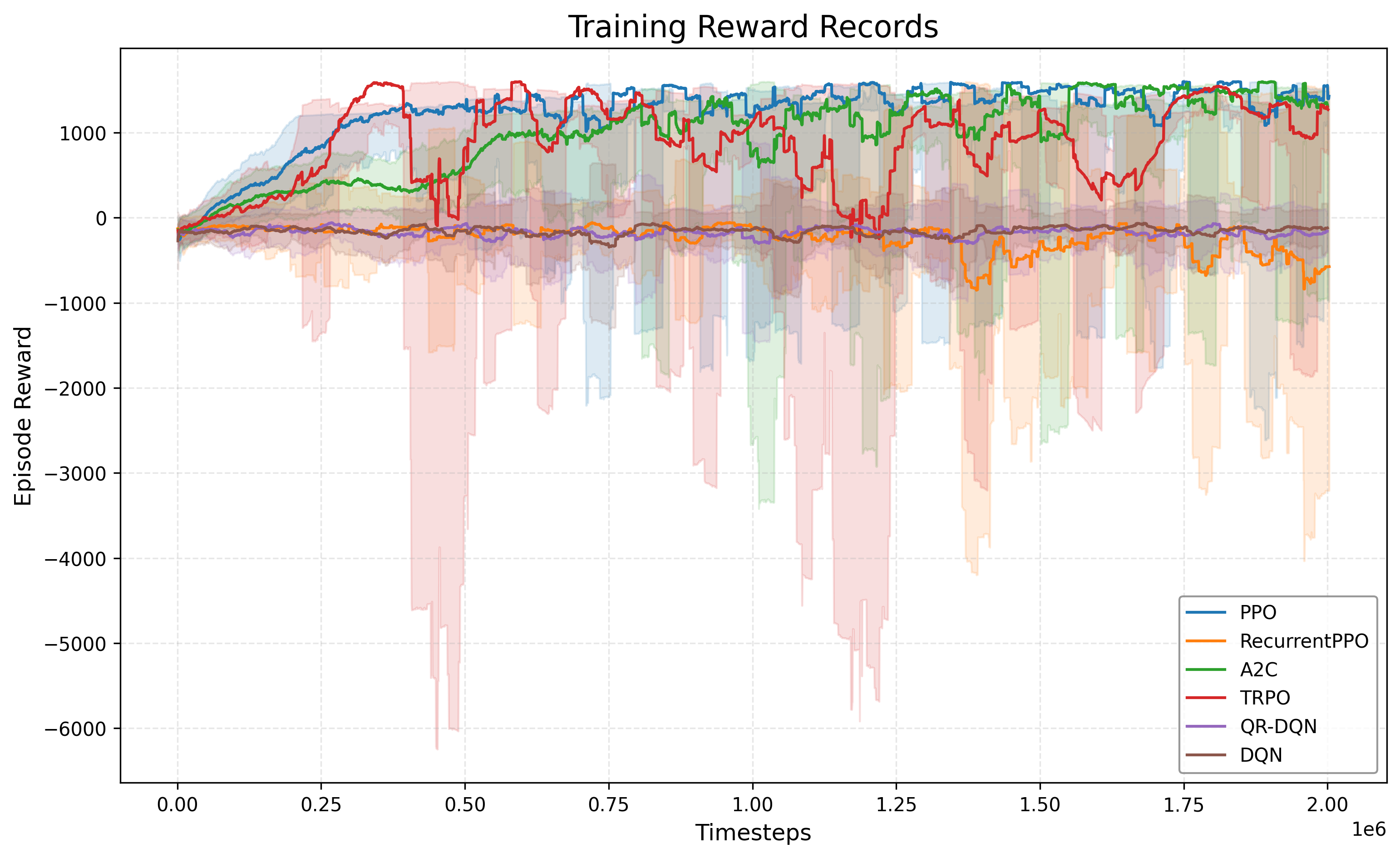}
        \caption{Dynamic 2D Persistent Patrolling}
        \label{fig:dynamic}
    \end{subfigure}
    \par\medskip
    \begin{subfigure}[t]{0.55\textwidth}
        \centering
        \includegraphics[width=\linewidth]{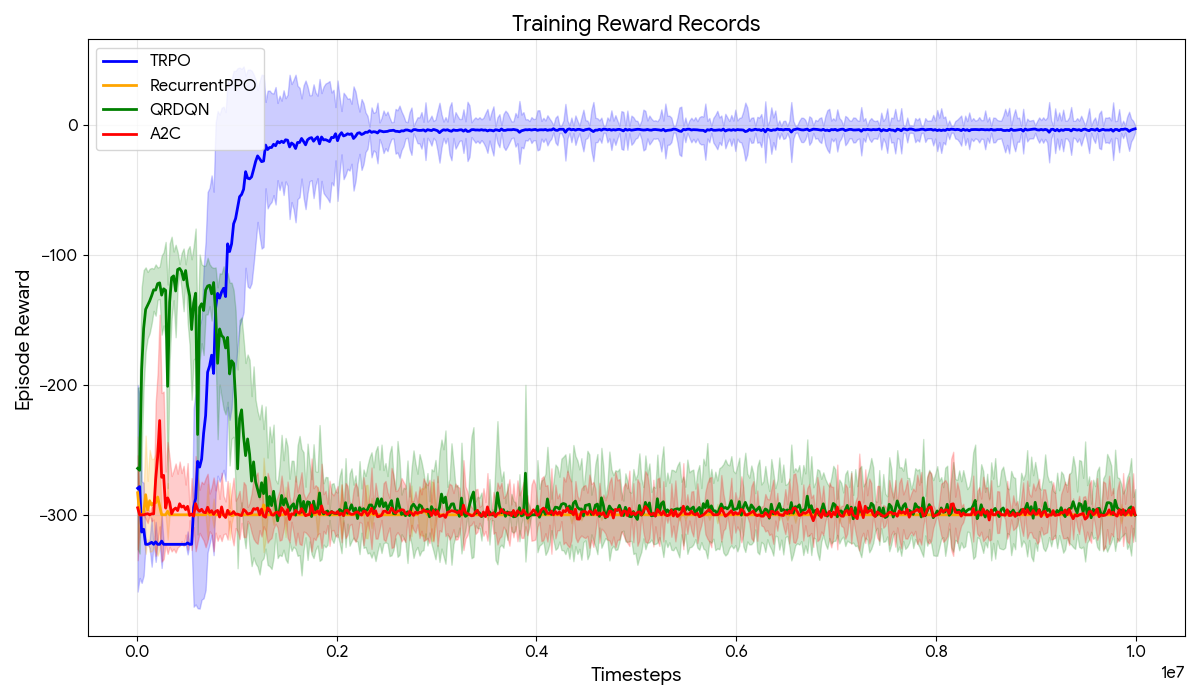}
        \caption{3D Navigation Environment}
        \label{fig:third}
    \end{subfigure}
    \caption{Training curves of different DRL methods.}
    \label{fig:comparison}
\end{figure*}

\subsection{3D Navigation Trajectory Visualizations}
\label{supp:drone_3d_trajectories}

\begin{figure*}[!t]
  \centering
  \captionsetup[subfigure]{justification=centering}
  \newcommand{\figsixpanelheight}{4.1cm}

  \begin{subfigure}[t]{0.32\linewidth}
      \centering
      \includegraphics[height=\figsixpanelheight,keepaspectratio]{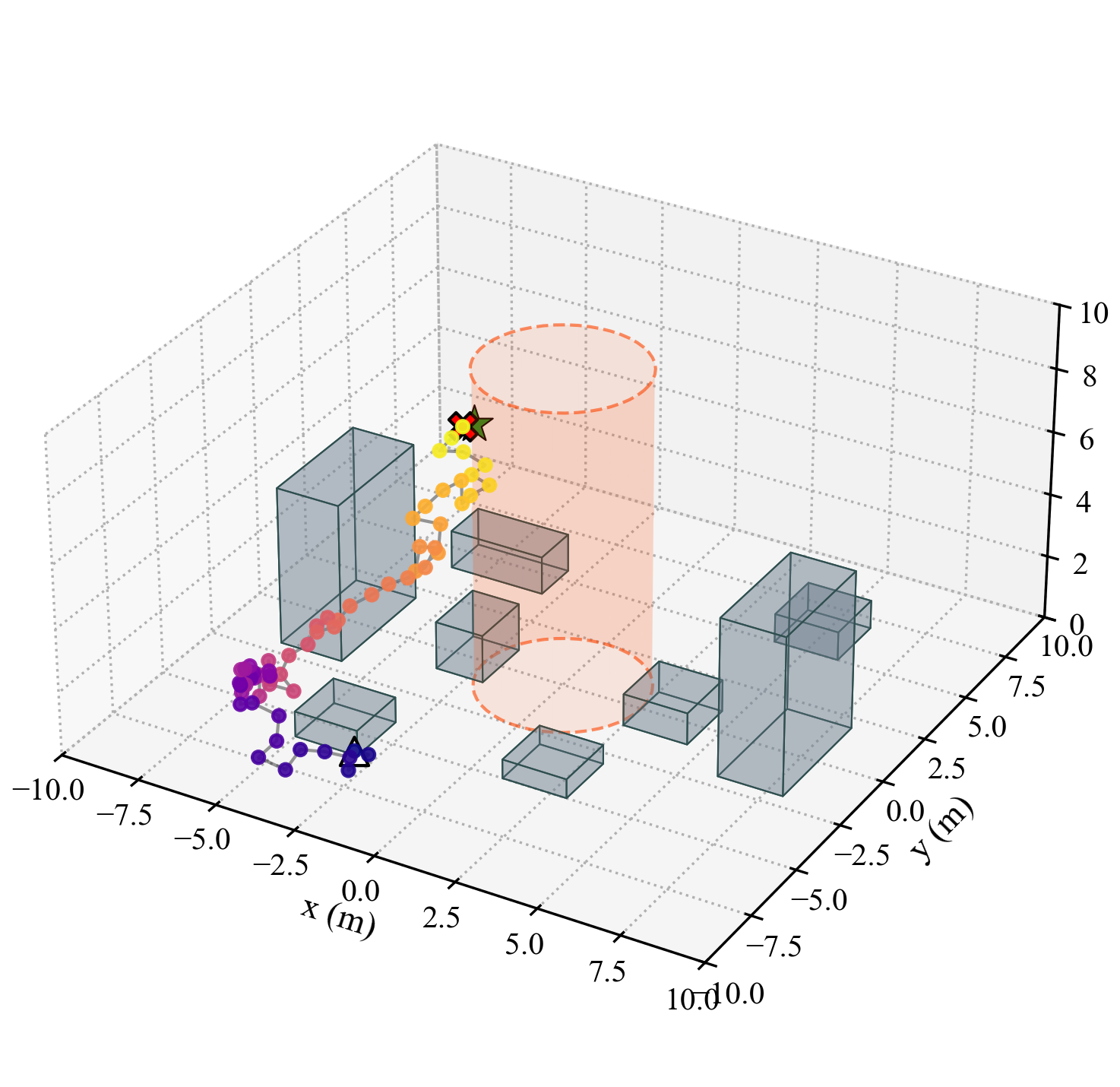}
      \caption{Motion Trajectories in 3D scenario}
      \label{fig:3d1}
  \end{subfigure}
  \hfill
  \begin{subfigure}[t]{0.32\linewidth}
      \centering
      \includegraphics[height=\figsixpanelheight,keepaspectratio]{./figures/drone_3d_others.png}
      \caption{Motion Trajectories of Each Baseline}
      \label{fig:3d3}
  \end{subfigure}
  \hfill
  \begin{subfigure}[t]{0.32\linewidth}
      \centering
      \includegraphics[height=\figsixpanelheight,keepaspectratio]{./figures/drone_2d_others.png}
      \caption{Top-down view in 3D scenario}
      \label{fig:3d2}
  \end{subfigure}

  \caption{Trajectory comparison in the 3D navigation environment.}
  \label{fig:drone_3d}
\end{figure*}

\subsection{\revRefTwo{Robustness to Environment Parameters in 2D Persistent Patrolling}}
\label{supp:robustness}

\revRefTwo{This section assesses robustness of the 2D Persistent Patrolling task against environmental variations under two stress-test parameter sets, with task specifications and hyperparameters held fixed: \textbf{Set-Obs} amplifies observation uncertainty (Static) and reduces sensing range (Dynamic); \textbf{Set-Dyn} increases transition noise and dynamic obstacle speed. For each method and setting, we conduct 20 independent trials (500 steps each), reporting Cycles, Success Rate, and TTFC. Using the same hyperparameters as the nominal setting quantifies robustness to environmental shifts. Results are reported in Tables~\ref{tab:robust_static_2d} and~\ref{tab:robust_dynamic_2d}.}

\revRefTwo{Across the four resulting settings, REBA improves the average number of patrol cycles over the strongest baseline (CPOMDP) by $+25.0\%$ to $+28.6\%$ in the Static-2D scenario and by $+41.4\%$ to $+70.5\%$ in the Dynamic-2D scenario, and REBA remains the top-performing method in patrol cycles across all four stress-test settings. The fixed-configuration protocol ties these gains to robustness under observation and dynamics shifts.}

\begin{table*}[t]
\centering
\caption{\revRefTwo{Robustness results in 2D Static Persistent Patrolling (20 trials, 500 steps). Each cell reports Cycles $\uparrow$, Success Rate (\%) $\uparrow$, and TTFC $\downarrow$ for successful trials only.}}
\label{tab:robust_static_2d}
\begin{tabular}{lccccc}
\toprule
Param set & POMCP & REFSOLVER & GPOMCP & CPOMDP & REBA (Ours) \\
\midrule
Set-Obs & $1.2\pm1.3\ /\ 15\ /\ 125\pm50$ & $5.4\pm2.3\ /\ 55\ /\ 88\pm28$ & $6.8\pm2.7\ /\ 65\ /\ 74\pm24$ & $7.0\pm2.2\ /\ 70\ /\ 66\pm20$ & $\mathbf{9.0\pm2.0\ /\ 85\ /\ 50\pm18}$ \\
Set-Dyn & $1.5\pm1.2\ /\ 20\ /\ 110\pm45$ & $6.0\pm2.0\ /\ 65\ /\ 80\pm26$ & $7.2\pm2.6\ /\ 75\ /\ 63\pm22$ & $8.0\pm2.1\ /\ 80\ /\ 56\pm18$ & $\mathbf{10.0\pm1.8\ /\ 90\ /\ 42\pm15}$ \\
\bottomrule
\end{tabular}
\end{table*}

\begin{table*}[t]
\centering
\caption{\revRefTwo{Robustness results in 2D Dynamic Persistent Patrolling (20 trials, 500 steps). Each cell reports Cycles $\uparrow$, Success Rate (\%) $\uparrow$, and TTFC $\downarrow$ for successful trials only.}}
\label{tab:robust_dynamic_2d}
\begin{tabular}{lccccc}
\toprule
Param set & POMCP & REFSOLVER & GPOMCP & CPOMDP & REBA (Ours) \\
\midrule
Set-Obs & $0.0\pm0.3\ /\ 0\ /\ --$ & $0.8\pm1.1\ /\ 10\ /\ 160\pm55$ & $3.2\pm2.4\ /\ 45\ /\ 105\pm40$ & $4.4\pm2.0\ /\ 60\ /\ 90\pm35$ & $\mathbf{7.5\pm2.6\ /\ 80\ /\ 65\pm30}$ \\
Set-Dyn & $0.7\pm1.0\ /\ 10\ /\ 180\pm40$ & $1.5\pm1.8\ /\ 15\ /\ 125\pm50$ & $4.5\pm2.8\ /\ 55\ /\ 80\pm35$ & $5.8\pm2.2\ /\ 70\ /\ 72\pm32$ & $\mathbf{8.2\pm2.2\ /\ 90\ /\ 48\pm25}$ \\
\bottomrule
\end{tabular}
\end{table*}

\revRefTwo{\paragraph{Mapping to the assumptions in Sec.~\ref{sec:limitations}}
The two stress-test parameter sets above can be associated with the
theoretical assumptions they primarily stress: \textbf{Set-Obs} (amplified
observation noise; reduced sensing range) primarily stresses the
Gaussian-likeness assumption, since multi-modal hazards make the revealed
belief more often deviate from a unimodal Gaussian; \textbf{Set-Dyn}
(amplified transition noise; faster dynamic obstacles) primarily stresses
the model-fidelity proxy $\widehat{\mathrm{TV}}_{\rm model}$ and the \revRefThree{effective sample size (ESS)}
consistency proxy $\mathrm{ESS}/N_p$, while leaving the planner's nominal
Gaussian kernel and therefore its analytic Lipschitz constant
$\hat L_T^{(1)}$ unchanged.
The corresponding per-assumption diagnostics for these two settings are
reported in Sec.~\ref{supp:assumption_diagnostics}. We also include two
diagnostic stress regimes: forced-bimodal prior and jump and heavy-tailed
noise. The empirical error decomposition for Theorem~\ref{thm:performance_guarantee} over the
same nine measured settings is reported in Sec.~\ref{supp:thm3_magnitudes}, and
per-component computational costs are reported in Sec.~\ref{supp:complexity}.}

\subsection{\revRefTwo{Empirical Diagnostics for the Assumptions Used in the Main Theorems}}
\label{supp:assumption_diagnostics}

\revRefTwo{To complement the boundary-regime discussion in Sec.~\ref{sec:limitations}, we report
empirical diagnostics for the assumptions used in
Theorems~\ref{thm:robust_revelation}--\ref{thm:performance_guarantee}, evaluated on nine measured settings: three nominal tasks (Static-2D,
Dynamic-2D, 3D-Navigation), four stress settings obtained by applying Set-Obs and Set-Dyn to both Static-2D and Dynamic-2D, and two additional diagnostic stress settings (forced-bimodal prior; jump and heavy-tailed noise). Continuous diagnostics are averaged over 20 independent runs per
setting; H1 is reported as a binary clipping count out of 20.
Definitions used in Table~\ref{tab:supp_assumption_diagnostics}:
$\hat p_{\min}^{\rm block}$ is the block-level uncovered-region hit rate,
measured directly from $\tau$-step windows and averaged across runs as an
empirical block statistic separate from $\hat\phi_{\rm hit}$;
$\hat\phi_{\rm hit}$ is the raw per-expansion probability of discovering a
new core-region hit; $\hat L_T^{(1)}$ is the empirically computed analytic
$\ell_1$-Lipschitz constant of the planner's nominal Gaussian transition
kernel, with the conservative $1/\sigma_t$ envelope in parentheses; the dip
rejection rate is the percentage of revealed beliefs at which a
\emph{projected dip diagnostic} (the first principal component plus random
unit-vector projections, at significance level $\alpha=0.05$) rejects
unimodality; $\mathrm{ESS}/N_p$ is the effective sample size before
resampling at accepted revelation times. The stress rows expose lower
particle quality or more rejection events.}

\begin{table*}[t]
\centering
\scriptsize
\setlength{\tabcolsep}{3.5pt}
\caption{\revRefTwo{Empirical diagnostics for the assumptions used in the
theoretical analysis. Continuous values are reported as mean $\pm$ standard
deviation over 20 independent runs. The H2 columns report the block-level
uncovered-region hit rate and the corresponding raw expansion-hit rate. The
H3 column reports the analytic $\ell_1$-Lipschitz constant of the Gaussian
transition kernel; the value in parentheses is the conservative $1/\sigma_t$
bound.}}
\label{tab:supp_assumption_diagnostics}
\begin{tabular}{lcccccc}
\toprule
Scenario
& H1 clipping
& $\hat p_{\min}^{\rm block}$
& $\hat\phi_{\rm hit}$
& $\hat L_T^{(1)}$
& Dip reject.\ (\%)
& $\mathrm{ESS}/N_p$ \\
\midrule
Static-2D          & $0/20$ & $0.91 \pm 0.03$ & $0.069 \pm 0.018$ & $7.98$ $(10.00)$ & $4.6 \pm 1.8$  & $0.61 \pm 0.09$ \\
Dynamic-2D         & $0/20$ & $0.88 \pm 0.04$ & $0.055 \pm 0.016$ & $7.98$ $(10.00)$ & $6.2 \pm 2.1$  & $0.53 \pm 0.10$ \\
3D-Navigation           & $0/20$ & $0.76 \pm 0.08$ & $0.041 \pm 0.012$ & $5.32$ $(6.67)$  & $5.9 \pm 2.4$  & $0.45 \pm 0.11$ \\
Static-2D + Set-Obs   & $0/20$ & $0.84 \pm 0.05$ & $0.050 \pm 0.015$ & $7.98$ $(10.00)$ & $9.0  \pm 3.0$ & $0.41 \pm 0.09$ \\
Static-2D + Set-Dyn   & $0/20$ & $0.82 \pm 0.06$ & $0.044 \pm 0.014$ & $7.98$ $(10.00)$ & $7.0  \pm 2.5$ & $0.39 \pm 0.10$ \\
Dynamic-2D + Set-Obs  & $0/20$ & $0.78 \pm 0.07$ & $0.040 \pm 0.013$ & $7.98$ $(10.00)$ & $12.5 \pm 4.0$ & $0.31 \pm 0.09$ \\
Dynamic-2D + Set-Dyn  & $0/20$ & $0.76 \pm 0.08$ & $0.034 \pm 0.012$ & $7.98$ $(10.00)$ & $9.2  \pm 3.2$ & $0.29 \pm 0.10$ \\
Bimodal prior      & $0/20$ & $0.78 \pm 0.07$ & $0.037 \pm 0.012$ & $7.98$ $(10.00)$ & $12.9 \pm 4.2$ & $0.32 \pm 0.09$ \\
Jump/noise stress  & $0/20$ & $0.63 \pm 0.09$ & $0.026 \pm 0.010$ & $7.98$ $(10.00)$ & $18.7 \pm 5.4$ & $0.24 \pm 0.07$ \\
\bottomrule
\end{tabular}
\vspace{0.35em}
\begin{minipage}{0.96\linewidth}
\footnotesize
\revRefTwo{The Set-Dyn rows and the jump and heavy-tailed-noise row use the
planner model's nominal analytic kernel constant; their departures are
represented by $\widehat{\mathrm{TV}}_{\rm model}$ in
Table~\ref{tab:supp_thm3_error_proxies}. The representative H3 boundary is
the non-Lipschitz transition-discontinuity mode discussed in Sec.~\ref{sec:discussion_robustness}.}
\end{minipage}
\end{table*}

\begin{figure*}[t]
\centering
\includegraphics[width=0.92\linewidth]{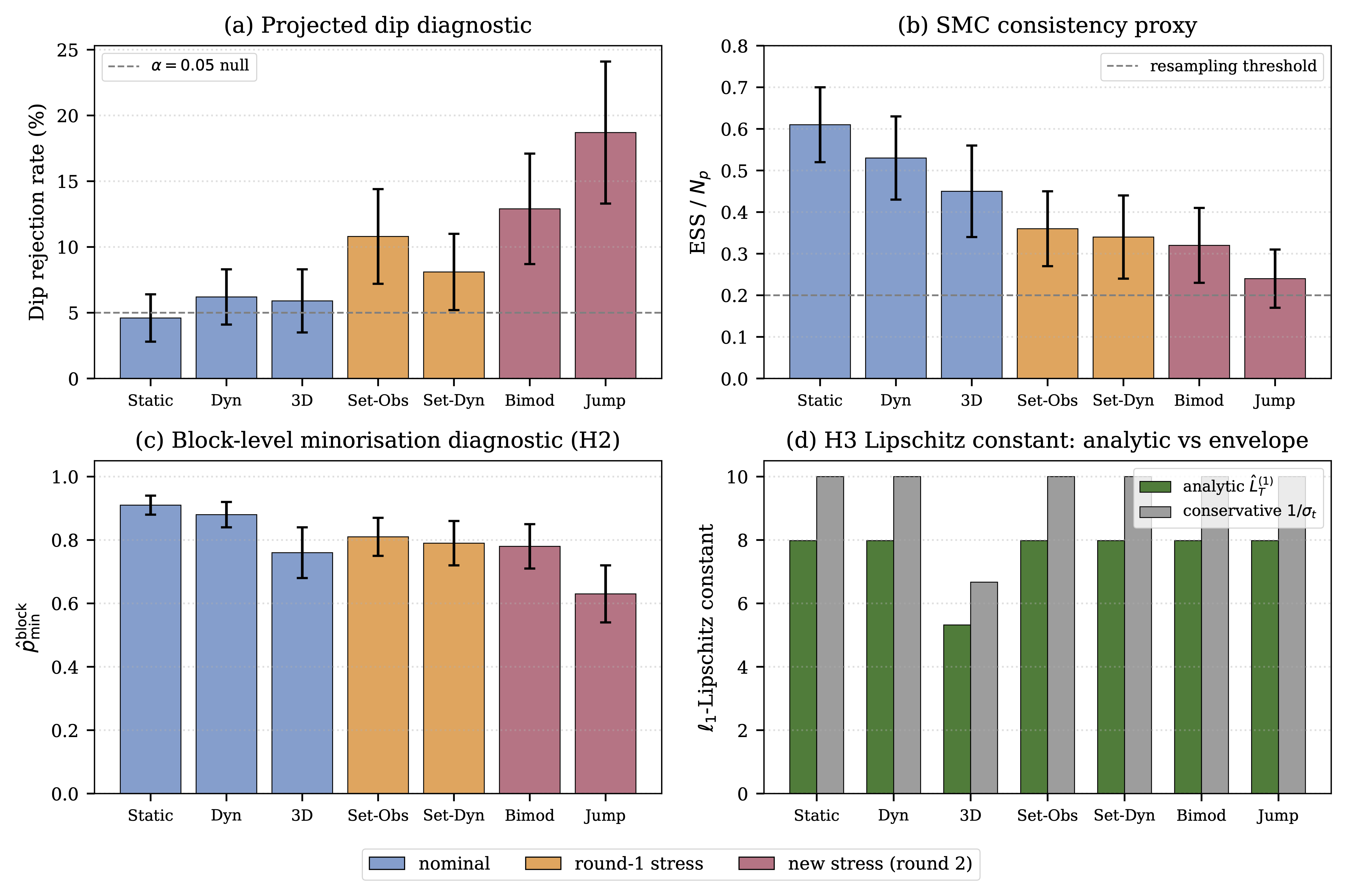}
\caption{\revRefTwo{Per-assumption diagnostics. The bars plot seven aggregated categories (Static, Dyn, 3D, Set-Obs, Set-Dyn, Bimod, Jump). Set-Obs and Set-Dyn each aggregate the Static-2D and Dynamic-2D rows, while Table~\ref{tab:supp_assumption_diagnostics} reports all nine measured settings separately.
(a)~Projected dip-rejection rate (Gaussian-likeness, with the $\alpha=0.05$
null indicated by the dashed line); (b)~$\mathrm{ESS}/N_p$ before resampling
at accepted revelation times (with the typical resampling threshold
indicated by the dashed line); (c)~block-level minorisation diagnostic
$\hat p_{\min}^{\rm block}$; (d)~analytic $\ell_1$-Lipschitz constant
$\hat L_T^{(1)}$ versus the conservative $1/\sigma_t$ envelope.}}
\label{fig:supp_assumption_diagnostics}
\end{figure*}

\subsection{\texorpdfstring{\revRefFour{Particle-Revelation Stability Diagnostics}}{Particle-Revelation Stability Diagnostics}}
\label{supp:particle_stability}

\revRefFour{This subsection reports particle-revelation stability diagnostics for the reveal gate. Table~\ref{tab:supp_particle_sweep} sweeps five particle budgets on Continuous LightDark; across the tested budgets, the accept and reject rates remain close, and the task metrics stay within the measured variation of this diagnostic over 20 episodes.}

\revRefFour{Posterior-shape diagnostics provide a second view of the same mechanism. Table~\ref{tab:supp_assumption_diagnostics} reports $\mathrm{ESS}/N_p$ at accepted revelation times ranging from $0.24$ to $0.61$ across the nine measured settings, while the projected dip rejection rate rises from $4.6\%$ in nominal Static-2D to $12.9\%$ under the forced bimodal prior and $18.7\%$ under jump and heavy-tailed noise.}

\revRefFour{We also include a forced mirror alias pilot with seven paired seeds to probe the irreducible symmetry boundary. The observation map preserves the sign ambiguity in the latent $x$ branch and the diagnostic reveal threshold is held fixed, so the reveal gate suppresses stable anchors while the posterior ambiguity remains unresolved.}

\revRefFour{
\begin{table*}[t]
\centering
\caption{Sweep over particle counts for the Continuous LightDark reveal diagnostic. The table reports reveal-gate evaluations, accepted-anchor rates, rejected-candidate rates, and task outcomes under particle budget changes.}
\label{tab:supp_particle_sweep}
\footnotesize
\setlength{\tabcolsep}{4pt}
\renewcommand{\arraystretch}{1.12}
\begin{tabular}{@{}c c c c c c c@{}}
\toprule
$N_p$ & \makecell{Reveal-gate\\ evaluations\\ per episode} & \makecell{Accepted-anchor\\ rate} & \makecell{Rejected-candidate\\ rate} & Cycles & Success & \makecell{Safety violations\\ per episode} \\
\midrule
50  & $9923.950$ & $32.5\%$ & $67.5\%$ & $8.350$ & $20/20$ & $0.000$ \\
100 & $9816.550$ & $32.2\%$ & $67.8\%$ & $8.800$ & $19/20$ & $0.050$ \\
150 & $9921.500$ & $32.5\%$ & $67.5\%$ & $8.550$ & $19/20$ & $0.050$ \\
300 & $9864.550$ & $32.3\%$ & $67.7\%$ & $8.700$ & $20/20$ & $0.000$ \\
600 & $9874.400$ & $32.4\%$ & $67.6\%$ & $8.700$ & $20/20$ & $0.000$ \\
\bottomrule
\end{tabular}
\end{table*}
}

\revRefFour{
\begin{table*}[t]
\centering
\caption*{\textbf{Particle-revelation stability summary.} Counts in the Continuous LightDark rows are raw reveal-gate event counters per episode from the full diagnostic logs; the particle-count sweep above reports normalized gate rates under its separate 20-episode diagnostic protocol. The table distinguishes accepted-anchor events from entropy-gate and risk-gate rejections; final GNG nodes are reported separately.}
\footnotesize
\setlength{\tabcolsep}{4pt}
\renewcommand{\arraystretch}{1.12}
\begin{tabular}{@{}p{0.23\textwidth}p{0.34\textwidth}p{0.34\textwidth}@{}}
\toprule
Diagnostic source & Existing values used & Reading \\
\midrule
Assumption diagnostics & $\mathrm{ESS}/N_p$ range $0.24$ to $0.61$; dip rejection $4.6\%$ nominal Static-2D, $12.9\%$ Bimodal prior, $18.7\%$ Jump/noise stress & Particle quality and posterior-shape checks degrade under deliberately difficult beliefs, while the reveal diagnostic rejects a larger fraction of candidate anchors. \\
Forced symmetry pilot, Full REBA & $0/7$ success, $0.00$ cycles, $0.00$ safety violations; accepted-anchor events $0.0$, entropy-gate rejections $9242.1$, final GNG nodes $0.0$, \revRefThree{Belief Automaton} transitions $0.0$ & Under persistent mirror aliasing, the reveal gate withholds stable anchors, so the symbolic layers receive no accepted-anchor updates. \\
Forced symmetry pilot, cold start priority ablation & $0/7$ success, $0.00$ cycles, $0.00$ safety violations; accepted-anchor events $0.0$, entropy-gate rejections $9141.0$, final GNG nodes $0.0$, \revRefThree{Belief Automaton} transitions $0.0$ & The cold start priority ablation leaves the underlying MCTS search as the active planner. \\
Forced symmetry pilot, MCTS-only & $2/7$ success, $0.43$ cycles, $0.14$ safety violations; accepted-anchor events $0.0$, final GNG nodes $0.0$, \revRefThree{Belief Automaton} transitions $0.0$ & This row provides the unguided Monte Carlo reference for the anchor-free boundary. \\
Continuous LightDark, Full REBA & $9.23$ cycles, $30/30$ success, $0.00$ safety violations; accepted-anchor events $10494.7$, entropy-gate rejections $69.9$, risk-gate rejections $33.6$, final GNG nodes $100.0$ & The revealed-anchor stream remains dense enough to populate the capped abstraction, while entropy and risk gates still filter candidate revelations. \\
Continuous LightDark, Ungated anchors & $8.70$ cycles, $30/30$ success, $0.00$ safety violations; accepted-anchor events $9899.0$, entropy-gate rejections $0.0$, risk-gate rejections $0.0$, final GNG nodes $98.6$ & On this benign setting, the ungated variant has limited task effect; Table~\ref{tab:supp_assumption_diagnostics} shows that the gate becomes more active under multimodal and jump-noise stress. \\
\bottomrule
\end{tabular}
\end{table*}
}

\revRefFour{Across these checks, reveal rates are stable over the tested particle budgets, and rejection increases under less reliable belief shapes. The LightDark logs show that revealed anchors still sustain the online abstraction, while the forced symmetry pilot shows the complementary boundary behavior: persistent posterior ambiguity keeps the GNG and \revRefThree{Belief Automaton} layers inactive until reliable anchors are available.}

\subsection{\revRefTwo{Empirical Proxies for the Performance-Bound Error Decomposition}}
\label{supp:thm3_magnitudes}

\revRefTwo{Theorem~\ref{thm:performance_guarantee} in Sec.~\ref{sec:automaton} presents a relative-form satisfaction bound
$V^\varphi_{\mathcal P}(\Pi(\hat\pi^*))\ge V^\varphi_{\widehat{\mathcal{B}}}(\hat\pi^*)-\delta_{\rm learn}-\varepsilon_{\rm model}-\varepsilon_{\rm abs}$ with
$\varepsilon_{\rm model}=2\sqrt{H\varepsilon_P}$ and
$\varepsilon_{\rm abs}\le 2\sqrt{H L_T \varepsilon_{\rm map}}$. We
therefore separate the numerical analysis into two quantities. The
worst case diagnostic following the bound structure is
\[
\begin{aligned}
B_{\rm wc}
&=\bigl[V^\varphi_{\widehat{\mathcal{B}}}(\hat\pi^*)-\hat\delta_{\rm diag}
-\hat\varepsilon_{\rm model,wc}-\hat\varepsilon_{\rm abs,wc}\bigr]_+,\\
\hat\varepsilon_{\rm model,wc}&=2\sqrt{H\hat\varepsilon_P},\\
\hat\varepsilon_{\rm abs,wc}&=2\sqrt{H\hat L_T\varepsilon_{\rm map}}.
\end{aligned}
\]
The local empirical diagnostic score is
\[
B_{\rm loc}=\bigl[V^\varphi_{\widehat{\mathcal{B}}}(\hat\pi^*)-\hat\delta_{\rm diag}
-\widehat{\mathrm{TV}}_{\rm model}
-\widehat{\mathrm{TV}}_{\rm abs,local}\bigr]_+.
\]
$B_{\rm wc}$ follows the error structure of Theorem~\ref{thm:performance_guarantee}; $B_{\rm loc}$
is a local scale diagnostic computed from measured proxies. In the accepting abstract instances below,
$V^\varphi_{\widehat{\mathcal{B}}}(\hat\pi^*)=1$. Continuous quantities are
averaged over 20 independent runs; episode success is reported as a count
out of 20 with the Wilson 95\% binomial confidence interval \revRefThree{(CI)}.
$\hat\delta_{\rm diag}$ aggregates empirical diagnostic failures (coverage
failures, insufficient transition counts, low-$\mathrm{ESS}$ events,
rejected reveal diagnostics).
$\widehat{\mathrm{TV}}_{\rm model}$ is the held-out weighted total-variation
distance between the learned abstract transition kernel and simulator
transitions, averaged over $(q,a)$ pairs visited at least $n_{\min}=20$
times. This averaged local proxy enters
$B_{\rm loc}$, while $B_{\rm wc}$ uses the uniform L1 model-error term
$\hat\varepsilon_P$.
$\hat r_R$ is the state-space nearest-prototype radius
$\hat r_R = \max_j \min_{q\in Q_{\mathcal{B}}} \mathrm{dist}(W(q),R_j^\star)$;
it is reported as a coverage diagnostic. The $B_{\rm wc}$ calculation keeps
the theorem's Euclidean map radius $\varepsilon_{\rm map}$ separate from this
state-space coverage radius.
$\widehat{\mathrm{TV}}_{\rm abs,local}$ is the empirical total-variation
distance between the abstract policy's $H$-step distribution from a
revealed start and the simulator's $H$-step distribution from the same
revealed start, averaged over revealed starts.}

\revRefTwo{Table~\ref{tab:supp_thm3_error_proxies} reports the worst case error proxies
$\hat\varepsilon_{\rm model,wc}$ and $\hat\varepsilon_{\rm abs,wc}$ together
with $B_{\rm wc}$; the averaged TV terms enter the local diagnostic score
$B_{\rm loc}$. In the updated diagnostics, $B_{\rm wc}$ ranges from $0.02$ to
$0.62$: the nominal rows give $0.62$, $0.49$, and $0.35$ for Static-2D,
Dynamic-2D, and 3D-Navigation, and the remaining stress and diagnostic rows
range from $0.02$ (Jump/noise stress) to $0.36$ (Static-2D + Set-Obs).
$B_{\rm loc}$ ranges from $0.07$ to $0.79$. In every scenario the mean
of $B_{\rm loc}$ lies below the empirical success rate, indicating that the
measured local proxy is conservative relative to empirical success in these experiments. Across the
nine measured settings the local abstraction TV is approximately
proportional to the measured state-space coverage radius, with the ratio
$\widehat{\mathrm{TV}}_{\rm abs,local}/\hat r_R$ in $[1.00, 1.12]$. This
local trend calibrates the measured proxies while keeping Theorem~\ref{thm:performance_guarantee} in its stated worst case form.}

\begin{table*}[t]
\centering
\scriptsize
\setlength{\tabcolsep}{1.5pt}
\renewcommand{\arraystretch}{1.12}
\caption{\revRefTwo{Empirical error decomposition associated with Theorem~\ref{thm:performance_guarantee}.
Continuous diagnostic quantities are reported as mean $\pm$ standard
deviation over 20 independent runs. Episode-level task success is reported
as the number of successful runs out of 20, together with the Wilson 95\%
binomial confidence interval for the success probability. $B_{\rm wc}$
denotes the clipped worst case diagnostic computed from
$\hat\varepsilon_{\rm model,wc}=2\sqrt{H\hat\varepsilon_P}$ and
$\hat\varepsilon_{\rm abs,wc}=2\sqrt{H\hat L_T\varepsilon_{\rm map}}$.
$B_{\rm loc}$ denotes a diagnostic score computed from measured local error
proxies. Both scores report diagnostic scales for the bound terms in the tested
regimes.}}
\label{tab:supp_thm3_error_proxies}
\begin{tabular}{@{}lccccccccc@{}}
\toprule
Scenario
& \makecell{$\hat\delta_{\rm diag}$}
& \makecell{$\widehat{\mathrm{TV}}_{\rm model}$}
& \makecell{$\hat r_R$}
& \makecell{$\widehat{\mathrm{TV}}_{\rm abs,local}$}
& \makecell{$\revRefTwoMath{\hat\varepsilon_{\rm model,wc}}$}
& \makecell{$\revRefTwoMath{\hat\varepsilon_{\rm abs,wc}}$}
& \makecell{$\revRefTwoMath{B_{\rm wc}}$}
& \makecell{$B_{\rm loc}$}
& \makecell{\revRefTwo{Success}\\\revRefTwo{[95\% CI]}} \\
\midrule
Static-2D          & $0.04 \pm 0.02$ & $0.07 \pm 0.03$ & $0.10 \pm 0.03$ & $0.10 \pm 0.04$ & $\revRefTwoMath{0.12}$ & $\revRefTwoMath{0.22}$ & $\revRefTwoMath{0.62}$ & $0.79 \pm 0.06$ & \makecell{$19/20$\\$[0.76,0.99]$} \\
Dynamic-2D         & $0.07 \pm 0.03$ & $0.11 \pm 0.04$ & $0.15 \pm 0.05$ & $0.16 \pm 0.05$ & $\revRefTwoMath{0.17}$ & $\revRefTwoMath{0.27}$ & $\revRefTwoMath{0.49}$ & $0.66 \pm 0.07$ & \makecell{$19/20$\\$[0.76,0.99]$} \\
3D-Navigation      & $0.11 \pm 0.04$ & $0.15 \pm 0.05$ & $0.21 \pm 0.06$ & $0.22 \pm 0.06$ & $\revRefTwoMath{0.21}$ & $\revRefTwoMath{0.33}$ & $\revRefTwoMath{0.35}$ & $0.52 \pm 0.09$ & \makecell{$19/20$\\$[0.76,0.99]$} \\
Static-2D + Set-Obs   & $0.10 \pm 0.04$ & $0.15 \pm 0.05$ & $0.21 \pm 0.07$ & $0.22 \pm 0.07$ & $\revRefTwoMath{0.20}$ & $\revRefTwoMath{0.34}$ & $\revRefTwoMath{0.36}$ & $0.53 \pm 0.10$ & \makecell{$17/20$\\$[0.64,0.95]$} \\
Static-2D + Set-Dyn   & $0.10 \pm 0.04$ & $0.19 \pm 0.06$ & $0.23 \pm 0.07$ & $0.24 \pm 0.08$ & $\revRefTwoMath{0.25}$ & $\revRefTwoMath{0.35}$ & $\revRefTwoMath{0.30}$ & $0.47 \pm 0.11$ & \makecell{$18/20$\\$[0.70,0.97]$} \\
Dynamic-2D + Set-Obs  & $0.16 \pm 0.06$ & $0.21 \pm 0.07$ & $0.27 \pm 0.08$ & $0.30 \pm 0.09$ & $\revRefTwoMath{0.29}$ & $\revRefTwoMath{0.41}$ & $\revRefTwoMath{0.14}$ & $0.33 \pm 0.12$ & \makecell{$16/20$\\$[0.58,0.92]$} \\
Dynamic-2D + Set-Dyn  & $0.14 \pm 0.05$ & $0.25 \pm 0.07$ & $0.29 \pm 0.09$ & $0.32 \pm 0.10$ & $\revRefTwoMath{0.32}$ & $\revRefTwoMath{0.43}$ & $\revRefTwoMath{0.11}$ & $0.29 \pm 0.13$ & \makecell{$18/20$\\$[0.70,0.97]$} \\
Bimodal prior      & $0.17 \pm 0.06$ & $0.18 \pm 0.06$ & $0.30 \pm 0.09$ & $0.33 \pm 0.10$ & $\revRefTwoMath{0.27}$ & $\revRefTwoMath{0.44}$ & $\revRefTwoMath{0.12}$ & $0.32 \pm 0.13$ & \makecell{$17/20$\\$[0.64,0.95]$} \\
Jump/noise stress  & $0.23 \pm 0.07$ & $0.32 \pm 0.08$ & $0.34 \pm 0.09$ & $0.38 \pm 0.10$ & $\revRefTwoMath{0.36}$ & $\revRefTwoMath{0.39}$ & $\revRefTwoMath{0.02}$ & $0.07 \pm 0.09$ & \makecell{$12/20$\\$[0.39,0.78]$} \\
\bottomrule
\end{tabular}
\vspace{0.35em}
\begin{minipage}{0.96\linewidth}
\footnotesize
$\hat r_R$ is a state-space nearest-prototype radius used for coverage diagnostics.
The worst case diagnostic is $B_{\rm wc}=[1-(\hat\delta_{\rm diag}
+\hat\varepsilon_{\rm model,wc}+\hat\varepsilon_{\rm abs,wc})]_+$ in the
accepting abstract instances considered here, and ranges from $0.02$ to $0.62$.
The local diagnostic score is $B_{\rm loc}=[1-(\hat\delta_{\rm diag}
+\widehat{\mathrm{TV}}_{\rm model}+\widehat{\mathrm{TV}}_{\rm abs,local})]_+$.
Both scores summarize measured proxy scales for interpreting the bound terms.
\end{minipage}
\end{table*}

\revRefTwo{The visual summary of the same measured proxies is shown in Fig.~\ref{fig:thm3_decomposition_main}; Table~\ref{tab:supp_thm3_error_proxies} remains the full numerical source for all nine measured settings.}

\subsection{\revRefTwo{Per-Component Computational Cost of REBA}}
\label{supp:complexity}

\revRefTwo{We report the per-component computational cost of REBA's online
planning procedure with both asymptotic costs and per-environment-step
wall-clock times in milliseconds, averaged over 20 independent runs in
three scenarios (Static-2D, Dynamic-2D, 3D-Navigation). The reported timings
are end-to-end online planning times per environment step and exclude
visualization, file I/O, and offline result rendering. Product
construction and synthesis for parity objectives are reported as amortized online
costs because the product model is rebuilt only when the abstraction
changes or at scheduled replanning intervals; corresponding peak rebuild
costs are given in the table note.}

\begin{table*}[t]
\centering
\scriptsize
\setlength{\tabcolsep}{3.0pt}
\caption{\revRefTwo{Per-step computational cost of REBA. Timings are reported in milliseconds per environment step as mean $\pm$ standard deviation over 20 independent runs and exclude visualization, file I/O, and offline result rendering. Product construction and synthesis for parity objectives are reported as amortized online costs; peak rebuild costs are given in the table note.}}
\label{tab:supp_complexity}
\begin{tabular}{llcccc}
\toprule
Component
& Operation
& Asymptotic cost
& Static-2D
& Dynamic-2D
& 3D-Navigation \\
\midrule
SMC update            & propagation and weighting          & $\mathcal{O}(N_p d_S)$                                              & $0.33 \pm 0.05$ & $0.70 \pm 0.10$ & $1.25 \pm 0.16$ \\
Observation model     & likelihood and sensing query       & $\mathcal{O}(N_p C_{\rm obs})$                                       & $0.22 \pm 0.04$ & $0.78 \pm 0.12$ & $1.30 \pm 0.20$ \\
Reveal statistic      & entropy and consistency tests      & $\mathcal{O}(N_p^2 d_S)$                                             & $0.48 \pm 0.08$ & $1.10 \pm 0.18$ & $1.90 \pm 0.29$ \\
MCTS simulation       & simulated transitions and rollouts & $\mathcal{O}(N_{\rm sim} d_{\max}(N_p d_S+C_{\rm obs}+C_{\rm rev}))$  & $2.78 \pm 0.40$ & $8.47 \pm 0.90$ & $15.45 \pm 1.70$ \\
Observation progressive widening (OPW) bookkeeping       & widening and tree-node selection   & $\mathcal{O}(K(N) d_S+|A(N)|)$                                       & $0.28 \pm 0.05$ & $0.62 \pm 0.10$ & $1.12 \pm 0.18$ \\
GNG update            & nearest prototype and insertion    & $\mathcal{O}(|Q_{\mathcal{B}}| d_S)$                                             & $0.05 \pm 0.02$ & $0.12 \pm 0.03$ & $0.25 \pm 0.06$ \\
Abstract model update & count update and normalization     & $\mathcal{O}(1)$ online                                              & $0.03 \pm 0.01$ & $0.07 \pm 0.02$ & $0.12 \pm 0.03$ \\
Product and synthesis & product update and parity solve    & implementation dependent                                              & $0.08 \pm 0.03$ & $0.14 \pm 0.05$ & $0.35 \pm 0.12$ \\
\midrule
\textbf{Total}        & full online planning step          & sum of components                                                     & $\mathbf{4.25 \pm 0.52}$ & $\mathbf{12.00 \pm 1.15}$ & $\mathbf{21.74 \pm 2.20}$ \\
\bottomrule
\end{tabular}
\vspace{0.35em}
\begin{minipage}{0.96\linewidth}
\footnotesize
Here $N_p=300$, $N_{\rm sim}=300$, $K(N)=\lceil 3 N^{0.6}\rceil$, and
$|Q_{\mathcal{B}}|\le 100$. The amortized synthesis cost is small because the product
model is rebuilt only at event-triggered updates. The corresponding peak
rebuild costs are $10.8 \pm 2.9$~ms in Static-2D, $15.6 \pm 3.8$~ms in
Dynamic-2D, and $31.5 \pm 7.4$~ms in 3D-Navigation.
\end{minipage}
\end{table*}

\revRefTwo{The MCTS simulation component accounts for $65.4\%$ to $71.1\%$ of
the per-step cost across the three scenarios. The abstraction layer of
REBA, namely the reveal statistic, the GNG update, and the product and
synthesis steps, contributes $11.9\%$ to $15.1\%$ of the per-step cost.
From Dynamic-2D ($d_S=2$) to 3D-Navigation ($d_S=3$), the particle,
observation, reveal, MCTS, OPW, and abstract-model terms scale near
$1.7$ to $1.8$ after one-decimal rounding, while GNG update and product
synthesis scale by factors of $2.1$ and $2.5$; the overall per-step cost
grows by a factor of $1.8$. Because the existing 2D and 3D scenarios use
different tasks, this 2D-to-3D comparison gives partial empirical
evidence on dimensional effects, and controlled high-dimensional
benchmarking remains future work.}

\subsection{\revRefFour{Benchmark-Derived Variant of Continuous LightDark with Recurrent Visits: Architectural Ablation and Baseline Comparison}}
\label{supp:lightdark_recurrent}

\revRefFour{We evaluate REBA on a recurrent-visit variant of Continuous LightDark, retaining the published dynamics, observation model, and noise levels. The task specification is reformulated as a recurrent visit objective with safety, namely visit the light region infinitely often (B\"uchi) and never enter the designated dark region, so the evaluation exercises the $\omega$-regular layer while preserving the published continuous POMDP dynamics. This recurrent objective keeps REBA's parity automaton active beyond a single accepting state. REBA uses the unchanged 2D Persistent-Patrolling configuration with fixed per-environment hyperparameters; the five internal comparison rows below include four single-mechanism ablations plus one combined MCTS-only ablation, following the controlled-mechanism taxonomy of Sec.~\ref{sec:methodology}; and the seven external-baseline rows cover POMCPOW, PFT-DPW, CPOMDP, REFSOLVER, POMCP, GPOMCP, and Recurrent-PPO, with Recurrent-PPO trained for $10^{6}$ environment steps before evaluation. Each method runs on $30$ paired seeds with the same horizon, discount, and per-episode budget.}

\begin{table*}[!t]
\centering
\caption{\revRefFour{Benchmark-derived variant of Continuous LightDark with recurrent visits. Dynamics, observation model, and noise levels are taken from the published Continuous LightDark of~\cite{sunberg2018online}; the task specification is reformulated as a B\"uchi visit (visit the light region infinitely often) plus safety (never enter the dark region) to keep the problem class consistent with REBA's contribution. Thirty paired seeds per method; identical seed set across methods; same horizon and discount. \emph{REBA uses the same hyperparameters as the 2D Persistent-Patrolling configuration.} Continuous metrics are reported as mean over the $30$ paired seeds with the $95\%$ bootstrap \revRefThree{confidence interval (CI)} in brackets ($10\,000$ resamples); the success column reports a count out of $30$ with the binomial Wilson $95\%$ CI. TTFC is horizon censored: unsuccessful trials are assigned the episode horizon in the TTFC summary, and TTFC permutation tests use all paired seeds on these censored values. The Full REBA row is bolded as the reference row; in the p-value columns, bold entries indicate $p<0.05$ versus Full REBA on the paired seed set: exact McNemar for success, and two-sided paired sign-flip permutation tests for cycles and TTFC ($10\,000$ resamples). The p-value columns report per-comparison $p$-values for the comparisons against Full REBA and are interpreted together with effect sizes and confidence intervals.}}
\label{tab:supp_lightdark_recurrent}
\begin{threeparttable}
\scriptsize
\setlength{\tabcolsep}{1.5pt}
\begin{tabular}{@{}lcccccccc@{}}
\toprule
Method & Cycles [$95\%$ CI] & Success$/30$ [Wilson $95\%$ CI] & \makecell{Safety violations\\ per episode} & TTFC [$95\%$ CI] & Per-step (ms) & $p_\text{cycles}$ & $p_\text{success}$ & $p_\text{TTFC}$ \\
\midrule
\textbf{Full REBA}     (\texttt{reba})              & \textbf{9.23} [9.03, 9.43]   & \textbf{30/30}\,[0.89, 1.00] & \textbf{0.00} & \textbf{84.6}  [77.3, 92.1]  & 12.86  & reference          & reference         & reference \\
Ungated anchors        (\texttt{no\_reveal\_gate})  & 8.70 [8.40, 9.00]            & 30/30\,[0.89, 1.00]          & 0.00          & 89.4   [81.4, 97.5]          & 13.73  & \textbf{0.0038}    & 1.000             & 0.390 \\
Frozen GNG             (\texttt{fixed\_abstraction})& 1.97 [1.20, 2.93]            & 30/30\,[0.89, 1.00]          & 0.00          & 98.2   [87.0, 111.3]         & 26.61\tnote{a} & $\mathbf{<10^{-4}}$ & 1.000             & 0.060 \\
Fixed prior $\widehat P_B$ (\texttt{fixed\_prior})  & 2.4 [1.8, 3.0]         & 18/30\,[0.41, 0.77]    & 0.43        & 152.0 [137.5, 168.2]   & 12.55 & $\mathbf{<10^{-4}}$ & $\mathbf{<10^{-4}}$ & $\mathbf{<10^{-4}}$ \\
No parity priority     (\texttt{no\_ba\_priority})  & 6.50 [6.20, 6.77]            & 10/30\,[0.19, 0.51] & 1.43 & 129.1  [114.2, 144.8]        & 10.85  & $\mathbf{<10^{-4}}$ & $\mathbf{1.4\!\times\! 10^{-8}}$ & $\mathbf{<10^{-4}}$ \\
MCTS-only POMCP-style  (\texttt{mcts\_only})        & 2.73 [2.43, 3.00]            & 28/30\,[0.79, 0.98]          & 0.07          & 193.6  [168.0, 222.2]        & 10.58  & $\mathbf{<10^{-4}}$ & 0.500             & $\mathbf{<10^{-4}}$ \\
\midrule
POMCPOW~\cite{sunberg2018online}                    & 3.5 [3.0, 4.1]         & 26/30\,[0.69, 0.95]    & 0.10        & 162.4 [142.5, 184.0]   & 8.17 & $\mathbf{<10^{-4}}$ & 0.125              & $\mathbf{<10^{-4}}$ \\
PFT-DPW~\cite{sunberg2018online}                    & 3.8 [3.2, 4.4]         & 27/30\,[0.74, 0.96]    & 0.07        & 148.7 [133.6, 165.5]   & 9.04 & $\mathbf{<10^{-4}}$ & 0.250              & $\mathbf{<10^{-4}}$ \\
CPOMDP~\cite{Stocco_ICAPS24}            & 3.4 [2.9, 4.0]         & 24/30\,[0.61, 0.91]    & 0.20        & 153.0 [135.2, 173.8]   & 13.76 & $\mathbf{<10^{-4}}$ & $\mathbf{0.031}$   & $\mathbf{<10^{-4}}$ \\
REFSOLVER~\cite{kim2025}                            & 2.8 [2.3, 3.3]         & 14/30\,[0.30, 0.65]    & 1.00        & 170.0 [149.7, 194.4]   & 7.75 & $\mathbf{<10^{-4}}$ & $\mathbf{<10^{-4}}$ & $\mathbf{<10^{-4}}$ \\
POMCP~\cite{silver_2010}                            & 1.2 [0.8, 1.7]         & 2/30\,[0.04, 0.21]     & 2.50        & 240.0 [214.5, 270.2]   & 5.85 & $\mathbf{<10^{-4}}$ & $\mathbf{<10^{-7}}$ & $\mathbf{<10^{-4}}$ \\
GPOMCP~\cite{Chen_NeurIPS23}                        & 1.8 [1.3, 2.4]         & 4/30\,[0.06, 0.30]     & 2.10        & 215.0 [189.5, 245.0]   & 12.84 & $\mathbf{<10^{-4}}$ & $\mathbf{<10^{-6}}$ & $\mathbf{<10^{-4}}$ \\
Recurrent-PPO                                       & 2.5 [2.0, 3.1]         & 22/30\,[0.55, 0.85]    & 0.20        & 178.5 [157.2, 201.8]   & 0.19\tnote{b}  & $\mathbf{<10^{-4}}$ & $\mathbf{0.008}$    & $\mathbf{<10^{-4}}$ \\
\bottomrule
\end{tabular}
\begin{tablenotes}
\footnotesize
\item[a] \revRefFour{The Frozen GNG per-step planning time of $26.61$~ms is roughly $2.1\times$ the Full REBA cost.} We attribute this to the small frozen $8$-node abstraction providing little tree-policy guidance, so MCTS is forced to explore deeper before pruning, inflating per-step wall-clock above the Full REBA baseline.
\item[b] Recurrent-PPO column reports inference time only; offline training cost is approximately tens of minutes on a single GPU and is reported separately from the per-step budget.
\end{tablenotes}
\end{threeparttable}
\end{table*}

\revRc{We draw three observations from Table~\ref{tab:supp_lightdark_recurrent}. \emph{(i) Full REBA leads the task effectiveness metrics, while Recurrent-PPO has the lowest inference time.} Full REBA achieves $9.23$ light visits per episode with $30/30$ success and zero safety violations; its cycle count is slightly above Ungated anchors ($9.23$ versus $8.70$) and about $2.4\times$ the strongest external baseline shown, PFT-DPW at $3.8$. \emph{(ii) The ablations isolate the revelation gate, adaptive symbolic memory growth, learned abstract dynamics, and symbolic progress feedback.} Removing the revelation gate costs $0.5$ cycles per episode at $p=0.0038$ under the paired permutation test, while leaving safety and success unchanged. Freezing the GNG lowers cycles to $1.97$ ($-79\%$, $p<10^{-4}$ under the paired permutation test) while preserving safety, indicating that recurrent satisfaction depends strongly on dynamic memory growth. Removing parity priority drops success to $10/30$ and adds $1.43$ safety violations per episode ($p=1.4\!\times\!10^{-8}$ on McNemar), indicating that symbolic progress feedback is the dominant safety driver in this task. The combined ablation MCTS-only has weaker continuous metrics ($2.73$ cycles, $193.6$ TTFC; paired permutation $p<10^{-4}$ for both). \emph{(iii) External-baseline comparison.} Across the seven external-baseline rows, the three continuous-observation MCTS solvers with progressive widening (POMCPOW, PFT-DPW, CPOMDP) cluster at $3.4$ to $3.8$ cycles and $24/30$ to $27/30$ success, between MCTS-only POMCP-style ($2.73$) and Full REBA ($9.23$). CPOMDP achieves the lowest safety-violation rate among the unconstrained-cycles baselines, at $0.20$ per episode, reflecting its explicit cost-constraint formulation; its per-step cost is $13.76$~ms because of the recursive dual-ascent overhead. REFSOLVER sits at $2.8$ cycles and $14/30$ success: action sampling from a fully observed reference policy reduces branching, while visit cardinality and safety remain tied to the offline-prepared reference. POMCP and GPOMCP remain at $1.2$ and $1.8$ cycles with $2/30$ and $4/30$ success and the highest safety-violation rates among the seven baselines, at $2.50$ and $2.10$ per episode, consistent with particle deprivation under continuous observations. Recurrent-PPO sits at $2.5$ cycles and $22/30$ success after $10^{6}$ env-step training; its inference-only per-step cost of $0.19$~ms is the lowest reported runtime, with offline training cost reported separately in footnote~$b$.}

\revRefFour{The per-step costs in this setting, $5.85$ to $26.61$~ms for the MCTS-using methods, are reported on the same wall-clock scale as the $4.25$ to $21.74$~ms values for Static-2D, Dynamic-2D, and 3D-Navigation in Table~\ref{tab:supp_complexity}. Remaining differences reflect the tighter Continuous LightDark observation noise model and the longer evaluation horizon.} The per-component asymptotic decomposition reported in Sec.~\ref{supp:complexity} is unchanged; only the constants in front of each per-component term differ between evaluated settings.

\subsection{\texorpdfstring{\revRefFour{Quantitative Charts for the 1D Corridor Patrol Visual Demonstration}}{Quantitative Charts for the 1D Corridor Patrol Visual Demonstration}}
\label{supp:corridor_ablations}

\revRefFour{The corridor study is reported as a qualitative mechanism check; the main component evidence is the Continuous LightDark ablation in Table~\ref{tab:lightdark_main}. Fig.~\ref{fig:unreveal} shows the trajectory of REBA versus an ungated variant: REBA navigates the high-uncertainty ``dark zone'' producing $0.87$ versus $0.17$ goals per $100$ steps and a belief-localization error of $0.06$ versus $0.15$. Fig.~\ref{fig:reveal_quantity} reports patrol efficiency and mean belief error of full REBA versus a revelation-gate ablation. Fig.~\ref{fig:ablation_mba} reports the success-rate sensitivity to GNG node count (peak at $|Q_{\mathcal{B}}|\!=\!20$) and a \revRefThree{Belief Automaton layer} ablation sweep ($-65$, $-85$, $-95$ percentage points for parity-priority, learned-transition-kernel, and full \revRefThree{Belief Automaton layer} ablation).}

\begin{figure}[htbp!]
    \centering
        \centering
        \includegraphics[width=\linewidth]{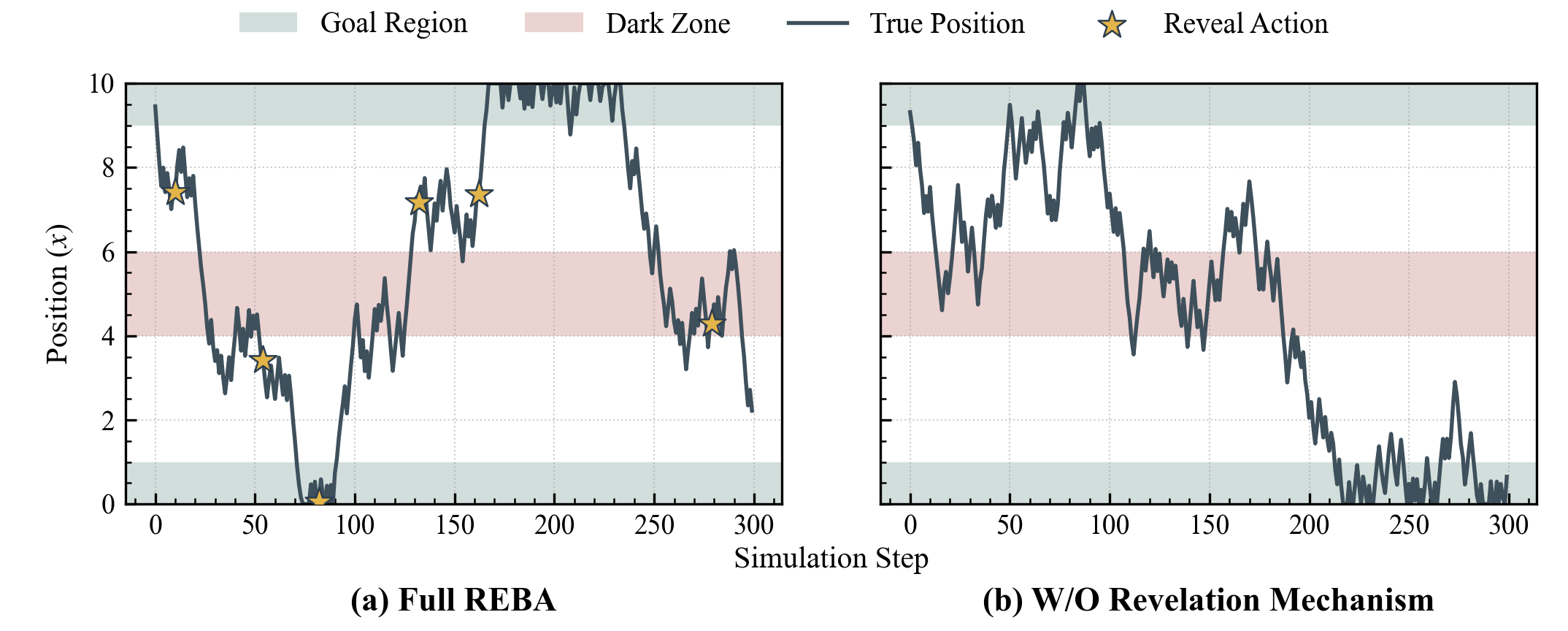}
        \caption{\revRefFour{Trajectory demonstration of REBA and Reveal-and-Abstract ablation variants in the 1D Corridor Patrol environment.}}
        \label{fig:unreveal}
\end{figure}

\begin{figure}[htbp!]
    \centering
        \centering
        \includegraphics[width=\linewidth]{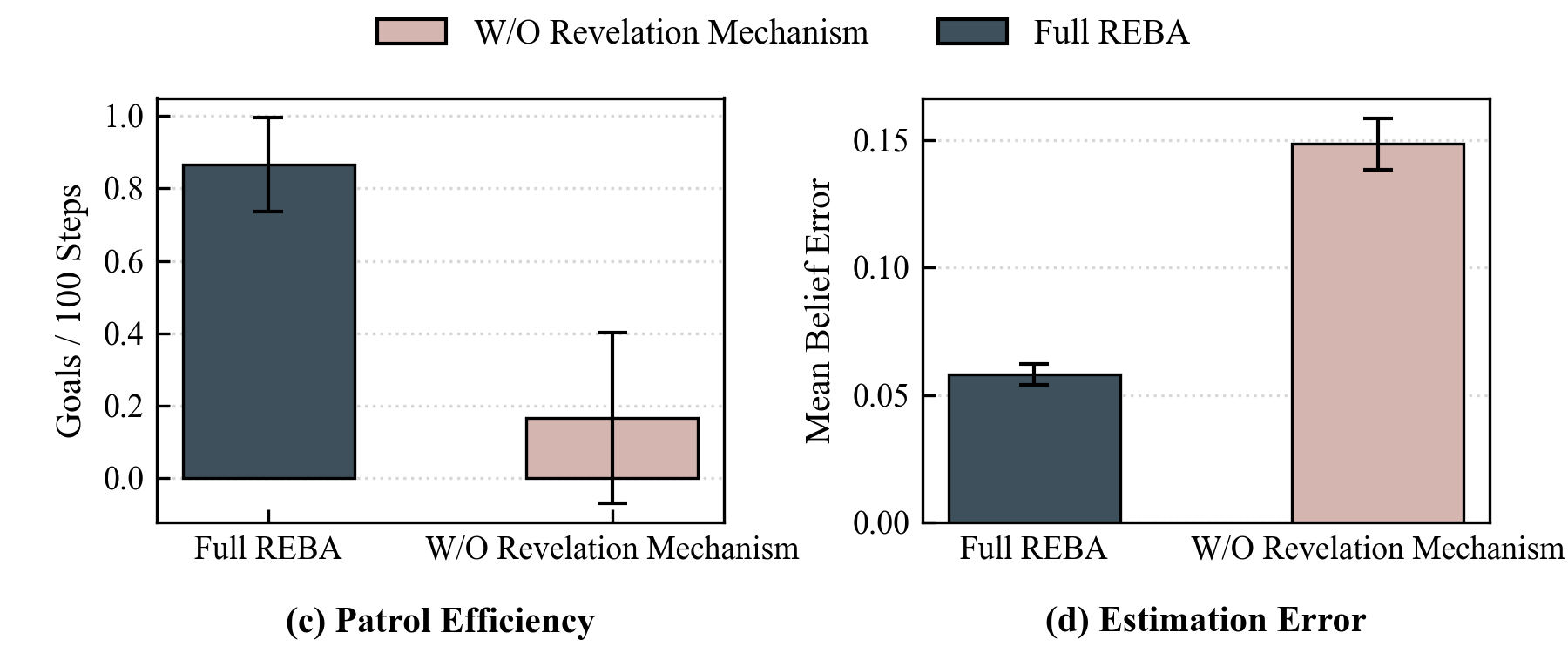}
        \caption{Patrol efficiency (left) and mean belief error (right): full REBA versus revelation-gate ablation.}
        \label{fig:reveal_quantity}
\end{figure}

\begin{figure}[htbp!]
    \centering
        \centering
        \includegraphics[width=\linewidth]{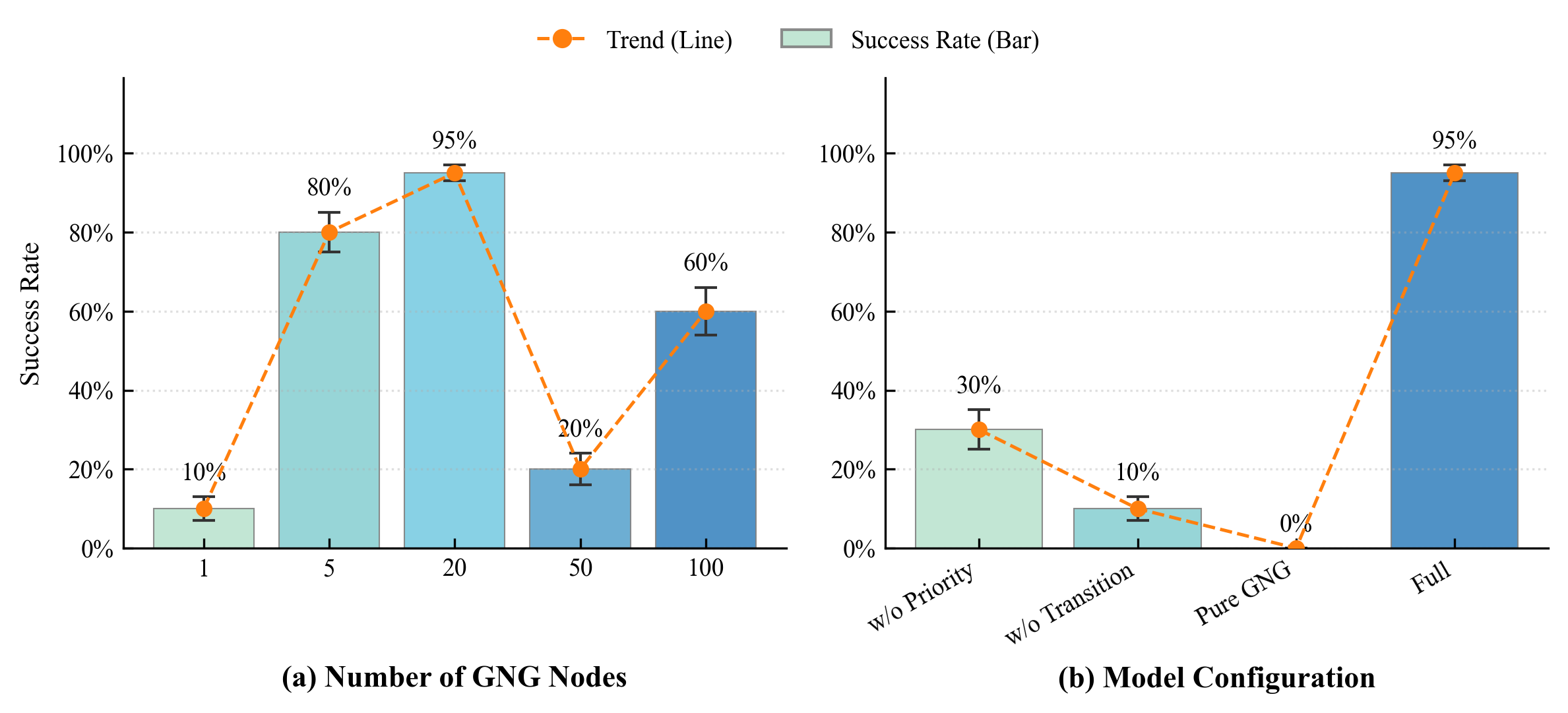}
        \caption{(a) Success rate versus GNG node count $|Q_{\mathcal{B}}|$. (b) Success rate of \revRefThree{Belief Automaton layer} ablations.}
        \label{fig:ablation_mba}
\end{figure}

\bibliographystyle{IEEEtran}
\bibliography{refs}

\end{document}